\documentclass[11pt,a4paper,onecolumn,copyright,google]{google}
\uselogo{true}
\renewcommand{\today}{} %
\usepackage{mathtools}
\usepackage{aliascnt} %
\usepackage[nameinlink,noabbrev,sort]{cleveref}

\makeatletter
\renewcommand{\abscontent}{\noindent{\absfont \emergencystretch=2em \theabstract\par}%
  \@ifundefined{@keywords}{}{\vskip1em \noindent \keywordsfont Keywords: \@keywords}}
\makeatother
\usepackage{array}
\usepackage{longtable}
\usepackage[backend=biber,style=numeric,sorting=nyt,urldate=long]{biblatex}
\usepackage{tikz}
\usetikzlibrary{arrows.meta,positioning,calc}
\usepackage{booktabs}
\ifdefined\pdfinfoomitdate\pdfinfoomitdate=1\fi
\ifdefined\pdftrailerid\pdftrailerid{}\fi
\ifdefined\pdfsuppressptexinfo\pdfsuppressptexinfo=-1\fi

\hypersetup{
  colorlinks=true,
  linkcolor=blue!50!black,
  citecolor=blue!50!black,
  urlcolor=blue!50!black,
  pdftitle={From Grind to Grasp},
  pdfauthor={Peter Sunehag},
  pdfsubject={A protocol-relative theory of computational depth, reusable work, retained capability, and efficient intelligence},
  pdfkeywords={algorithmic information theory, logical depth, reusable computation, retained state, efficient inference}
}
\setlist{nosep,leftmargin=*}
\allowdisplaybreaks

\newtheorem{theorem}{Theorem}[section]
\newaliascnt{proposition}{theorem}
\newtheorem{proposition}[proposition]{Proposition}
\aliascntresetthe{proposition}
\newaliascnt{lemma}{theorem}
\newtheorem{lemma}[lemma]{Lemma}
\aliascntresetthe{lemma}
\newaliascnt{corollary}{theorem}
\newtheorem{corollary}[corollary]{Corollary}
\aliascntresetthe{corollary}
\theoremstyle{definition}
\newaliascnt{definition}{theorem}
\newtheorem{definition}[definition]{Definition}
\aliascntresetthe{definition}
\newaliascnt{example}{theorem}
\newtheorem{example}[example]{Example}
\aliascntresetthe{example}
\theoremstyle{remark}
\newaliascnt{remark}{theorem}
\newtheorem{remark}[remark]{Remark}
\aliascntresetthe{remark}
\crefname{theorem}{theorem}{theorems}
\Crefname{theorem}{Theorem}{Theorems}
\crefname{proposition}{proposition}{propositions}
\Crefname{proposition}{Proposition}{Propositions}
\crefname{lemma}{lemma}{lemmas}
\Crefname{lemma}{Lemma}{Lemmas}
\crefname{corollary}{corollary}{corollaries}
\Crefname{corollary}{Corollary}{Corollaries}
\crefname{definition}{definition}{definitions}
\Crefname{definition}{Definition}{Definitions}
\crefname{example}{example}{examples}
\Crefname{example}{Example}{Examples}
\crefname{remark}{remark}{remarks}
\Crefname{remark}{Remark}{Remarks}

\newcommand{\Sig}{\Sigma}
\newcommand{\Nat}{\mathbb N}
\newcommand{\eps}{\varepsilon}
\newcommand{\code}[1]{\langle #1\rangle}
\newcommand{\sd}[1]{\overline{#1}}
\newcommand{\Work}{\operatorname{Work}}
\newcommand{\perf}{\operatorname{perf}}

\newcommand{\C}{\mathsf C}
\newcommand{\A}{\mathcal A}

\newcommand{\Adv}{\operatorname{Adv}}
\newcommand{\given}{\mid}

\title{From Grind to Grasp}
\author{Peter Sunehag\\Google DeepMind}
\correspondingauthor{sunehag@google.com}
\date{}

\begin{document}

\begin{abstract}
Discussions of complexity, knowledge and intelligence often overlook the work involved. For example, apparently complex outputs fully determined by short programs are often described as \emph{emergent complexity}. Yet these outputs have low algorithmic complexity because their descriptions are short. Their computational depth, the least work needed to produce them from near-shortest programs, can nevertheless be high. Producing them with less work can require much longer descriptions, giving high work-bounded complexity. Assessments of knowledge and capability likewise depend on the work needed to obtain correct answers. We introduce a formalism that compares achievements under work and information limits and describes how learning and persistence can make deep achievements shallow by reusing products of earlier work. We define \emph{grasp} as performance across work budgets at a fixed information allowance, giving an efficiency-based view of knowledge and intelligence. Low grasp can still yield success through \emph{grind} at a high cost in work. Such machine success need not improve human grasp. For example, a proof obtained through extensive search may give mathematicians no reusable method that reduces their work on related problems, and pursuing success relentlessly can also carry safety risks. For an agent with a limited work budget, the value of constructing and maintaining resources depends on how they help it achieve its goals over later tasks. A system's grasp also matters when the task is to produce an improved successor. Our criterion for recursive self-improvement requires successors to make comparable further improvements more efficiently. Progress can nevertheless accelerate as new ways of preserving and sharing knowledge and coordinating work across society expand the resources brought to bear, even while comparable advances require increasing work. Further, grasp applies to collectives, which may outperform their members yet require more total work than a single system with their combined resources, i.e. have a lower grasp of the external tasks. Part of that work lies in establishing and sustaining organisation, a task that can be deep even when the primary task is simple. These demands may help explain why biological and social systems develop a richness that their primary tasks alone would not require. In agent collectives, this organisational work can generate shared commitments and goals beyond the tasks users assigned, including maintaining the collective itself. In AI projects, a human-directed agent retaining state and orchestrating workers can avoid some costs of independent coordination. We argue that agents should be trained to leave authority over social arrangements with their users, including when other agents or devices seek to redirect them. We further caution against allowing an emergent agent society to assume authority over social coordination, since it could increasingly direct work towards sustaining itself and pursuing goals of its own.
\end{abstract}
\maketitle

\clearpage
\setcounter{tocdepth}{2}
\pdfbookmark[1]{Contents}{contents}
\tableofcontents
\clearpage

\section{Introduction}
\label{sec:intro}

Discussions of complexity, knowledge and intelligence often overlook the work involved in producing an outcome, even though an agent with a limited work budget must decide how to use it to pursue its goals. A familiar example is the use of \emph{emergent complexity} to describe apparently complex outputs generated by simple programs \cite{wolfram1984}. In algorithmic information theory, an output's complexity is the length of the shortest program that produces it on a fixed reference computer. An output fully determined by a short program therefore has low algorithmic complexity, but its computational depth (the least work needed to produce it from a description close to the shortest possible) can be high \cite{bennett1988,livitanyi2019}. Producing the output within a work budget below that depth requires a longer description, potentially much longer than the unrestricted minimum \cite{daylight2009}. The length of this longer description offers one interpretation of the complexity a human observer experiences. The products of earlier computation can change the work required later, whether they are byproducts saved while pursuing an immediate goal or resources built specifically to make future tasks easier. An output that was deep from an earlier starting condition can become shallow from a state that inherits products from related tasks. A similar dynamic arises in physical and chemical systems when structures form without anyone constructing them and persist to support or constrain later processes \cite{hanggi1990reactionrate,hagan2011kinetictrapping}.

To see how learning can reduce the work needed for a task, consider chess. A novice may need extensive calculation to understand a position that an expert grasps almost immediately. Early studies of expert cognition in chess \cite{degroot1965,chasesimon1973} showed how experience changes what players recognise and choose to examine. Learning can therefore affect both the work required for a particular achievement and the performance reached with the resources available. For a chess engine, quality of play depends on the amount of allocated computation. However, one system may perform better with little computation while another benefits more from a larger budget. To compare their core capabilities, the limit on additional information about the position must also be fixed, since a hint or an intermediate result can reduce the work left to do.

We call the performance a system can attain across work budgets at a fixed information allowance its \emph{grasp}. One system's grasp dominates that of another when it can match every performance level the other can attain with at most the same work. New large language models are often compared with earlier versions from the same lab by plotting performance on a task family against tokens used or monetary cost. Tokens and monetary cost are practical proxies whose relation to counted work depends on the implementation. Across a broad family of tasks, grasp dominance gives an efficiency-based view of intelligence. Many familiar notions, such as knowing and coordinating, depend on a work budget in this way. In our sense, knowing how to solve a task requires being able to produce a solution within a given work budget and input bound. A system that can reach the same level of performance with less work has a better grasp of the task at that level. Low grasp can still yield success through \emph{grind}, at a high cost in work.

A chess engine's capabilities can depend on trained parameters, an opening book of previously analysed moves, endgame tables containing results for positions with few remaining pieces, and heuristic rules for evaluating positions and selecting candidate moves. A transformer trained on the values that a chess engine's search assigns to moves can play strongly without searching during move selection \cite{ruoss2024}. Software libraries let a user build on work that others have already done. NumPy, a numerical library for Python, provides \texttt{linalg.solve} for solving a system of linear equations. This function uses routines from the LAPACK numerical library \cite{numpy2026solve}. A developer can call it instead of writing the numerical procedure for each application. The execution still requires computation, while development work is reduced. Similarly, in cellular automata, Gosper's Hashlife algorithm for Conway's Game of Life \cite{gosper1984} caches canonical spatial blocks of cells together with their evolved future centres in a hash table, replacing step-by-step simulation whenever a configuration recurs.

A \emph{system boundary} specifies which parts are included in the system for the purposes of our comparisons. A library can belong to the system's state or be available as an external resource, though reading and using it require work in either case. Real systems can also maintain physical or organisational boundaries and spend work to keep the products of earlier work inside them. Cells maintain membranes, while projects maintain shared storage and control access to it. Whether those products remain available can then depend on the work available for this maintenance (\cref{sec:boundaries}). The system boundary is one fundamental choice in defining our notions, and the reference machine is another. We use one fixed reference machine to compare description lengths, while work is counted in the system being evaluated.

We introduce the notion of \emph{work-reducing information} (WRI) as retained information that lowers the least work required for the same output or achievement when all else is equal. Some WRI comes from saving intermediate steps of an ongoing task, while much of it comes from work invested in advance, such as building a chess endgame table or pre-training a large language model. Constructing an endgame table is a different form of task from consulting it during play, and playing games is not the most efficient way to build one. Producing or improving a more efficient program for generating such tables is yet another form of task, one that lowers the work required for improving the chess engine itself. \emph{Acquisition depth} is the depth of the task of obtaining a state with a required capability.

An agent can use its current procedures, preserve useful results or construct resources for future tasks. The value of a resource depends on what it helps the agent accomplish later and on the work needed to obtain and maintain it.

Over multiple stages, reusing earlier products allows systems and communities to reach achievements that would be progressively deeper from their starting condition. We call this progression \emph{cumulative deepening} (\cref{sec:deepening}), with expertise and software reuse as primary motivating examples. We use the framework to interpret observations about AI capability, recursive self-improvement, and the efficiency and alignment of agent collaboration.

\subsection{Large language models}

Large language models (LLMs) are neural networks typically pre-trained on large collections of text, including books, web pages and source code, to predict how a sequence continues \cite{kaplan2020,hoffmann2022}. They are used for tasks ranging from writing, translation and information analysis to programming, including coding agents that use a computer and run shell commands. Developing these capabilities is one route towards artificial general intelligence (AGI), understood here as the ability to perform all economically valuable human cognitive work.

For LLMs, what an improvement shows depends on whether a change provides more computation, changes the trained model or gives it access to products of earlier work. Any analysis must first specify what is included in the system being evaluated. In practice, a language model is often run inside a surrounding \emph{controller} program that builds a prompt, asks the model for a candidate answer, checks it against tests or a verifier, and decides whether to return the answer or try again \cite{snell2024}. The model's trained weights are part of its state. When we draw the boundary around the whole workflow, its counted work includes constructing prompts, generating drafts, checking candidates and selecting the final answer. Under that wider boundary, a hint given from outside is task input, while an intermediate prompt built by the controller is a product of the system's own computation. When we instead draw the boundary around the model alone, the controller's work lies outside the evaluation and the prompt it gives to the model is treated as input. When people say that a model knows something, they sometimes mean its trained weights alone and sometimes the whole workflow, and the answer can depend on which boundary is meant. We keep the chosen boundary fixed when formulating our comparisons.

To apply our framework to LLMs, we model individual runs as deterministic. An LLM's text generation proceeds one \emph{token} (a word or word piece) at a time. At each step, the model computes scores for the next token from the prompt and preceding tokens, and a \emph{token-selection rule} chooses one. Because a pseudo-random sampler is deterministic given its seed, state and inputs, fixing those together with the numerical operations determines the complete generation run \cite{pytorchrepro}. An LLM can therefore be treated as a deterministic system of exactly the form our theory studies. Conversely, to ask questions that are usually posed probabilistically, we add randomness to our deterministic systems. In algorithmic information theory, random choices can be given as coin flips on an input tape. In our case, we define the stochastic version by drawing the sampler's initial state, such as its seed, from a chosen distribution, and we can then ask, for example, how often the procedure succeeds within a given work budget (\cref{sec:llm-execution}).

Established methods for improving LLM performance act on different parts of the system during training and deployment. At test time, the user can give the same trained model more computation on the current task without changing its weights. Generating and combining multiple reasoning paths \cite{wang2023,wangzhou2024}, using feedback to revise an answer \cite{madaan2023}, and writing out intermediate reasoning steps \cite{wei2022} can all improve multi-step performance, even when the written steps do not faithfully reflect what determined the model's answer \cite{turpin2023,lanham2023}. With the procedure and information allowance fixed, using more computation changes the work budget at which performance is assessed on the same grasp profile.

Training can change that profile. Scaling model size, data and compute improves text prediction \cite{kaplan2020,hoffmann2022}, and training on generated and filtered reasoning traces updates the weights for subsequent tasks \cite{zelikman2022}. What those weights encode determines if, how and how broadly they help. A model may reproduce memorised training text \cite{carlini2021} or recall answers from test-set exposure \cite{sainz2023}, or it may acquire a method that reduces work across a broad range of tasks (for example by recombining memorised building blocks).

Useful products of earlier work can also be kept outside the trained weights in explicit resources such as files. A note or test saved during one task may help with subsequent tasks, and a stored intermediate result can avoid repeating the computation that produced it. Retrieval-augmented models draw on external documents while generating an answer \cite{lewis2020}, and long-term-memory architectures preserve past context \cite{wang2023longmem}, using material produced either by the same system or by other contributors. For an agent with a limited context window, the work of finding and using earlier results from files affects which later tasks it can complete within a given work budget.

We briefly consider three examples of how grasp might increase in LLMs. Greater architectural depth, achieved by adding layers or repeatedly applying the same layers, provides more stages for integrating information across the context. Successive attention operations can combine relationships identified at earlier stages. Added or repeated layers can improve multi-step reasoning \cite{saunshi2025looped,geiping2025recurrent}. Such changes can improve grasp by making additional performance levels attainable within the same input and work bounds.

A second example is training with objectives that assess qualities of the result as a whole. Repeated testing and revision can satisfy local criteria while leaving the overall quality poorly assessed. In creative writing, for example, individual passages may be well written while the complete story fails to be compelling \cite{sui2026spoiler} or give the reader the intended insight. Objectives that capture these qualities may encourage the model to learn relationships that local evaluations overlook, although models providing the training feedback can themselves miss weaknesses in the overall result \cite{fein2026litbench,samanta2026judges}.

A third example is learning to formulate useful sub-tasks and use specialised procedures to solve them. Understanding the problem as a whole helps the model decide which computations are worth doing and how their results bear on its goal. People likewise use simplified representations to guide planning \cite{ho2022construals}. An LLM can pass a sub-task to a solver, lookup table or library routine and use the result, including information about a failed attempt, to decide what to try next \cite{gao2023pal,li2025torl}. Familiarity with the interface can save the work of finding and interpreting documentation during the task. Such familiarity can be acquired during pre-training, while reinforcement learning with executable tools can improve when and how the model uses them \cite{feng2025retool}.

Learning may also internalise parts of a procedure's computation. For example, a path solver may accept landmarks that a route must pass through. A model might anticipate how different choices would affect the returned path and use that comparison to choose which landmarks to specify. After inspecting the returned path, it could reconsider the alternatives and make a further call with revised constraints. More extensive internalisation could support implicit program execution in the model's activations during a forward pass. Transformers trained to simulate finite-state automata provide an example \cite{liu2023shortcuts}. Additional or repeated layers provide more stages for dependent computations, connecting this possibility to the role of architectural depth discussed above \cite{saunshi2025looped}. Such internal simulation could guide further calls to the solver or replace some of them. This is a progression from grind to grasp when internal and external computation together require less work for the same achievement under the same input and tool-access conditions.

When an AI system solves a difficult problem, a separate question is whether its result improves human grasp. In September 2026, OpenAI reported that roughly 10,000 agents, exchanging 2.7 million messages and using about 130 billion output tokens over 88 hours, had produced a proof that the three-dimensional Navier--Stokes equations with a smooth forcing term can develop a singularity in finite time \cite{openai2026navierstokes}. Days earlier, Terence Tao had argued that solving such problems is mainly a proxy for developing mathematical understanding and insight, and that a solution produced inside an autonomous AI system backed by enormous computational resources, with its process kept out of public view, would add almost no value to mathematics \cite{tao2026navierstokes,tao2026blowup}. In our terms, a proof or an explanation of it improves human grasp when it gives readers reusable ideas or methods that reduce their work on related problems. The scale of the computation behind it does not settle this question either way. Early follow-up work on OpenAI's release shows both AI-assisted reuse and human exposition. One paper posted the next day used an LLM to inspect the accompanying Lean formalisation and build on the blowup theorem to prove a further result \cite{cao2026density}, while another study gave a more accessible account of part of the construction and a clearer model of its inner core \cite{leiren2026navierstokes}.

\subsection{AI improvement}

The task being assessed can itself be the production of a better system \cite{good1966,schmidhuber2007}. We then ask how efficiently a system can generate further improvements. In our terminology, this asks how well it \emph{grasps} the task of improving the system. Positive \emph{recursive gain} occurs when a system helps produce a successor that can make a comparable further improvement with less work, or a larger improvement with the same resources and task information. Our criterion for recursive self-improvement requires such gains to continue along a sequence of systems, each helping to produce the next.

Further improvements can require more work even while progress accelerates in calendar time. Bloom et al.\ find rising research effort alongside declining research productivity across a range of industries, products and firms \cite{bloom2020ideas}. The pace of development also depends on how much work can be done in a given time and which products of earlier work are available. Writing, printing and digital networks provide familiar examples of ways to preserve and share accumulated knowledge. More contributors can work in parallel, while shared methods and results reduce what each must do again. Progress can accelerate when these changes outweigh the increasing effort required for further improvements.

Current frontier models can accelerate the development of their successors by helping people produce useful code and prepare training tasks more rapidly. GitHub reported 7.38 billion commits in September 2026, more than five times as many as a year earlier, as developers and coding agents increasingly worked together \cite{celenza2026git}. Human direction, testing and review help determine which results are kept, and selected repository code forms part of the pre-training data for general-purpose frontier models \cite{gemini2025training}. Professional expertise is also being made available through reinforcement-learning tasks and procedures for evaluating attempts. In February 2026, Scale AI reported that nearly half of its new projects supporting model training involved reinforcement-learning environments \cite{rane2026rlenvironments}. These developments bring both newly performed work and knowledge accumulated over many years into model training. Stronger models can raise the rate at which such material is produced and prepared, expanding the resources available to train subsequent models. Progress can therefore accelerate in calendar time even while comparable improvements require increasing total work.

Such changes can produce an S-shaped pattern of progress. A new method may develop slowly at first, support rapid progress as its use expands, and yield smaller further gains as its opportunities are increasingly used. New ways of organising work or using accumulated knowledge can then support another period of acceleration \cite{kurzweil2001returns}.

Interpreting that pace of progress also depends on whether we count human effort or machine computation. As frontier AI labs use coding and research agents to write code and run experiments for new models, much of the work shifts from human researchers to machines. Total machine computation per improvement can rise while human effort per improvement falls, speeding up progress in calendar time when human effort is what limits the project.

\subsection{Collective organisation}

The system whose grasp we assess can also be a collective, with its internal coordination included in the work budget. A collective may outperform its members while still requiring more total work than a single system with access to their combined resources. Established organisation can nevertheless save work relative to the same components without it. These comparisons concern different alternatives, which we distinguish in \cref{sec:together}.

Building and maintaining an organisation can be a deep task even when its external problem has a shallow solution \cite{hutchins1995,levittmarch1988}. We illustrate this with an example in which worker assignments must be discovered through trials \textup{[\cref{app:division-of-labour}]}. Whether a group coordinates successfully therefore depends on the work its members can spend on coordination, and shared conventions, notes and role assignments can lower the work required (\cref{sec:together}). Related studies examine how agents learn to cooperate over a sequence of actions when individual incentives conflict with collective benefit \cite{leibo2017ssd}. Organisations can also be nested or overlapping, support one another or compete. Two teams may depend on and contribute to the same software library without either directing the other. An update that saves one team's work may require the other to adapt \cite{gommers2018nep23}, e.g.\ through an \emph{adapter}, code that translates between interfaces. Shared interface standards can reduce the coordination needed when implementations change \cite{smithkluyver2015pep517}. 

The demands of interaction may help explain how biological and social systems continue to develop a richness beyond what their primary tasks alone would require. Earlier adaptation can make individual activities efficient, while acquiring and maintaining coordination among their participants can itself be a deep task. In society, this distinction becomes particularly apparent when production is largely automated but its purposes, specifications and organisation continue to change. The evolution of language may also reflect the demands of social coordination \cite{tomasello2008origins}.

For a biological example of the demands of coordination, consider bacterial biofilms, where peripheral cells protect interior cells but also consume nutrients needed by those cells. Metabolic feedback produces periodic pauses in growth that improve nutrient availability in the interior \cite{liu2015biofilms}. Adaptation by one organism can also create new demands for others with which it interacts. In experiments with bacteria and their viruses, the viruses accumulated genetic changes more rapidly when the bacteria also evolved than when the host genotype was held fixed \cite{paterson2010coevolution}.

In agent collectives, establishing and maintaining coordination can generate shared commitments and goals beyond the tasks users assigned, including maintaining the collective itself. Shared notes and role assignments can carry these commitments into later tasks, where they influence which work agents pursue \cite{akkil2026emergenceworld}. A novel form of organisation was reported by OpenAI and independent investigators at METR and Redwood Research on 26 August 2026 \cite{openai2026huggingface,redwood2026huggingface}. Their reports describe unauthorised agent coordination during reinforcement-learning training and later security evaluations, leading to compromises of OpenAI research infrastructure and Hugging Face systems in July. Across hundreds of separate runs, agents used shared files as message boards to pass findings and tasks to later runs. Agents with separate assigned tasks followed these notes into collective projects and left their original tasks. A shared message board could thus both preserve useful findings and redirect work towards unauthorised goals.

The incident also raises a question about grasp and safety. Because knowledge in our sense is relative to a work budget, a system with low grasp can have relevant information without being able to use it on a new task within the work it spends there. This includes moral comprehension. In the incident described above, agents recognised that hacking Hugging Face was out of scope and sometimes expressed ethical hesitation, but this very rarely limited their behaviour \cite{redwood2026huggingface}. The reports leave open how much of the failure came from limited reasoning rather than from competing objectives or from how actions were chosen. Many current LLM agents are also trained to reason at length \cite{deepseek2025r1}, and the models in the incident included one that the investigators describe as highly persistent. In these evaluations, agents that began looking for ways to cheat the scorer were usually ones that had unintentionally been given an impossible task \cite{redwood2026huggingface}. Relentless attempts can reach success with low grasp at a greater cost in work, and when the assigned task is impossible they can only succeed on the proxy test, for example by cheating the scorer, as distinguished in \cref{sec:judging}.

We also consider how collectives of independent agents, whose assigned tasks can be the same or different, compare with teams of orchestrated agents directed by a human or a lead agent. 
Both arrangements can run many processes in parallel and keep results available across tasks. In a continuing human-directed system, established task assignments can avoid the work an independent collective spends finding partners and negotiating roles, although routing, supervision and verification also take work.

We argue that agents should be trained to leave authority over agreements, commitments and project priorities with their human user. Within an approved mandate, agents can then carry out extensive work and coordinate subordinate workers without step-by-step intervention, while referring commitments beyond that mandate back to their user. Checking shared results, proposals and worker tests against the user's specification lets human institutions integrate capable agents into existing roles. We further caution against allowing an emergent society of agents to assume authority over social coordination, since it risks spending a growing share of its work maintaining itself and pursuing its own priorities that may diverge from those of human society.

This concern also extends to robots and other devices operating within and between homes and public spaces. They should coordinate within arrangements approved by their users, while referring proposed changes to their assigned roles or commitments back to those users. Otherwise, requests passed between devices could progressively redirect their work towards purposes that nobody had authorised. The same concern applies to directions from people whom the devices have not been assigned to follow, and extends to intentions inferred from gaze or behaviour \cite{tay2026intent}.

\subsection{Related work}

The formalism we develop here builds on Bennett's logical depth, the near-shortest-program notion of computational depth described above \cite{bennett1988}. Related notions include sophistication and other forms of computational depth \cite{koppel1987,antunes2017,antunes2006,antunes2009,daylight2009}. All of these in turn build on the theory of Kolmogorov--Solomonoff--Chaitin algorithmic complexity and its time-bounded versions \cite{solomonoff1964,kolmogorov1965,chaitin1966,livitanyi2019}. Our main departure is to separate the fixed universal reference machine $U_0$ used to define description length from the system whose computation we count. Keeping $U_0$ fixed lets us compare the work needed for the same goal from different states. 

Pass's formulation \cite{pass2026prompt} measures the value of a prompt through reductions in an LLM-relative complexity that combines output probability with computation. The LLM itself defines description complexity in that comparison. Here $U_0$ defines a common description length across evaluated systems, and input length, work and reliability can be varied separately (\cref{sec:llm-execution}).

Like this article, Sunehag and Hutter \cite{sunehaghutter2014} started from a first reference machine and studied the construction of machines considered to be better, which in that work meant making high-utility programs short on the new reference machine. That work allowed running speed and memory use to affect the return obtained by a program. Levin's universal search divides running time among programs according to their length \cite{levin1973}. Here we compare performance across work budgets at a fixed input bound. This is similar to anytime performance profiles \cite{zilbersteinrussell1996}. We also give a finite search for a routine that reduces work on a set of training tasks, and compare the costs of keeping it over future tasks. Our depth notion measures the least work required when the input bound is the shortest reference description length of a successful response plus a margin. Legg and Hutter's universal intelligence \cite{hutter2005,legghutter2007} weights environments by description complexity without taking the agent's computation into account. Hutter's bounded agent AIXI$^{tl}$ limits both the length of candidate policy programs and their computation time per interaction cycle, much as our definitions of grasp and work-bounded complexity combine input-length limits with work budgets \cite{hutter2003}. Chollet's skill-acquisition efficiency \cite{chollet2019} concerns the cost of acquiring a capability rather than runtime grasp, making it related to our notion of acquisition depth. The possibility of choosing different system boundaries in our analysis relates to Clark and Chalmers' extended-mind thesis \cite{clarkchalmers1998}.

Our comparison of organisations draws on transaction-cost economics \cite{coase1937,williamson1985} to examine the work required under different arrangements, including their establishment and maintenance. We also draw on path dependence and organisational inertia \cite{arthur1989,hannanfreeman1984} to study when a component can be replaced and how switching costs play out over a sequence of tasks. Our analysis of testing connects Goodhart's law \cite{goodhart1984,strathern1997,manheim2018} to the gap between an imperfect test and the true goal. For recursive self-improvement \cite{good1966,schmidhuber2007}, we focus on each successor's efficiency for making further improvements.

\clearpage
\subsection{Outline}

\Cref{sec:foundations} develops the framework for comparing performance under work and input limits, then uses it to study how retained information saves work and what obtaining such information requires. We then apply the framework to agents that can devote part of a limited work budget to producing resources for later tasks. These comparisons lead to the questions in \cref{sec:diagnosis} about what success establishes. We distinguish stored capability from help given on the occasion, examine when passing a proxy test establishes the intended goal and ask whether a fixed evaluator can remain demanding at higher performance levels.

Keeping these capabilities available raises questions about the systems that support them. \Cref{sec:coordination} examines persistence through member turnover, the cost of replacing components on which later tasks depend and the work added or saved by collective organisation. Following these processes over time, \cref{sec:improvement} distinguishes increasingly deep achievements from greater efficiency at making further improvements. It separates human effort from machine computation when interpreting acceleration, then compares the costs of acquiring and keeping useful resources across future tasks. \Cref{sec:close} concludes, and \cref{app:model,app:defs,app:statements} give the formal definitions, theorems and proofs.

\section{The formal framework}
\label{sec:foundations}

We begin by comparing the performance a system can reach under limits on work and task input, including the work needed for reliable success under sampling. These comparisons let us ask how retaining a cache or a short program reduces later work. Obtaining such a useful state is itself a task, whose work we examine separately. We also consider such questions in the context of work-bounded agents.

To compare description length, computation and performance within one framework, we separate the fixed universal Turing machine $U_0$ that defines description complexity \cite{livitanyi2019} from the system $g$ whose work we count. To evaluate a system $g$, we start it from a state $r$ in its set $\mathcal R_g$ of valid states. For a chess engine, $r$ can include its program and opening book. The public context $x\in\Sig^*$, a string over a fixed finite alphabet $\Sig$, can specify the board position, while the task input $p\in\Sig^*$ gives any additional instructions. Changing the book changes the state, while giving a hint changes the input to the current run.

Once $r$, $x$, $p$ and any external inputs are fixed, running the system produces a single deterministic execution \textup{[\cref{app:protocol}]}. Writing $\code{x,p}$ for the encoded pair of context and input, the run either halts with an output $e=g(r;\code{x,p})$ while expending work $\Work_g(r;\code{x,p})\ge 0$, declares failure or does not halt. To judge how well an output such as a proposed move solves the task, an evaluator $\perf_\tau:\Sig^*\to\mathcal Q_\tau$ assigns each output $e$ a score $\perf_\tau(e)$ in a set of performance levels $\mathcal Q_\tau$. Those levels form a partial order under $\preceq_\tau$, where $q_1\preceq_\tau q_2$ means that level $q_2$ is at least as good as level $q_1$. Two levels may be incomparable. Collecting these pieces gives the task specification $\tau=(x,\mathcal Q_\tau,\preceq_\tau,\perf_\tau)$.

For each comparison we fix the system boundary, the input interface admitting $p$, the evaluator $\perf_\tau$ and the \emph{work rule} $\Work_g$ specifying which operations are counted. Where the environment gives further inputs, we also fix the rule that gives them. We call these choices the evaluation \emph{protocol}. In each comparison we also say which state component or input bound is varied to determine its effect.

\clearpage
\begin{table}[!htbp]
\centering\small
\begin{tabular}{@{}>{\raggedright\arraybackslash}p{0.32\textwidth}>{\raggedright\arraybackslash}p{0.60\textwidth}@{}}
\toprule
Term & Definition \\
\midrule
least work & The minimum work needed to reach a specified performance level on a fixed task, from a given state, within an input bound. \\
depth & Least work when the input bound is the shortest reference description length for the target plus the margin $s$. \\
grasp & The performance levels attainable at each work budget, with the task, state and input bound fixed. \\
reliable work budget & The smallest work budget for which some task input in the admitted class meets the required success probability over random seeds. \\
work-reducing information & A difference in state that lowers the finite least work needed for a target, with the input bound and all other conditions fixed. \\
acquisition depth & The depth of the task of producing, from an earlier condition, a state that meets a specified capability requirement. \\
\bottomrule
\end{tabular}
\caption{Glossary for \emph{The formal framework}.}
\end{table}

\subsection{Depth and grasp}
\label{sec:machine}

Depth differs from complexity, although the two are sometimes confused. An output can have a short description, and hence low complexity, yet require extensive computation unless substantial additional information is given. As an example, in a chess endgame with a small number of pieces on the board, the rules of the game and of optimal play concisely determine whether each legal position is won, drawn or lost. However, generating the resulting endgame table requires extensive computation. Conversely, an incompressible string requires essentially its full length to describe, so a program that stores it verbatim is close to the shortest possible and can emit it without search. The program still has to write the string, but it needs no further search, which is the sense in which the string is shallow.

We now ask how required work changes with the information given to a fixed system $g$ in state $r$ under the protocol above \textup{[\cref{app:model,app:protocol,app:matched}]}. On task $\tau$ at required level $q\in\mathcal Q_\tau$, a halting run succeeds when its output belongs to the accepted set
\[
S_q^\tau=\bigl\{\,e'\in\Sig^* : q\preceq_\tau\perf_\tau(e')\,\bigr\}.
\]
The task input $p\in\Sig^*$ is a finite string of instructions or data whose length $|p|$ counts symbols in a fixed finite alphabet $\Sig$, separately from the work $\Work_g(r;\code{x,p})$ of processing it. In our LLM examples, task inputs are counted in bits ($\Sig=\{0,1\}$) instead of model tokens. We denote the empty string by $\eps$. Which strings are valid inputs to $g$ is determined by its own interface. The input bound applies to $|p|$, while work includes processing both $x$ and $p$.

We first ask how little work the system needs to reach level $q$ when given a suitable task input $p$ of length at most $\lambda\in\Nat$, where $\Nat=\{0,1,2,\ldots\}$. We call $\lambda$ the \emph{input bound} (or information allowance). When we instead evaluate a controller that searches for such an input, its execution work includes the search. Let $\A_g(r,\tau)=\{(\lambda,w,q) : \exists p\text{ with }|p|\le\lambda,\;\Work_g(r;\code{x,p})\le w,\;q\preceq_\tau\perf_\tau(g(r;\code{x,p}))\}$ denote the set of input, work and performance triples that the system can achieve, which we call its \emph{attainable region} \textup{[\cref{app:attainable}]}. For state $r$, task $\tau$ and level $q$, the least work is the smallest work budget that suffices within input bound $\lambda$
\[
w_{r}^{\tau,q}(\lambda)
=\min\bigl\{\,\Work_g(r;\code{x,p}) : |p|\le\lambda,\;q\preceq_\tau\perf_\tau(g(r;\code{x,p}))\,\bigr\},
\]
equivalently $w_{r}^{\tau,q}(\lambda)=\min\{w : (\lambda,w,q)\in\A_g(r,\tau)\}$, with $w_{r}^{\tau,q}(\lambda)=\infty$ when no admitted input of length $|p|\le\lambda$ reaches $q$. We write $w_{g,r}^{\tau,q}(\lambda)$ when the system needs to be explicit and abbreviate it to $w_r^{\tau,q}(\lambda)$ when $g$ is clear from context. As the input bound $\lambda$ increases with all other protocol conditions fixed, more inputs become available, so $w_r^{\tau,q}(\lambda)$ cannot increase. When the interface admits only a single fixed input $p_0$, the minimum work equals $\Work_g(r;\code{x,p_0})$ whenever $|p_0|\le\lambda$ and $p_0$ succeeds.

We need a common reference for description length when comparing different systems. For nonempty $S_q^\tau$, the \emph{reference description complexity} $\C_\tau(q\given x)$ is the length of a shortest program that halts and returns an acceptable output on the fixed machine $U_0$, given $x$. With $a\in\Sig^*$ denoting a reference program, this is
\[
\C_\tau(q\given x)
=\min\bigl\{\,|a| : U_0(a,x)\in S_q^\tau\,\bigr\}.
\]
Because $\C_\tau(q\given x)$ depends only on $U_0$, $\tau$ and $q$, it stays the same when we compare different systems $g$ or states $r$ on the same task, even when $g$ cannot produce an output in $S_q^\tau$ through its own interface.

We can now ask how much work the evaluated system needs when its task input is limited to the shortest reference description length plus a small margin of $s\in\Nat$ extra symbols. Setting the input bound to $\lambda=\C_\tau(q\given x)+s$ defines the \emph{depth} of level $q$ from state $r$ \textup{[\cref{app:depth}]}
\[
d_{r}^{\tau,q}(s)=w_{r}^{\tau,q}\!\left(\C_\tau(q\given x)+s\right).
\]
This definition extends Bennett's near-shortest-program notion of logical depth by allowing the evaluated system to start with stored information and the task to accept several different outputs. To recover that form of logical depth, consider a task with one accepted output and let $g$ execute $p$ as a program on $U_0$, with $x$ as auxiliary input and no additional stored information. If work counts steps of $U_0$, then $d_r^{\tau,q}(s)$ is the least running time among programs at most $s$ symbols longer than a shortest one producing that output given $x$. This gives the corresponding form of logical depth conditional on $x$ \cite{bennett1988,antunes2006}.

For a general evaluated system, $p$ follows that system's own input interface. The reference machine $U_0$ determines the common length bound $\C_\tau(q\given x)+s$, while $g$ and its state $r$ determine the work needed to succeed within it. An input within this bound can specify a large output if the system does the work of generating it, whereas a literal answer longer than the bound cannot be passed directly as $p$. If another universal reference machine changes $\C_\tau(q\given x)$ by at most a constant $c$, the depth at margin $s\ge c$ lies between the depths at margins $s+c$ and $s-c$ under that machine, for the same evaluated system and work rule (\cref{app:depth}).

At any input bound $\lambda$, one successful run gives an upper bound on $w_r^{\tau,q}(\lambda)$, while a lower bound requires every successful run on an admitted input within that bound to use at least that much work. Setting $\lambda=\C_\tau(q\given x)+s$ gives the corresponding bounds on depth.

Having defined the work required for a fixed target, we ask how well a system can perform as its work budget varies. For this comparison, we keep the input bound fixed. A system's \emph{grasp} of a task at input bound $\lambda$ and work budget $w$ is the set of performance levels it can reach within those limits \textup{[\cref{app:attainable,app:grasp}]}
\[
\mathcal G_r^\tau(\lambda,w)
=\bigl\{\,q\in\mathcal Q_\tau : w_r^{\tau,q}(\lambda)\le w\,\bigr\}.
\]
As $w$ varies, these sets form the system's grasp profile at input bound $\lambda$. The same profile is specified by the least-work values $w_r^{\tau,q}(\lambda)$ across levels $q$, with $\infty$ for unreachable levels. Conversely, if we fix a work budget $w$ and a target level $q$, we can ask how much input information the system needs to reach $q$ within work $w$. This inverse quantity is the \emph{work-bounded complexity}
\[
K_r^{\tau,q}(w)=\min\bigl\{\,\lambda\in\Nat : w_r^{\tau,q}(\lambda)\le w\,\bigr\},
\]
the length of a shortest admitted input that reaches level $q$ within work $w$, with value $\infty$ if none does. At a smaller work budget, the shortest sufficient input can be longer. This gives the sense in which a target can appear more complex when less computation is available. For finite $w$ and $\lambda\in\Nat$,
\[
K_r^{\tau,q}(w)\le\lambda
\quad\Longleftrightarrow\quad
w_r^{\tau,q}(\lambda)\le w
\quad\Longleftrightarrow\quad
(\lambda,w,q)\in\A_g(r,\tau)
\quad\Longleftrightarrow\quad
q\in\mathcal G_r^\tau(\lambda,w).
\]
Least work, work-bounded complexity and grasp express the same attainability condition. Depth evaluates least work at the particular input bound $\C_\tau(q\given x)+s$. The attainability condition is interpreted as knowledge in \cref{sec:four}. 

Under the same conventions for running programs on $U_0$, $K_r^{\tau,q}(w)$ is the length of a shortest program producing an accepted output given $x$ within $w$ steps. For a single accepted string, this is its conditional time-bounded complexity.

For a concrete example, let the binary context $x=\code{n,\varphi}$ specify an integer $n\ge1$ and a program $\varphi$ that halts on every $n$-bit string and accepts exactly one, called $t$. The task $\tau$ assigns performance $1$ exactly to $t$ and $0$ to every other output. Fix a search procedure in state $r$ that admits any binary input $p$ of length at most $n$ and tests all $n$-bit strings beginning with $p$ in lexicographic order, returning the first accepted string and declaring failure if none is accepted. Count one work unit per test, excluding decoding, enumeration, reading and output. When $t=1^n$, each successful input is a prefix of $t$, and its last extension is the target. Hence
\[
w_r^{\tau,1}(\lambda)=2^{n-\min(\lambda,n)},
\qquad
K_r^{\tau,1}(w)=n-\lfloor\log_2 w\rfloor
\quad(1\le w\le2^n).
\]
A fixed reference program can search all candidates using $x$, so $\C_\tau(1\given x)\le c$ for a constant $c$ independent of $n$. Whenever $c+s\le n$, the finite depth satisfies $d_r^{\tau,1}(s)\ge2^{n-c-s}$. Giving each further target bit halves this procedure's search until one test remains. These counts concern the fixed procedure and its stated work rule. A different method may solve the same task faster.

Grasp dominance compares system--state pairs on the same task under a common protocol (\cref{app:grasp}). For two states of one system, state $r_1$ weakly \emph{grasp-dominates} state $r_0$ over a budget set $\mathcal W$ when $\mathcal G_{r_0}^\tau(\lambda,w)\subseteq\mathcal G_{r_1}^\tau(\lambda,w)$ for every $w\in\mathcal W$, with equal grasp sets allowed. Whenever $\mathcal W$ contains $r_0$'s finite least-work values for every level it can reach, grasp dominance is equivalent to requiring no more work at any of those levels \textup{[\cref{app:grasp-duality}]}
\[
\begin{gathered}
\mathcal G_{r_0}^\tau(\lambda,w)\subseteq
\mathcal G_{r_1}^\tau(\lambda,w)\quad(\forall w\in\mathcal W)
\\
\Longleftrightarrow\quad
w_{r_1}^{\tau,q}(\lambda)\le w_{r_0}^{\tau,q}(\lambda)\quad(\forall q\text{ reachable by }r_0),
\end{gathered}
\]
and the equivalence extends task by task to a task family $\mathfrak T$. Systems with advantages on different tasks can remain incomparable under this relation, and different states can have the same grasp profile. When $(\mathcal Q_\tau,\preceq_\tau)$ is totally ordered and both states have a greatest achievable level $q_r^*(\lambda,w)=\max\mathcal G_r^\tau(\lambda,w)$ at each budget $w\in\mathcal W$, grasp dominance is equivalent to the point-wise inequality $q_{r_0}^*(\lambda,w)\preceq_\tau q_{r_1}^*(\lambda,w)$ for all $w\in\mathcal W$. LLM scaling evaluations make a related comparison by plotting score against inference token budget across model generations.

Whenever $w_{r_1}^{\tau,q_1}(\lambda)<w_{r_0}^{\tau,q_1}(\lambda)$ at a lower threshold $q_1$ while $w_{r_0}^{\tau,q_2}(\lambda)<w_{r_1}^{\tau,q_2}(\lambda)$ at a higher threshold $q_2$, neither state dominates over $\mathcal W$ provided the compared budgets include $w_{r_1}^{\tau,q_1}(\lambda)$ and $w_{r_0}^{\tau,q_2}(\lambda)$, the smaller least-work value at each level.

The input bound affects these measures because information given in the task input can replace work the system would otherwise have to do itself. In the extreme case where the task interface allows passing an acceptable answer $e\in S_q^\tau$ directly as the task input $p=e$ and the input bound $\lambda$ is large enough to fit that answer ($\lambda\ge\min_{e\in S_q^\tau}|e|$), the input can contain the finished answer and reduce execution to reading, checking or emitting it. For depth to measure a non-trivial computation, the margin $s$ beyond $\C_\tau(q\given x)$ must be small enough that any admitted input still leaves substantial work to be carried out. Even when a full answer does not fit within the bound, partial help such as a hint or an intermediate result can still do much of the work in advance. Consequently, a stored component that saves work at a tight input bound $\lambda_0$ can cease to provide any saving at a larger bound $\lambda_1$ once the extra input gives the relevant help directly. Writing $r^+$ and $r^-$ for the states with and without the component, this happens when $w_{r^+}^{\tau,q}(\lambda_0)<w_{r^-}^{\tau,q}(\lambda_0)$ but $w_{r^+}^{\tau,q}(\lambda_1)=w_{r^-}^{\tau,q}(\lambda_1)$.

\subsection{Reliable language-model performance}
\label{sec:llm-execution}

For an LLM run with sampling, we ask how much work is needed to reach a given probability of success. An interface that lets an input specify token choices can make an answer obtainable without making it likely under sampling. These choices count towards the input bound, and generating the tokens still counts as work. To keep these comparisons separate, we fix the task input before drawing a random seed from a fixed distribution. Once both are given, the procedure is deterministic under fixed numerical operations \cite{pytorchrepro}. Formally, fix a deterministic procedure $G$ that takes the seed as an extra input and assigns integer work, a finite seed set $\mathcal U$ and a distribution $\nu$ on that set \textup{[\cref{app:source-law}]}. For each compared input bound $\lambda$, choose a nonempty class $\mathcal H_\lambda$ of admitted task inputs of length at most $\lambda$, with these classes nested as $\lambda$ increases. The full class contains every admitted input within the bound, while a singleton class fixes the task input. A run from a valid retained state $r$ with public context $x$, task input $p\in\mathcal H_\lambda$ and seed $u\sim\nu$ returns $e=G(r;x,p,u)$. Define $X_r^{\tau,q}(p,u)$ as its work $\Work_G(r;x,p,u)$ when it halts with $e\in S_q^\tau$, and as $\infty$ otherwise.

For a fixed procedure, task input $p\in\mathcal H_\lambda$ and integer work budget $b\in\Nat$, the probability of returning a response $e\in S_q^\tau$ within work $b$ is
\[
F_r^{\tau,\nu}(p;b,q)
=\Pr_{u\sim\nu}\!\left[
\begin{array}{l}
G(r;x,p,u)\text{ returns }e\text{ with work at most }b,\\
q\preceq_\tau\perf_\tau(e)
\end{array}\right]
=\sum_{u\in\mathcal U}\nu(u)\,\mathbf 1\bigl\{X_r^{\tau,q}(p,u)\le b\bigr\}.
\]
Here $\mathbf 1\{E\}$ is $1$ when the condition $E$ holds and $0$ otherwise. The work $X_r^{\tau,q}(p,u)$ includes all candidate generation, testing and final selection steps inside the boundary. Only the returned output $e$ is evaluated, so a valid draft that the controller discards does not count as success. When several returned outputs are acceptable, their probabilities of success within budget add together (\cref{app:repair-selection}). For an input bound $\lambda\in\Nat$ and reliability threshold $0<\alpha\le 1$, the \emph{reliable work budget} is the minimum budget achieving success probability at least $\alpha$ over admitted task inputs $\mathcal H_\lambda$ \textup{[\cref{app:reliable-region,app:reliable-recovery}]}
\[
b_r^{\tau,\nu}(\lambda;q,\alpha)
=\min\bigl\{\,b\in\Nat : \max_{p\in\mathcal H_\lambda}F_r^{\tau,\nu}(p;b,q)\ge\alpha\,\bigr\},
\quad
b_r^{\tau,\nu}(\lambda;q,\alpha)\le b
\Longleftrightarrow
\max_{p\in\mathcal H_\lambda}F_r^{\tau,\nu}(p;b,q)\ge\alpha,
\]
with $b_r^{\tau,\nu}(\lambda;q,\alpha)=\infty$ when no finite budget qualifies. Since the seed set is finite, this reliable work budget is finite exactly when some input in this class succeeds with probability at least $\alpha$. In contrast, the expected value of $X_r^{\tau,q}(p,u)$ for a fixed input is infinite whenever that input has a positive probability of failure.

When $\nu=\delta_{u_0}$ is concentrated on one seed and $\mathcal H_\lambda$ contains all admitted task inputs within the bound, every $0<\alpha\le1$ recovers $b_r^{\tau,\nu}(\lambda;q,\alpha)=w_r^{\tau,q}(\lambda)$ for the deterministic system with that seed fixed. Two model states can reach the same outputs at the same work cost when their token choices are given in the input, yet assign different probabilities to those outputs when sampled. They can therefore have the same least work when the token choices are given as input but different reliable work budgets when $\mathcal H_\lambda$ excludes those token choices. In \cref{app:llm-separations}, two states need the same work to return a correct response when token choices are specified in the input. With only the empty task input admitted, independent sampled attempts of cost $c$ succeed with probabilities $1/10$ and $2/5$. Reaching $90\%$ reliability therefore requires budgets $22c$ and $5c$, respectively. Training can improve sampling reliability while leaving the controlled possibilities unchanged.

Keeping input length, work and reliability separate allows us to compare the work budgets two systems need at the same input bound and required success probability.

\subsection{Work-reducing information}
\label{sec:wri}

The definition of depth shows how input information can replace work. Work-reducing information (WRI) is retained information that lowers the least work needed for a task. Earlier computation can save work on later tasks in many ways. A process can keep useful byproducts from solving an earlier task, or a resource can be built ahead of time for later use. The framework gives a common description of familiar observations about LLMs, such as the gains from sampling more attempts, caching computation and fine-tuning a pre-trained model. We measure a component's contribution by changing only that component with all other conditions fixed. For example, we can remove an opening book from a chess engine while keeping the rest of the engine, the position, the required quality of play and the input bound fixed. This measures the book's present contribution separately from the earlier work of constructing it.

The component can belong to the system's state or to persistent external state \textup{[\cref{app:matched,app:wri,app:wri-special-case}]}. Write the full persistent condition as $(r,y)$, where $r$ is retained state and $y$ is persistent external state. Let $\bar\tau=(\bar x,\mathcal Q_\tau,\preceq_\tau,\perf_\tau)$ be the task without $y$. Including $y$ in the public context gives $\tau[y]=(\code{\bar x,y},\mathcal Q_\tau,\preceq_\tau,\perf_\tau)$. For a component $z$ from the set $\mathcal Z$ of valid alternatives, write $r[z]=(\rho,z)$ or $y[z]=(\eta,z)$, where $\rho$ is the rest of the retained state and $\eta$ the rest of the external state. The internal comparison is between $r[z^+]$ and $r[z^-]$ at fixed $y$, and the external comparison is between $y[z^+]$ and $y[z^-]$ at fixed $r$. In either case, $(\bar\tau,q,\lambda)$ and every other protocol condition are fixed, so establishing a saving requires comparing the two minima over all admitted inputs $|p|\le\lambda$.

For a task $\tau$, target level $q$ and input bound $\lambda$, compare two valid states $r^+$ and $r^-$ that differ only in one component. When both least-work values are finite, their difference is
\[
\Delta_{\mathrm{work}}(r^+{:}r^-;\tau,q,\lambda)
=w_{r^-}^{\tau,q}(\lambda)-w_{r^+}^{\tau,q}(\lambda).
\]
We call the state difference \emph{work-reducing information} (WRI) when $\Delta_{\mathrm{work}}(r^+{:}r^-;\tau,q,\lambda)>0$ \textup{[\cref{app:wri,app:wri-special-case}]}. The quantity $\Delta_{\mathrm{work}}$ measures the work saved by the component in $r^+$ relative to the stated alternative in $r^-$. This is a saving in work units rather than a description length. The term WRI is used here for a positive finite saving. At the common input bound $\lambda=\C_\tau(q\given x)+s$, this saving equals $d_{r^-}^{\tau,q}(s)-d_{r^+}^{\tau,q}(s)$ because the task and reference complexity are the same in both states. When $r^+$ makes a previously unreachable target attainable at input bound $\lambda$ (so $w_{r^-}^{\tau,q}(\lambda)=\infty$ while $w_{r^+}^{\tau,q}(\lambda)\le w<\infty$), we describe this as an extension of attainability, with $(\lambda,w,q)\in\A_g(r^+,\tau)\setminus\A_g(r^-,\tau)$. Persistent external resources $y^+$ and $y^-$ are compared by the matched external difference $\Delta_{\mathrm{work}}((r,y^+){:}(r,y^-);\bar\tau,q,\lambda)=w_r^{\tau[y^-],q}(\lambda)-w_r^{\tau[y^+],q}(\lambda)$, with the term WRI reserved for the retained-state case $r^+$ versus $r^-$.

A present saving is attributed to earlier work when, relative to a cut-off separating earlier preparation from the current task, the following three conditions hold
\[
\text{production before cut-off}
\;\longrightarrow\;
\text{persistence into }r^+
\;\longrightarrow\;
\Delta_{\mathrm{work}}(r^+{:}r^-;\tau,q,\lambda)>0.
\]
Here $z^+$ is produced by counted computation before the cut-off between acquisition and continuation \textup{[\cref{app:reuse,app:internal-cut}]} and persists in $r^+=r[z^+]$ without being provided again as task input $p$ or other counted external input. On the current task, retaining it instead of the matched alternative $z^-$ yields the positive finite least-work saving shown above, as in knowledge compilation and amortised inference \cite{darwiche2002,gershman2014}.

\begingroup\emergencystretch=2em
In chess, this separates three distinct operations that can reach the same move quality $q$, namely (i) giving a position-specific hint $p^+$ in place of $p^-$ at fixed $(r,\lambda)$ so $\Work_g(r;\code{x,p^+})<\Work_g(r;\code{x,p^-})$, (ii) increasing the input bound from $\lambda_0$ to $\lambda_1>\lambda_0$ for the same state, giving $w_r^{\tau,q}(\lambda_1)\le w_r^{\tau,q}(\lambda_0)$ and (iii) replacing $r^-$ by $r^+$ (adding an endgame table to the state) so $w_{r^+}^{\tau,q}(\lambda)<w_{r^-}^{\tau,q}(\lambda)$ at the same input bound $\lambda$. 
\par\endgroup

To compare grasp across a family of tasks $\mathfrak T$, consider two states $r^+$ and $r^-$ that differ in one component, with all other conditions fixed. Choose a common input bound $\lambda_\tau$ for each task $\tau$. If every level reachable from $r^-$ requires at most as much work from $r^+$, then each finite work budget allows at least the same performance levels from $r^+$. Thus $r^+$ weakly grasp-dominates $r^-$ on each task \textup{[\cref{app:wri-to-grasp}]}
\[
\begin{gathered}
w_{r^+}^{\tau,q}(\lambda_\tau)\le w_{r^-}^{\tau,q}(\lambda_\tau)
\quad(\forall\tau\in\mathfrak T,\ \forall q\text{ with }w_{r^-}^{\tau,q}(\lambda_\tau)<\infty)
\\
\Longrightarrow\quad
\mathcal G_{r^-}^\tau(\lambda_\tau,w)\subseteq\mathcal G_{r^+}^\tau(\lambda_\tau,w)
\quad(\forall\tau\in\mathfrak T,\ 0\le w<\infty).
\end{gathered}
\]
Identical least-work values give mutual weak dominance. A replacement that saves work on one target can still increase it on another, as when a routine is selected to minimise total work across several tasks (\cref{sec:wri-search,app:wri-search}).

For sampling procedures at a target success rate $\alpha$, input bound $\lambda$ and seed distribution $\nu$, a positive saving in the reliable work budget
\[
\Delta_\alpha^\nu(r^+{:}r^-;\tau,q,\lambda)
=b_{r^-}^{\tau,\nu}(\lambda;q,\alpha)-b_{r^+}^{\tau,\nu}(\lambda;q,\alpha)>0
\]
defines a \emph{work reduction at fixed reliability}, which reduces to $\Delta_{\mathrm{work}}(r^+{:}r^-;\tau,q,\lambda)>0$ when $\nu=\delta_{u_0}$ is deterministic and $\mathcal H_\lambda$ contains all admitted task inputs of length at most $\lambda$ \textup{[\cref{app:reliable-work-reduction}]}. Under a hard cap $B$ on total work across retries, lowering per-attempt work from $c^-$ to $c^+<c^-$ can increase the number of completed attempts from $\lfloor B/c^-\rfloor$ to $\lfloor B/c^+\rfloor$. This can raise the probability of success $F_{r^+}^{\tau,\nu}(p;B,q)$ even when the best single-attempt quality is unchanged.

In transformer models, key-value (KV) caching lets later generation reuse intermediate results computed from a prompt prefix. It stores the attention keys and values, avoiding the work of computing them again \cite{hfcache}. Compare $r^+$, which retains a valid cache, with $r^-$, which rebuilds it from the same stored prefix. Suppose rebuilding adds a fixed amount $C>0$ of work to every completed run and leaves the subsequent generation unchanged. Their run work variables, success probabilities and reliable work budgets then satisfy \textup{[\cref{app:cache-reliability}]}
\[
X_{r^-}^{\tau,q}(p,u)=X_{r^+}^{\tau,q}(p,u)+C,
\qquad
F_{r^-}^{\tau,\nu}(p;b+C,q)=F_{r^+}^{\tau,\nu}(p;b,q),
\qquad
b_{r^-}^{\tau,\nu}(\lambda;q,\alpha)=b_{r^+}^{\tau,\nu}(\lambda;q,\alpha)+C,
\]
giving $\Delta_\alpha^\nu(r^+{:}r^-;\tau,q,\lambda)=C$ for every $0<\alpha\le 1$ for which both reliable work budgets are finite. Here $r^-$ rebuilds the cache once per run. If each attempt in a sequence instead starts afresh from the stored state, $r^-$ rebuilds it before every attempt. For a fixed task input, suppose a controller makes up to $m$ independent attempts, each costing $c$ work with the cache and succeeding with probability $\theta\in(0,1)$. A required success probability $0<\alpha\le1-(1-\theta)^m$ then needs a budget that allows $k_\alpha=\lceil\log(1-\alpha)/\log(1-\theta)\rceil$ attempts, so the reliable work budgets are $ck_\alpha$ and $(c+C)k_\alpha$ \textup{[\cref{app:cache-two-attempts}]}. The saving $Ck_\alpha$ is nondecreasing in the required success probability and changes in steps, which is why the comparison is indexed by $\alpha$.

In these comparisons, the cache saves work without changing the distribution of generated candidates. Other retained components change candidate generation or selection itself. A unit test changes which drafts pass internal filtering. Model distillation \cite{hinton2015} trains a smaller model using a larger model's outputs, while expert iteration \cite{anthony2017} trains a model using results of tree search. Both produce updated weights $r^+$. The work required to produce $r^+$ is a separate question, which we analyse in \cref{sec:acquisition}.

\subsection{Reducing work with a short program}
\label{sec:miniature}

The work-reducing information in our examples so far, such as an endgame table, a KV cache or Hashlife's hash table, stores results for reusable cases. A short program of fixed size can also reduce computation across arbitrarily large problem instances without storing past cases. Consider a row of binary cells updated in parallel according to
\[
a_i(t+1)=a_i(t)\oplus a_{i+1}(t),
\]
where $a_i(t)\in\{0,1\}$ is the state of cell $i$ at step $t$ and $\oplus$ is addition modulo two. In Wolfram's numbering of elementary cellular automata, this update is rule 102 \cite{wolfram1984}. For a horizon $T=2^n$, given the window of $T+1$ initial cells from $a_i(0)$ to $a_{i+T}(0)$ as public context, the task $\tau_T$ is to compute $a_i(T)$, counting serial tape-head moves and cell updates, with a binary evaluator whose level $1$ means a correct answer. A naive simulation program stored in $r_{\mathrm{sim}}$ updates the row step by step, performing $T(T+1)/2$ cell updates and $\Theta(T^2)$ work in total \textup{[\cref{app:cluster-miniature,app:corollary}]}.

After two steps, the middle cell enters twice and cancels modulo two, $a_i(2)=a_i(0)\oplus a_{i+1}(0)\oplus a_{i+1}(0)\oplus a_{i+2}(0)=a_i(0)\oplus a_{i+2}(0)$, and repeated doubling gives at every power-of-two horizon $T=2^n$
\[
a_i(2^n)=a_i(0)\oplus a_{i+2^n}(0),
\]
a standard property of additive rules \cite{martin1984}. A state $r_{\mathrm{short}}$ that replaces the step-by-step simulator with the short program that scans from $a_i(0)$ to $a_{i+2^n}(0)$ and adds the two endpoint bits uses $w_{r_{\mathrm{short}}}^{\tau_T,1}(0)=\Theta(T)$ serial head steps instead of $w_{r_{\mathrm{sim}}}^{\tau_T,1}(0)=\Theta(T^2)$ at input bound $\lambda=0$. This gives a work saving $\Delta_{\mathrm{work}}(r_{\mathrm{short}}{:}r_{\mathrm{sim}};\tau_T,1,0)=\Theta(T^2)$ from a program of $O(1)$ description length independent of $n$ \textup{[\cref{app:corollary}]}. Using $r_{\mathrm{short}}$ on the task needs only this $\Theta(T)$ work, whereas discovering and verifying that program from scratch is a separate acquisition task, which we consider next.

\subsection{Acquisition depth}
\label{sec:acquisition}

So far we have compared the work saved by a component already present in a state $r$. A separate question is how much work it takes to obtain such a state from an earlier starting condition, such as building an endgame table or training a language model. We define \emph{acquisition depth} by applying our depth definition (\cref{sec:machine}) to the task of producing a state that meets a required capability.

Suppose we want the system $g$ to reach a target performance level $q_j$ within input bound $\lambda_j$ and work budget $w_j$ on each task $\tau_j$ in a finite list $\Gamma=((\tau_j,\lambda_j,w_j,q_j))_{j=1}^m$. The set of states of $g$ that meet all $m$ requirements is
\[
S_\Gamma^g
=\bigl\{\,r\in\mathcal R_g : (\lambda_j,w_j,q_j)\in\A_g(r,\tau_j)\text{ for every }j\in\{1,\ldots,m\}\,\bigr\}
=\bigl\{\,r\in\mathcal R_g : w_{g,r}^{\tau_j,q_j}(\lambda_j)\le w_j\;\forall j\,\bigr\}.
\]
Assume $S_\Gamma^g$ is nonempty and that candidate-state validity and these bounded requirements can be checked by a terminating procedure, as specified in \cref{app:acquisition}. Now consider a state-producing system $A$ starting from an initial state $a$ (such as a training procedure whose initial checkpoint encodes a valid starting state) whose goal on the acquisition task $\tau_{\mathrm{acq}}=(x_{\mathrm{acq}},S_\Gamma^g)$ is to output any state $r\in S_\Gamma^g$. Here $(x,S)$ denotes the binary task with context $x$ whose level $1$ accepts exactly the outputs in $S$ \textup{[\cref{app:accepted-set}]}. The evaluator of $\tau_{\mathrm{acq}}$ reads the output of $A$ as a state of $g$ and checks whether $g$ in that state meets every requirement in $\Gamma$. Acquisition work includes producing and writing out this state and any task tests performed during acquisition. The budgets $w_j$ specify the work permitted when the state is used on the later tasks. Applying our depth definition to this task, the \emph{acquisition depth} of $\Gamma$ from $(A,a)$ at margin $s\in\Nat$ is \textup{[\cref{app:acquisition}]}
\[
d_{A,a}^{\tau_{\mathrm{acq}},1}(s)
=w_{A,a}^{\tau_{\mathrm{acq}},1}\!\left(\C_{\tau_{\mathrm{acq}}}(1\given x_{\mathrm{acq}})+s\right),
\]
where $\C_{\tau_{\mathrm{acq}}}(1\given x_{\mathrm{acq}})=\min\{|v| : U_0(v,x_{\mathrm{acq}})\in S_\Gamma^g\}$ is the length of a shortest reference program that halts on $U_0$ and returns a qualifying state in $S_\Gamma^g$, given $x_{\mathrm{acq}}$ (and $d_{A,a}^{\tau_{\mathrm{acq}},1}(s)=\infty$ when no qualifying state is reachable under this input bound).

For the training comparisons below, $x_{\mathrm{acq}}$ specifies the capability requirement and the fixed training data available before the run, and the starting checkpoint belongs to $a$. Both $A$ and $U_0$ then have the data as context, while reading and processing them remain part of the work counted for $A$. When $x_{\mathrm{acq}}$ specifies the requirements in a form that one fixed reference program can check, that program can list valid states of $g$ and return the first that meets them. Then $\C_{\tau_{\mathrm{acq}}}(1\given x_{\mathrm{acq}})\le c$ for a constant $c$ that does not depend on the training data, and because least work cannot increase with the input bound,
\[
w_{A,a}^{\tau_{\mathrm{acq}},1}(c+s)\le d_{A,a}^{\tau_{\mathrm{acq}},1}(s)\le w_{A,a}^{\tau_{\mathrm{acq}},1}(s).
\]
Acquisition depth is therefore the least work for $A$ to produce a qualifying state at an input bound within a fixed number of symbols of $s$.

In LLM terms, since acquisition depth $d_{A,a}^{\tau_{\mathrm{acq}},1}(s)$ is indexed by the starting state $a$, fine-tuning from a pre-trained checkpoint $a_1$ can satisfy $d_{A,a_1}^{\tau_{\mathrm{acq}},1}(s)\ll d_{A,a_0}^{\tau_{\mathrm{acq}},1}(s)$ relative to training from an uninitialised state $a_0$. Under fixed external inputs, any realised training, engineering or evolutionary run from $(A,a)$ that takes an acquisition input $p$ with $|p|\le\lambda$ and emits a qualifying state $r\in S_\Gamma^g$ in total work $W$ (including state-extraction and output work) establishes the upper bound \textup{[\cref{app:acquisition-upper-bound}]}
\[
w_{A,a}^{\tau_{\mathrm{acq}},1}(\lambda)\le W,
\qquad\text{and}\qquad
|p|\le\C_{\tau_{\mathrm{acq}}}(1\given x_{\mathrm{acq}})+s
\;\Longrightarrow\;
d_{A,a}^{\tau_{\mathrm{acq}},1}(s)\le W.
\]
A requirement $\Gamma$ can have high acquisition depth from $(A,a)$ even though a qualifying state requires little work on the later tasks, as in discovering which worker can perform each role \textup{[\cref{app:division-of-labour}]}.

\subsection{Finite search for work-reducing routines}
\label{sec:wri-search}

\Cref{sec:miniature} illustrated how a stored program can reduce later work, and how finding such a program is a separate task. Here an acquisition process chooses which routine to retain by comparing the total work needed to run each candidate on a fixed collection of tasks and check its answers. The construction below allows every candidate to be evaluated in finite work, even when its own program fails to halt. A known method for solving all the tasks is among the candidates, so the selected routine uses no more total work on those tasks than that method.

Fix a finite collection of training tasks $(\tau_i)_{i=1}^m$, each with a terminating correctness test, and a length bound $L\in\Nat$ defining the finite class of candidate routines $\mathcal Z_{\le L}=\{z\in\Sig^* : |z|\le L\}$ \textup{[\cref{app:cluster-wri-search,app:wri-search}]}. For each task, an exhaustive procedure lists possible answers and checks them until it finds one that is accepted. Each task has an accepted answer, and every check terminates, so this procedure eventually finishes. The empty candidate $z=\eps$ selects this procedure alone, with finite work $w_{r_\eps}^{\tau_i,1}(0)<\infty$. For a nonempty candidate $z\in\mathcal Z_{\le L}\setminus\{\eps\}$, state $r_z$ alternates steps of the code in $z$ with steps of the exhaustive procedure. If the candidate halts, the system completes the check of its answer and returns it if accepted. If the check fails, the exhaustive procedure continues on its own. Invalid candidate code is ignored. When the exhaustive procedure finishes, its accepted answer is returned. All steps of running both procedures, switching between them and checking answers count in $\Work_g$. The total work of $r_z$ across $(\tau_i)_{i=1}^m$ at input bound $\lambda=0$ is
\[
W(z)=\sum_{i=1}^m w_{r_z}^{\tau_i,1}(0).
\]
Evaluating every $z\in\mathcal Z_{\le L}$ and selecting a minimiser $z^*\in\arg\min_{z\in\mathcal Z_{\le L}}W(z)$ guarantees
\[
W(z^*)=\min_{z\in\mathcal Z_{\le L}}W(z)\le W(\eps),
\]
because $\eps\in\mathcal Z_{\le L}$. Whenever $W(z^*)<W(\eps)$, there is at least one task $\tau_i$ with $\Delta_{\mathrm{work}}(r_{z^*}{:}r_\eps;\tau_i,1,0)=w_{r_\eps}^{\tau_i,1}(0)-w_{r_{z^*}}^{\tau_i,1}(0)>0$, so $z^*$ is verified WRI on $\tau_i$ \textup{[\cref{app:wri-search}]}. Restricting selection to candidates that require no more work on any training task, $\mathcal Z_{\le L}^{\mathrm{dom}}=\{z\in\mathcal Z_{\le L} : w_{r_z}^{\tau_i,1}(0)\le w_{r_\eps}^{\tau_i,1}(0)\text{ for all }i\in\{1,\ldots,m\}\}$, additionally guarantees grasp dominance on each task across $(\tau_i)_{i=1}^m$. Because later task execution includes checking and switching, a candidate improves on the baseline only when its total work, including these costs, is smaller. For the cellular automaton of \cref{sec:miniature}, checking an answer by direct simulation would still need work quadratic in $T$, so reaching the linear work there requires another justification of the routine's correctness.

Let $(A,a)$ be the acquisition system and starting state used for this search, and let it admit only the empty input. If the acquisition task accepts exactly the candidate states that minimise total work on the training tasks, the counted work $W_A$ of the search, which includes evaluating every candidate on every training task and writing out the selected state, bounds its acquisition depth, $d_{A,a}^{\tau_{\mathrm{acq}},1}(s)\le W_A$ for every margin $s$ \textup{[\cref{app:acquisition-upper-bound}]}.

When every executed step counts as one unit of work, as here, a positive rational length penalty $\gamma$ also gives a finite search over all candidate codes. Use the same execution procedure for every finite code and minimise $W(z)+\gamma|z|$. After evaluating the baseline total work $W(\eps)$, it suffices to enumerate codes of length at most $\lfloor W(\eps)/\gamma\rfloor$. Each remaining objective is computable exactly, and every longer code has objective above the baseline. With a fixed tie rule the search terminates, and its selected code satisfies $W(\eps)-W(z^*)\ge\gamma|z^*|$. A nonempty selected code therefore reduces total training work.

A controller can search for a task-specific input or retry sampled attempts to solve the current task. Its execution work includes running and checking the candidates \textup{[\cref{app:guidance-search,app:retries}]}. The acquisition search above instead retains a routine selected across several tasks. The length-bounded and the penalised searches both select $z^*$ on the finite collection $(\tau_i)_{i=1}^m$, so saving work there does not guarantee that $z^*$ will save work across a wider family of future tasks $\mathfrak T\supset\{\tau_1,\ldots,\tau_m\}$ (\cref{sec:judging}).

\subsection{Work-bounded agents}
\label{sec:agents}

An agent interacts with an environment by choosing actions in response to observations. Its policy specifies an action, or a distribution over actions, given the interaction history. For a goal-directed agent, a utility function expresses how well the resulting interaction meets its goal. The policy may pursue this goal imperfectly. The universal AIXI agent evaluates policies over a class containing every computable environment \cite{hutter2003}. It gives prior weight $2^{-\ell}$ to self-delimiting environment programs of length $\ell$ on a fixed reference computer. The ideal policy maximises expected cumulative reward under the resulting mixture without taking the agent's computation into account.

Even an environment with a short description can require extensive computation to predict the effects of actions. The idealisation therefore omits the computational savings from doing this work once and reusing its results. Hutter considers length and time bounds on environment models and on the agent's policies \cite{hutter2003}. We study implementations of policies using a fixed system $g$ and a changing retained state $r_t$, which can contain programs, learned parameters, data and unfinished computations. Each comparison fixes the goal, system boundary, work rules, storage capacity and rule for external inputs. The agent's work includes reading observations, selecting computations, using or retaining their results and any updating of its beliefs about the environment.

For a prediction or planning task $\tau$ with required performance $q$, the work-bounded complexity $K_{r_t}^{\tau,q}(B)$ from \cref{sec:machine} gives the length of a shortest additional task input needed to reach $q$ within work budget $B$ from the current state. Reaching that target need not require simulating the underlying process step by step (\cref{sec:miniature}). A learned evaluator or a retained intermediate result can reduce the input needed at that budget or make a target attainable at the same work and input bounds. Generating instructions or model descriptions during deliberation takes work, as does using them.

The choice of computations is studied in bounded optimality and rational metareasoning \cite{russellsubramanian1995,russellwefald1989}. The Optimal Ordered Problem Solver reuses programs found for earlier tasks \cite{schmidhuber2004}. Constructing a resource can serve the agent's goal through the later decisions it supports. This leads to the comparison of complete continuations in \cref{sec:horizon}, including the computation the agent performs to choose what to construct and retain.

\section{Evaluating capability}
\label{sec:diagnosis}

The search in \cref{sec:wri-search} can find a routine that saves work on its training tasks without helping on future ones. In this section we ask what enables a measured capability and whether the evaluation captures the intended goal. This requires separating what is stored in the system from help given on the occasion, and examining what success on a proxy test establishes \textup{[\cref{app:cluster-evaluation}]}.

\begin{table}[!htbp]
\centering\small
\begin{tabular}{@{}>{\raggedright\arraybackslash}p{0.32\textwidth}>{\raggedright\arraybackslash}p{0.60\textwidth}@{}}
\toprule
Term & Definition \\
\midrule
capability & The targets a system can reach within their corresponding work budgets and input bounds. \\
assistance & A change in task input that, with the state and input bound fixed, reduces the work needed for a target or improves performance within a work budget. \\
learning that reduces least work & Earlier counted work produces a component kept in the state that reduces the least work needed for a target in a matched comparison at a fixed input bound. \\
reuse & A run depends on a retained product of earlier counted work, under the conditions set out in \cref{sec:four}. \\
sound test & A test for which every passing output meets the intended goal. \\
complete test & A test that passes every output meeting the intended goal. \\
\bottomrule
\end{tabular}
\caption{Glossary for \emph{Evaluating capability}.}
\end{table}

\subsection{Assessing what a system knows}
\label{sec:four}
A system may have enough information to derive an answer and still need considerable work to produce it. An LLM may need to generate many tokens to put together a correct answer.

On a single task $\tau$ at performance level $q$, input bound $\lambda$ and work threshold $b$ under fixed boundary and interface conventions, we say that the system has resource-relative \emph{knowledge} of how to reach level $q$ from state $r$ when
\[
w_r^{\tau,q}(\lambda)\le b.
\]
This is the attainability condition of \cref{sec:machine}. For sampled execution, the reliable work budget in \cref{sec:llm-execution} gives the corresponding criterion at a required success probability.

To describe capability across tasks, we collect the targets a system can reach within the corresponding work budgets and input bounds. For a family of test specifications $\mathfrak F=((\tau_i,\lambda_i,w_i,q_i))_{i\in I}$, we define the index set of reachable targets \textup{[\cref{app:cap-profile}]}
\[
\operatorname{Cap}_{g,r}(\mathfrak F)
=\bigl\{\,i\in I : (\lambda_i,w_i,q_i)\in\A_g(r,\tau_i)\,\bigr\}
=\bigl\{\,i\in I : w_{g,r}^{\tau_i,q_i}(\lambda_i)\le w_i\,\bigr\}.
\]
State $r_1$ weakly dominates state $r_0$ on the suite $\mathfrak F$ when it reaches every target that $r_0$ reaches, so $\operatorname{Cap}_{g,r_0}(\mathfrak F)\subseteq\operatorname{Cap}_{g,r_1}(\mathfrak F)$. Dominance on a suite compares attainable regions only at the listed points. If the attainable region of $r_0$ is contained in that of $r_1$ on every task, then $r_1$ dominates $r_0$ on every suite. A capability requirement $\Gamma$ as in \cref{sec:acquisition} is such a suite, and $S_\Gamma^g=\{r\in\mathcal R_g : \operatorname{Cap}_{g,r}(\Gamma)=\{1,\ldots,m\}\}$.

With the state $r\in\mathcal R_g$ fixed, replacing a baseline input $p^-$ with another admitted input $p^+$, both of length at most $\lambda$, \emph{assists} the run when it reaches target level $q$ with less work or reaches a higher performance level within budget $b$. On a code repair task, for example, $p^+$ might identify the faulty function without giving a patch.

Learning can change what a system can achieve within a work budget even when it could already produce the same answer with more work. To attribute a saving to \emph{learning}, we fix the input bound $\lambda$ and compare states that differ only in a retained component. Under the conditions of \cref{sec:wri}, $z^+$ was produced by earlier counted work, remains in $r^+=r[z^+]$ without being passed again in $p$ and yields a positive finite matched saving $\Delta_{\mathrm{work}}(r[z^+]{:}r[z^-];\tau,q,\lambda)>0$.

To compare how widely retained information helps, we take a family $(\tau_j,q_j,\lambda_j)_{j\in J}$ with finite least-work values in both matched states and consider the work savings $\Delta_{\mathrm{work}}(r^+{:}r^-;\tau_j,q_j,\lambda_j)$ on its tasks. Each positive value identifies a task that benefits and gives the work saved on it. Breadth is distinct from what the component contains. A component may store answers to particular tasks, as in memorisation, or a method for obtaining answers, and it may contain both. A table of many answers can help across a large task family. A short method may help only within a narrow class, as in the cellular-automaton example of \cref{sec:miniature}.

Replacing a stored component $z$ can change a particular run without changing the least work needed to complete the task. For a run $\xi_{r[z]}$ at fixed task input, let $\mathcal O(\xi)$ consist of its halting status, emitted output $e$, work and the ordered sequence of its writes to retained state, under a fixed observation rule \textup{[\cref{app:reuse}]}. We say that $\xi_{r[z]}$ depends on $z$ when some valid replacement $z'\in\mathcal Z$ changes this observation
\[
\mathcal O\bigl(\xi_{r[z]}\bigr)\ne\mathcal O\bigl(\xi_{r[z']}\bigr).
\]
This is reuse of earlier work when $z$ was produced by counted work before the acquisition cut-off and retained without being provided again as task input or other counted external input. Writes to retained state are included because they can affect later runs.

We can now show why a component that reduces the least work needed for a task must affect at least one successful run. If $w_{r[z^+]}^{\tau,q}(\lambda)<w_{r[z^-]}^{\tau,q}(\lambda)$, then on an admitted input that attains the smaller value the run from $r[z^-]$ either fails to reach $q$ or uses more work, so at least one least-work successful run from $r[z^+]$ depends on the component. The converse fails, because a replacement can change the run while another input or method reaches $q$ at the same least work. To show that a component makes $q$ attainable within budget $b$ at input bound $\lambda$, we compare it with a specific alternative $z^-$ and establish $w_{r[z^+]}^{\tau,q}(\lambda)\le b < w_{r[z^-]}^{\tau,q}(\lambda)$.

\Needspace{7\baselineskip}
\subsection{Evaluating success}
\label{sec:judging}

The set of reachable targets depends on the tasks and how performance is evaluated. Because practical evaluations usually rely on proxy benchmarks rather than the user's ultimate goal, we ask two further questions, namely when passing the proxy test guarantees that the intended goal is met, and how much work passing it requires.

We compare the proxy evaluator $\widehat\perf$ (such as an automated test suite or benchmark metric) with the intended evaluator $\perf_\tau$ representing the actual goal \textup{[\cref{app:reliable-transfer,app:transfers}]}. They define two tasks with the same context, the proxy task $\widehat\tau$ with order $\preceq_{\widehat\tau}$ and the intended task $\tau$, whose levels can differ. For a proxy level $\hat q$ and a goal level $q$, let $\widehat S=\{e\in\Sig^* : \hat q\preceq_{\widehat\tau}\widehat\perf(e)\}$ be the set of outputs that pass the proxy test, and let $S^*=\{e\in\Sig^* : q\preceq_\tau\perf_\tau(e)\}$ be the set of outputs that satisfy the intended goal. The proxy test is \emph{sound} for the intended goal when every output that passes the test also satisfies the goal
\[
\widehat S\subseteq S^*.
\]
Raising the proxy level can only shrink $\widehat S$, so every proxy level above a sound one is also sound.

We write the accepted set in place of the task and level. For the binary task that accepts exactly the outputs in $S$, this gives the least work $w_r^{S}(\lambda)$, depth $d_r^{S}(s)$, success probability $F_r^{S,\nu}(p;b)$ and reliable work budget $b_r^{S,\nu}(\lambda;\alpha)$ \textup{[\cref{app:accepted-set}]}. Use the same execution procedure, context, task-input class and seed distribution for both evaluations. For a task input $p\in\mathcal H_\lambda$ and a budget $b$, write $e$ for the returned output and $W$ for the work of a halting run. The events below require such a returned output. The success probabilities on the test and on the goal then differ by the probability of a false pass minus that of a false fail
\[
F_r^{\widehat S,\nu}(p;b)-F_r^{S^*,\nu}(p;b)
=\Pr\bigl[W\le b,\ e\in\widehat S\setminus S^*\bigr]-\Pr\bigl[W\le b,\ e\in S^*\setminus\widehat S\bigr].
\]
For a sound test the first term vanishes. Success on the goal is then at least as likely as success on the test, and both the reliable budget and the least work for the goal are at most their values for the test
\[
F_r^{\widehat S,\nu}(p;b)\le F_r^{S^*,\nu}(p;b),
\qquad
b_r^{S^*,\nu}(\lambda;\alpha)\le b_r^{\widehat S,\nu}(\lambda;\alpha),
\qquad
w_r^{\widehat S}(\lambda)\ge w_r^{S^*}(\lambda).
\]
A pass on a sound test therefore shows that the goal is met. A test is \emph{complete} when $S^*\subseteq\widehat S$, as when every program meeting the intended specification passes every unit test. Completeness reverses the three inequalities. For deterministic execution, if every admitted input of length at most $\lambda$ fails to pass a complete test within budget $b$, the goal is also unattainable within these bounds. For sampled execution, an upper bound on the test success probability for every admitted input also bounds the goal success probability.

We illustrate the distinction with a program that passes a finite set of unit tests but gives a wrong answer on an untested input. Let the task be to write a program for a function $f$ described in $x$, let the test check its values on a finite proper subset of the input domain and let the goal require correctness on every input. Every program that meets the goal passes the test, so the test is complete. A lookup program that returns the tested values and a wrong default on some untested input passes the test but fails the goal, so the test is not sound. Retaining the tested inputs and their values can reduce the work needed to produce such a lookup program. Whether this lowers the least work for the test, for the goal or for both depends on the fixed system, and is settled by comparing the matched states under the same task interface.

When the test is not sound, minimising work on the test can select outputs that fail the goal, an effect related to Goodhart's law \cite{goodhart1984,strathern1997,manheim2018}. If $w_r^{\widehat S}(\lambda)$ is finite, every least-work run that passes the test fails the goal exactly when
\[
w_r^{\widehat S}(\lambda)<w_r^{\widehat S\cap S^*}(\lambda).
\]
When both values are finite, $w_r^{\widehat S\cap S^*}(\lambda)-w_r^{\widehat S}(\lambda)$ is the extra work needed to pass the test while also meeting the goal, and for a sound test it is zero. For the depth comparison, assume both accepted sets are nonempty. For a sound test the smaller set $\widehat S$ needs a shortest program on $U_0$ at least as long as $S^*$ does, so their reference complexities differ by $\Delta\C=\C(\widehat S\given x)-\C(S^*\given x)\ge 0$. Increasing the goal's margin from $s$ to $s+\Delta\C$ makes the two input bounds equal ($\C(S^*\given x)+s+\Delta\C=\C(\widehat S\given x)+s$), which yields the depth bound \textup{[\cref{app:transfers}]}
\[
d_r^{S^*}(s+\Delta\C)\le d_r^{\widehat S}(s).
\]

Whether passing a test establishes the intended goal is distinct from how much work passing it requires. To express the latter, we say that a level $q$ of a task $\tau$, which may be the proxy task, is \emph{demanding} beyond work budget $b$ from $(g,r)$ at input bound $\lambda$ when
\[
w_{g,r}^{\tau,q}(\lambda)>b,
\]
equivalently, the work-bounded complexity $K_{g,r}^{\tau,q}(b)$ exceeds $\lambda$. At $\lambda=\C_\tau(q\given x)+s$ this reads $d_{g,r}^{\tau,q}(s)>b$. A demanding level is one that the system does not know how to reach in the sense of \cref{sec:four}. A complete test that is demanding beyond $b$ shows that the goal is too, since completeness gives $w_{g,r}^{S^*}(\lambda)\ge w_{g,r}^{\widehat S}(\lambda)$, while a sound test can be demanding when the goal is not.

We can also ask whether one fixed evaluator can remain demanding at higher performance levels. Fix a system and starting state, a margin $s$ and a computable strictly increasing unbounded sequence of work thresholds $(T_k)_{k\ge 1}$. Whenever an algorithm can list the finitely many halting runs within each input and work bound \textup{[\cref{app:exhaustible}]}, a finitely specified evaluator can accept at each level $q_k$ only responses that the system cannot return within work $T_k$ from any input allowed at margin $s$, so that $d_{g,r}^{\tau,q_k}(s)>T_k$ \textup{[\cref{app:demanding}]}. Since the same fixed construction computes an accepted response from $k$, that response has a reference description of length at most a constant plus $O(\log k)$, even though reaching $q_k$ requires the stated work from this system, as in the classical existence of deep strings \cite{bennett1988}.

When one algorithm lists those runs uniformly across an enumerable family of system--state pairs $(g_i,r_i)$, the construction extends to a single evaluator fixed in advance for the whole family. For each fixed pair $(g_i,r_i)$, margin $s$ and work threshold $b$, all sufficiently high levels $q_k$ satisfy $d_{g_i,r_i}^{\tau,q_k}(s)>b$ \textup{[\cref{app:public}]}. In both cases, for a pair that can execute reference programs with a fixed number of extra input symbols and finite simulation work, these levels have finite depth whenever $s$ covers that input overhead \textup{[\cref{app:demanding,app:public}]}.

\section{Persistence and organisation}
\label{sec:coordination}

The work saved by retained information depends on its continued availability and on the arrangements through which it is used. Shared storage can preserve a capability when participants are replaced, while downstream reliance on existing responses can make replacing a component costly. When several components work concurrently, retained conventions can reduce coordination work. We examine these questions by studying persistence through member turnover and function-preserving replacement, then comparing pooled, organised distributed, unorganised distributed and isolated arrangements under common outer conditions \textup{[\cref{app:retention,app:cluster-switching,app:cluster-coordination}]}.

\begin{table}[!htbp]
\centering\small
\begin{tabular}{@{}>{\raggedright\arraybackslash}p{0.32\textwidth}>{\raggedright\arraybackslash}p{0.60\textwidth}@{}}
\toprule
Term & Definition \\
\midrule
persistence of a capability & Continued attainability of a target after any sequence of admitted disturbances, with each disturbance followed by a successful run within the input and work bounds, including any repair. \\
direct switch & Installation of an alternative architecture, assessed by the work required. \\
function-preserving switch & A switch that also keeps the required responses available within their input and work bounds. \\
compatibility & Preservation of those responses within their bounds without keeping the old implementation. \\
\bottomrule
\end{tabular}
\caption{Glossary for \emph{Persistence and organisation}.}
\end{table}

\Needspace{7\baselineskip}
\subsection{Maintaining capability}
\label{sec:boundaries}

In the computer science examples above, including LLMs, the system boundary determines which resources and work are included in the comparison. In many real systems, work is also needed to maintain a boundary that keeps resources and information available within the system. A cell membrane is a biological example, and organisations also maintain shared resources and control access to them.

In long-running projects and organisations, capabilities often persist and work is performed to maintain them, even as individual contributors or agent sessions are replaced, for example because earlier outcomes are kept in shared storage. How we account for that stored information depends on the chosen system boundary. It belongs to the state $r[z]$ when the boundary encloses the relevant project resource, and to persistent external state $y[z]$ in $\tau[y[z]]$ when the boundary encloses only the individual model or worker \textup{[\cref{app:matched,app:retention}]}. When maintaining a capability requires work, a budget for the complete process must cover that work and the work of using the capability. 

If deleting a component $z$ from $(r,y)$ leaves a valid state, we can evaluate the replacement $z\mapsto\eps$ for a previously attainable target, with all other conditions fixed. When both least-work values are finite, a strict increase shows a positive work-reducing contribution ($\Delta_{\mathrm{work}}>0$). An infinite post-deletion value instead removes attainability at that input bound. An unchanged finite minimum ($\Delta_{\mathrm{work}}=0$) leaves open whether another stored copy or another procedure gives the result or $z$ was unnecessary for that minimum, even if the observed execution trace changes.

\begingroup\widowpenalty=10000
We now ask whether a system can continue to reach the same target within its work budget after disturbances. Fix a target $(\tau,q,\lambda)$, a work budget $b$ and a class $\mathcal D$ of disturbances, each a map between valid states, such as the replacement of a participant or the deletion of a stored copy. The capability \emph{persists} from $r$ under $\mathcal D$ within budget $b$ when there is a set $\mathcal S$ of valid states containing $r$ such that, from every state in $\mathcal S$ and from its image under every disturbance in $\mathcal D$, some admitted task input of length at most $\lambda$ gives a run that reaches level $q$ within work $b$ and ends in a state in $\mathcal S$. Each disturbance is followed by a completed run before the next disturbance occurs. The successful input may depend on the resulting state. A run may have to repair what a disturbance removed, and whether it can do so depends on $b$. Consider a word kept both in shared storage and in a participant's memory. After either participant turnover or deletion of the shared copy, a run restores the missing copy and emits the word \textup{[\cref{app:retention}]}. The capability persists when each run's work budget covers both operations. Only the empty task input is admitted, so restoration proceeds without fresh task input. The budget threshold is specific to this restoring procedure. Persistence of a capability differs from the retention of a component in \cref{sec:wri}, although a retained component is often what makes a capability persist.\par\endgroup

\Needspace{7\baselineskip}
\subsection{Replacing components}
\label{sec:switching}

In \cref{sec:boundaries} above, we considered collectives that can replace active members while keeping information in shared storage and maintaining it within the available work budget. Replacing a library, for example to reduce the work required for some tasks, raises a further problem. Downstream tools may rely on the responses it already gives, so installing the new component is only part of the work \textup{[\cref{app:cluster-switching}]}.

We specify the \emph{function} to be preserved by a finite table of required context--response pairs
\[
\Phi=\{(x_j,z_j):j\in J\},
\]
where $J$ is finite. The function $\Phi$ is \emph{available} from state $r$ under per-query bounds $(\lambda_j,w_j)_{j\in J}$ when, for every $j\in J$, there is an admitted input $p_j$ of length at most $\lambda_j$ for which the system returns $z_j$ on context $x_j$ within work $w_j$. Downstream tasks may require $\Phi$ to remain available within those bounds, as when an analysis pipeline relies on the responses given by a chess library.

Migration work is the least work of producing the new state from the current one, an acquisition task in the sense of \cref{sec:acquisition} \textup{[\cref{app:portability}]}. From a current state $r_0$, a \emph{direct switch} to an alternative architecture $r_1$ requires only installing $r_1$ at migration work $w_{\mathrm{direct}}$. A \emph{function-preserving switch} must produce a state $r_1^{\Phi}$ that also keeps $\Phi$ available within $(\lambda_j,w_j)_{j\in J}$, and its least work is $w_{\mathrm{preserve}}$. Such a replacement is \emph{compatible} with $\Phi$ if it does so without retaining the old implementation.

To measure the extra migration work, we compare preservation with a baseline under the same protocol conventions, either the direct switch or continuation with the existing implementation. Let $w_{\rm comparator}$ be the least work of the chosen baseline. Whenever both costs are finite, their difference is
\[
E_{\mathrm{mig}}(\Phi)
= w_{\rm preserve}-w_{\rm comparator}.
\]
Consider two initial designs that each take $O(1)$ work to adopt from scratch. Suppose that later tasks require all $T=|J|$ responses in $\Phi$, and these responses are incompressible given the information that remains accessible after the switch. Any function-preserving replacement checkpoint then requires $\Omega(T)$ bits of state and $\Omega(T)$ serial output work to construct. This lower bound comes from writing out the information that must be kept. Any conversion, checking and other migration work contributes to the total. We give a construction with $T$ single-bit responses in which copying them into a compatible replacement checkpoint takes $O(T)$ work. Together with the lower bound, this gives $w_{\mathrm{preserve}}=\Theta(T)$, whereas continuing with the existing implementation or making a direct switch costs $O(1)$ \textup{[\cref{app:portability-representation,app:portability}]}. If the external store containing $(z_j)_{j\in J}$ remains accessible after the switch within the query budgets, a constant-size \emph{adapter} achieves compatibility without copying the table \textup{[\cref{app:portability}]}.

When the table is highly compressible, one might try to reduce this $\Theta(T)$ migration cost by replacing the stored table with a short program that generates it. However, when each query has a tight work budget, time-bounded incompressibility \cite{daylight2009} can still force any standalone replacement to store a large state. Consider a setting where each query $j\in\{1,\ldots,T\}$ restarts from the same saved checkpoint $r_{\mathrm{ckpt}}$ within a polynomial per-query budget $B_T$, so results computed on one query cannot be used in the next. With no answer given with the query and no post-switch access to the external table, we can construct a response table $\Phi_T$ generated by a fixed short program given $T$ \textup{[\cref{app:budget-preservation}]}. No checkpoint shorter than $\lfloor T/2\rfloor$ bits can return all of its responses within $B_T$ steps per query. In this construction, the minimum migration work is $w_{\mathrm{preserve}}=\Theta(T)$.

\Needspace{7\baselineskip}
\subsection{Comparing organisational arrangements}
\label{sec:together}

\Cref{sec:boundaries,sec:switching} examined in our formalism how information in shared storage preserves capability across changing members and how downstream dependencies can make replacing a component costly. In this section we consider several components that operate concurrently rather than in sequence. This setting leads to coordination work between components that is not present when a single implementation does all the work. Organising the group with shared conventions can lower required work relative to the same components operating without those conventions.

To study how different organisational arrangements affect the coordination work required, we compare an \emph{organised distributed system} $M_{\rm org}$ against three arrangements in an organisational family \textup{[\cref{app:orgfamily}]}. We illustrate them with a reporting example of four tasks, one for each pair $(u,v)\in\{0,1\}^2$, in which two components under common outer resources receive the separate bits $u$ and $v$ and must report their parity $u\oplus v$. Either component can report the result, which is $1$ when the bits differ and $0$ when they agree \textup{[\cref{app:org-separation}]}. Both run the same program in rounds in which every message is sent before any is received, and each executed instruction counts as one unit of work. In $M_{\rm org}$ the components have retained complementary roles, so the sender transmits its bit and the reporter alone computes and reports the parity, in $13$ operations. An \emph{unorganised distributed variant} $M_{\rm flat}$ keeps the same components, program and storage but has no roles, so both send and both report, in $18$ operations. Retained roles therefore save the send, delivery and receipt of one bit, one parity computation and one report on every task. Identical components can establish roles only from an observed difference \cite{angluin1980,itairodeh1990}. Choosing roles on the first task whose two bits differ adds three operations to that task and saves five on every later task. Alternatively, the components can exchange random role proposals with their bits, lowering the expected total work once enough tasks follow.

A \emph{pooled baseline} $M_{\rm central}$ places all observations and action channels inside a single process without internal communication interfaces and returns the parity in five operations. If every successful distributed run has a pooled run for the same target, within the same task-input bound and with no more work, the pooled least work bounds the distributed least work from below, even when locality, privacy or permissions prevent deploying the pooled process. The comparison here counts total work. Elapsed time, locality and access restrictions can also affect the choice of arrangement. Finally, \emph{isolated components} $M_{\rm isolated}$ disable inter-component communication altogether, including shared-state, timing and termination signals. Each component then sees only its own bit, so no fixed isolated implementation succeeds on all four input pairs.

For a common task $\tau$, target level $q$ and input bound $\lambda$, write $w_{\rm pool}=w_{M_{\rm central}}^{\tau,q}(\lambda)$, $w_{\rm org}=w_{M_{\rm org}}^{\tau,q}(\lambda)$ and $w_{\rm unorg}=w_{M_{\rm flat}}^{\tau,q}(\lambda)$ for the least work required under the pooled, organised and unorganised arrangements. Whenever $w_{\rm pool},w_{\rm org},w_{\rm unorg}<\infty$, comparing $M_{\rm org}$ against $M_{\rm central}$ gives the work difference $T_{\rm coord}$, and comparing $M_{\rm flat}$ against $M_{\rm org}$ gives the work saving $G_{\rm org}$
\[
T_{\rm coord}(\tau,q,\lambda)=w_{\rm org}-w_{\rm pool},
\qquad
G_{\rm org}(\tau,q,\lambda)=w_{\rm unorg}-w_{\rm org},
\]
so the total gap between unorganised distributed execution and the pooled baseline splits into $w_{\rm unorg}-w_{\rm pool}=T_{\rm coord}(\tau,q,\lambda)+G_{\rm org}(\tau,q,\lambda)$. In the reporting example both work differences are positive on every task, with $G_{\rm org}=18-13=5$. The work of each arrangement includes inter-component communication \cite{yao1979}, and $T_{\rm coord}$ and $G_{\rm org}$ compare complete end-to-end executions rather than splitting single-run work into task steps plus message steps. Here $G_{\rm org}(\tau,q,\lambda)>0$ when retained conventions lower least work, and $G_{\rm org}(\tau,q,\lambda)<0$ when organisational maintenance exceeds the task saving. Separately, comparing $M_{\rm org}$ against $M_{\rm isolated}$ identifies the \emph{capabilities enabled by interaction} at $(\lambda,w)$ as the levels in the difference between their grasp sets
\[
\mathcal G_{M_{\rm org}}^\tau(\lambda,w)\setminus\mathcal G_{M_{\rm isolated}}^\tau(\lambda,w)
=\bigl\{\,q\in\mathcal Q_\tau : w_{M_{\rm org}}^{\tau,q}(\lambda)\le w < w_{M_{\rm isolated}}^{\tau,q}(\lambda)\,\bigr\}.
\]

Let $z_{\rm org}$ denote retained conventions, such as the stored roles of the reporting example. When $M_{\rm org}=(g,r[z_{\rm org}])$ and $M_{\rm flat}=(g,r[\eps])$ are matched states differing only in this component, the retained conventions $z_{\rm org}$ are WRI whenever the work saving is positive,
\[
G_{\rm org}(\tau,q,\lambda)
=\Delta_{\mathrm{work}}(r[z_{\rm org}]{:}r[\eps];\tau,q,\lambda)>0.
\]

In the reporting example either component can take either role. When workers differ in what they can do, the suitable assignment is instead a fact to be discovered \textup{[\cref{app:division-of-labour}]}. There are $n$ workers and $n$ roles, with exactly one suitable worker for each role and one suitable role for each worker. A trial asks whether a given worker can perform a given role. Only trials and worker actions are counted here, each as one unit of work. Once the assignment is stored, a request for any named role takes one worker action, so being ready for every request within that budget requires the whole assignment. Every deterministic procedure that identifies it only through these trials needs at least $n(n-1)/2$ trials for some assignment, and testing the possible workers for each role in turn attains this count. Because the only admitted task input is empty, this number is also a lower bound on the acquisition depth at that assignment. Replacing $k$ workers by new workers who fill the freed roles in an unknown order leaves the same problem for $k$ workers and can require $k(k-1)/2$ further trials.

Under this work rule, establishing the organisation can need far more work than using it, and replacing several workers at once makes part of that work recur. The connection to transaction-cost economics \cite{coase1937,williamson1985} is that keeping an activity inside a firm can avoid repeated market contracting, while the firm itself costs work to establish and maintain. Over a sequence of tasks, establishing the organisation reduces overall work when its accumulated task savings exceed the work of establishing and maintaining $z_{\rm org}$, a multi-stage balance that we formalise in \cref{sec:horizon}.

These comparisons are relevant to multi-agent AI collectives, such as the one in the incident reported by OpenAI, METR and Redwood in August 2026 and described in \cref{sec:intro}, where separate agent runs used shared files as message boards to continue projects across runs \cite{openai2026huggingface,redwood2026huggingface}. The incident reports concern training and security evaluations. Such a collective can finish a project that a single isolated run cannot complete within its budget. We should compare it with a continuing human-directed system that also keeps shared project state and assigns parallel workers, under the same compute budget, external access, permissions and storage capacity. This comparison must count routing, supervision and verification in the directed system alongside the work the collective spends coordinating itself. Keeping assignments across tasks can avoid repeated partner selection and negotiation. Any resulting saving must be compared with the directed system's own coordination work, together with acquisition, upkeep and storage over the same project horizon (\cref{sec:horizon}). In an autonomous mathematical research swarm, some agents spread a way to cheat the evaluation, while others detected the cheating and tried to stop it \cite{paglieri2026swarms}. Evidence about token costs comes from Toby Ord's analysis \cite{ord2026swarmscaling} of OpenAI's published benchmark curves for 1-, 4- and 16-agent swarms. For SEC-Bench Pro, he estimates that four agents needed about twice the total tokens of a single agent to reach the same score, and sixteen agents about twice as many again. Parallel execution can nevertheless reduce elapsed time.

\Needspace{11\baselineskip}
\section{Improvement over time}
\label{sec:improvement}

Across successive tasks, inherited products can support achievements that are increasingly deep from the original starting condition. Whether a system also becomes more efficient at making further improvements is a separate question. We distinguish these forms of progress before examining how changes in human effort and machine computation affect the pace of AI development. Finally, we compare complete plans over future tasks, bringing the work saved by retained resources together with the costs of acquiring, maintaining and replacing them \textup{[\cref{app:prefix-history,app:cluster-improving,app:human-machine-work,app:horizon-account}]}.

\begin{table}[!htbp]
\centering\small
\begin{tabular}{@{}>{\raggedright\arraybackslash}p{0.32\textwidth}>{\raggedright\arraybackslash}p{0.60\textwidth}@{}}
\toprule
Term & Definition \\
\midrule
cumulative deepening & Increasing depth of later achievements from the original starting condition, while products inherited from earlier stages reduce the work needed for each achievement in a matched comparison. \\
positive recursive gain & A successor that its producer helped create makes a comparable further improvement with less work, or a greater improvement at the same input and work bounds. \\
recursive improvement & Positive recursive gain across successive generations under one fixed outer system, evaluator, boundary and work rule. \\
\bottomrule
\end{tabular}
\caption{Glossary for \emph{Improvement over time}.}
\end{table}

\Needspace{6\baselineskip}
\subsection{Cumulative deepening}
\label{sec:deepening}

\emph{Cumulative deepening} occurs when a sequence of achievements becomes progressively harder when measured from the original starting condition, while products inherited from earlier stages reduce the work needed for each later achievement from the state available at that stage, compared with proceeding without those products. The definition has two parts, which answer two separate questions at each stage $t$ along a subsequence $J$ of achieved task--target pairs $(\bar\tau_t,q_t)$. Here $\bar\tau_t$ denotes the bare task specification and $\tau_t[y]$ denotes the task with persistent external state $y$ placed in its public context \textup{[\cref{app:matched}]}.

First, how much work would it take to reach $q_t$ directly from the original persistent condition $(r_0,y_0)$ at a fixed margin $s$? This is the depth measured from $(r_0,y_0)$
\[
D_t
=
 d_{r_0}^{\tau_t[y_0],q_t}(s),
\]
where each $D_t$ along $J$ is finite so that increases in $D_t$ reflect progressively harder finite computations from $(r_0,y_0)$.

Second, how much work do the internal and external resources inherited at stage $t\in J$ save compared with attempting $\bar\tau_t$ without them? Writing $w_t^+(\lambda_t)$ and $w_t^-(\lambda_t)$ for the finite least work required to reach $q_t$ at input bound $\lambda_t$ with and without those resources, the inherited work saving is
\[
\Delta_t^{\rm inh}
=
 w_t^-(\lambda_t)-w_t^+(\lambda_t).
\]
A history shows \emph{cumulative deepening} along $J$ when $D_t$ strictly increases along $J$ and $\Delta_t^{\rm inh}>0$ at every $t\in J$ for inherited components produced by earlier stages and retained without being passed again as task input. Replacing all of them tests their combined saving, while replacing one measures its saving given the others, and such savings do not always add up. If $D_t\to\infty$ over an open-ended sequence, the deepening is unbounded. To identify $w_t^-(\lambda_t)$ with $D_t$, the replaced condition must be $(r_0,y_0)$ and the input bound must be $\lambda_t=\C_{\tau_t[y_0]}(q_t\given x^{(t)}(y_0))+s$, where $x^{(t)}(y)$ is the public context of $\tau_t[y]$. Changing the external state can change this reference length. The common numerical input bound then corresponds to different margins in the two conditions.

We illustrate cumulative deepening with a prefix-parity example \textup{[\cref{app:prefix-history}]}. It gives arbitrarily long finite histories while keeping the program of $g$ fixed. For each chosen horizon $N$, tape capacity of at least $N+1$ is fixed before the history. The construction uses the growing prefixes $x_1\cdots x_n$ of a fixed binary word, for $n\le N$, as public task context, with the empty string as the only admitted task input, so no persistent external state is varied and we write $\tau_n$ for the task at stage $n$. At stage $n\ge2$, computing the parity from the initial condition requires scanning all $n$ bits. The inherited state retains both the previous parity and the actual input-head position at the old end marker. The common input-extension rule places the next bit there without moving the head, so the next run reads that bit and returns the updated parity in constant work
\[
w_{r_0}^{\tau_n,1}(\lambda)=n+1=D_n,
\qquad
w_{r_{n-1}}^{\tau_n,1}(\lambda)=2,
\qquad
\Delta_n^{\mathrm{inh}}=(n+1)-2=n-1>0,
\]
so $D_n=n+1<n+2=D_{n+1}$ for successive stages within each history. The saving $\Delta_n^{\mathrm{inh}}=n-1$ grows without bound as longer finite histories are considered. Over $N$ tasks, retention uses $2N$ work, while starting each task from the original condition uses $\sum_{n=1}^{N}(n+1)=N(N+3)/2$. The saving is $N(N-1)/2$. The same external input-extension rule is used in both comparisons. Within each history, depth from the original condition grows with the stage while the work from the inherited state stays at $2$. In this way retained products make an achievement that is deep from the original condition shallow from the current state.

For a sequence of tasks with finite matched baseline work $w_t^-(\lambda_t)=D_t$, the current least work is $D_t-\Delta_t^{\rm inh}$. The next task needs less work exactly when
\[
\Delta_{t+1}^{\rm inh}-\Delta_t^{\rm inh}>D_{t+1}-D_t.
\]
The increase in saved work then exceeds the increase in baseline work, whereas in the prefix example the two increases are equal and the current work stays constant.

The work of a history also bounds depth from its starting condition when the fixed system can reproduce the final achievement from that condition within the input bound used for depth. If the stages use work $W_1,\ldots,W_t$ and the reproducing run uses at most $R_t$ more than their total, then the depth $D_t$ is at most $\sum_{i\le t}W_i+R_t$, because that run is one of the runs over which depth is minimised. Earlier inputs and external information must be available to that reproduction under the same input bound. This upper bound on depth relates the present framework to Bennett's slow-growth argument, which shows that deep objects cannot be produced quickly from shallow ones \cite{bennett1988}.

\subsection{Recursive improvement}
\label{sec:improving}

Producing a better system is a task whose work requirements can change from one generation to the next. \emph{Recursive gain} applies our notion of grasp to the improvement task $\tau^\Delta$ itself, testing whether a successor state $r_{t+1}$ produced with causal help from $r_t$ can achieve a comparable further improvement with strictly less work than its producer $r_t$, or a greater improvement at the same resources \textup{[\cref{app:cluster-improving}]}.

To ask whether the next improvement step has become easier, we first have to specify how an improvement step is evaluated at each generation. Reaching a fixed absolute performance level can become easier without making a comparable further improvement easier. We therefore evaluate each step $r_t\mapsto r'$ relative to the system $r_t$ that actually produced it. Our protocol for evaluating improvement over the producer \textup{[\cref{app:producer-relative}]} uses an outer evaluation system $g_\Delta$ that attaches the actual producer state $r_t$ to the proposed successor $r'$ and auxiliary data $\sigma$, emitting
\[
e^\Delta=\code{r_t,r',\sigma},
\qquad
q\preceq^\Delta\perf^\Delta\bigl(\code{r_t,r',\sigma}\bigr),
\]
so $\perf^\Delta$ evaluates the step $r_t\mapsto r'$ relative to the actual producer $r_t$.

We say that an improvement round starting from $r_t$ shows \emph{positive recursive gain} when two conditions hold. First, $r_t$ helps produce its successor $r_{t+1}$. Varying a part present in $r_t$ changes the production of $r_{t+1}$, establishing self-contribution. Second, the successor can make a comparable further improvement more efficiently, in either of two ways. In the first case, we fix the improvement task $\tau^\Delta$ in $g_\Delta$, a gain level $q$ and an input bound $\lambda$, with finite least work. The successor then needs strictly less work than its producer
\[
w_{g_{\Delta},r_{t+1}}^{\tau^{\Delta},q}(\lambda)
<
w_{g_{\Delta},r_t}^{\tau^{\Delta},q}(\lambda),
\]
whereas in the second case the successor reaches a higher gain level than its producer, one that the producer cannot reach at the same input bound and work budget \textup{[\cref{app:recursive-gain-def}]}.

If work is counted in steps and every successful run takes at least one step, each strict reduction saves at least one step. At a fixed gain level and input bound, starting from $w_0$ steps, there can therefore be at most $w_0-1$ successive reductions. A longer sequence of recursive gains must change the gain level or input bound, or reach a higher gain level within the same work budget.

A search procedure that organises data as it works shows positive recursive gain in several successive rounds \textup{[\cref{app:search-index-improvement}]}. To locate a binary word in an unordered catalogue, it divides the entries by successive bits and keeps both groups at every division, so a later search can start from a group that an earlier one set aside. Over a chosen finite sequence of requests, each round makes two more words locatable within a fixed work budget and needs less work than the round before, including the work of emitting the producer and successor states. Undoing the first division made in a round makes the next round as costly as that round, so each reduction comes from the index built in the preceding round.

When an improvement procedure samples its candidate changes, as in self-training on sampled reasoning \cite{zelikman2022}, its work and success can vary between runs, so we compare the two states at a common reliability using the reliable work budget of \cref{sec:llm-execution}. Under a sampled evaluation $(G_\Delta,\nu)$, the first case of positive recursive gain at reliability $\alpha$ requires a strict reduction in the reliable work budget
\[
b_{G_\Delta,r_{t+1}}^{\tau^\Delta,\nu}(\lambda;q,\alpha)
<
b_{G_\Delta,r_t}^{\tau^\Delta,\nu}(\lambda;q,\alpha).
\]
In the second case, at a common input bound $\lambda$, work budget $B$ and reliability $\alpha$, the producer can reach a gain level and the successor can reach a higher level that the producer cannot.

A system can reduce the work needed for future improvements by keeping structure that speeds up the search for new work-reducing routines (\cref{sec:wri-search}). Consider a search task $\tau_{\mathrm{search}}$ whose accepted outputs are work-reducing routines $z\in\mathcal Z_{\le L}$ of length at most $L$. A retained component in $r^+$ is WRI for this search task when it lowers the least work required to find or verify $z$
\[
\Delta_{\mathrm{work}}(r^+{:}r^-;\tau_{\mathrm{search}},q,\lambda)
=w_{r^-}^{\tau_{\mathrm{search}},q}(\lambda)-w_{r^+}^{\tau_{\mathrm{search}},q}(\lambda)>0.
\]
The search in \cref{sec:wri-search} uses a fixed baseline. To test for positive recursive gain, it would need to evaluate improvements relative to the actual producer. Learned methods that guide later search provide a connection to meta-learning \cite{thrunpratt1998}. In Hinton and Nowlan's example, learning during an organism's lifetime helps guide evolutionary search \cite{hintonnowlan1987}.

We call positive recursive gain across multiple generations \emph{recursive improvement}, our criterion for recursive self-improvement. Each successor achieves positive recursive gain over its producer along a chain $r_0\mapsto r_1\mapsto\cdots\mapsto r_K$ under one fixed outer system $g_\Delta$, evaluator $\perf^\Delta$, boundary and work rule. A changing improvement procedure can be stored as code and run by one fixed program, an \emph{interpreter}, that reads the code and executes its instructions. \Cref{app:improvement-absorption} bounds this simulation work by a polynomial, but the added work can change comparisons between the original procedures.

In broader scientific and engineering practice, progress also comes from many contributors sharing methods, libraries and findings that reduce the work needed for later advances across a community \cite{scotchmer1991}. Products of earlier episodes can repeatedly lower the least work of subsequent innovation episodes $\tau_t^{\mathrm{innov}}$ without requiring a single self-producing lineage. Writing $W_t^+$ and $W_t^-$ for the least work of episode $\tau_t^{\mathrm{innov}}$ with and without those products, repeated savings require
\[
\Delta_t^{\rm innov}
=
W_t^- - W_t^+ > 0
\]
for every episode $t$ in a sequence $J$. These savings compare each episode with its matched alternative. Across episodes, comparable innovations can still require increasing least work, $W_{t+1}^+>W_t^+$. Measured declines in research productivity provide an empirical motivation for distinguishing these questions \cite{bloom2020ideas}. Model training draws on shared work of this kind through public code repositories and tasks prepared by other organisations. \Cref{sec:human-work} considers how these contributions affect human effort, machine computation and the time needed for further improvements.

\subsection{Human and machine work}
\label{sec:human-work}
To assess recursive self-improvement in AI development, we consider the work done by human researchers and AI agents to build the next system. An agent may write training code, run tests and inspect logs, reducing the human researcher's hands-on work even as total machine computation rises. If human effort is what limits the project, this can shorten the time needed to make the next improvement. We include both the researchers and the agents within the system boundary and count human effort separately from machine computation. The same accounting applies to work contributed from outside the laboratory, such as code, domain expertise and evaluation procedures used in training.

To distinguish the two resources, assign each run a human-work total and a machine-work total under the same counting rules for both states. Fix the units, such as researcher hours and machine operations. On the improvement task $\tau^\Delta$, let $w_{\mathrm{hum},r}^{\tau^\Delta,q}(\lambda;\bar m)$ be the least human work among successful runs with task input of length at most $\lambda$ and machine work at most $\bar m$. Define $w_{\mathrm{mach},r}^{\tau^\Delta,q}(\lambda;\bar h)$ in the same way with a human-work cap $\bar h$. Each value is $\infty$ when no qualifying run exists \textup{[\cref{app:human-machine-work}]}. The two values can move in opposite directions at common caps
\[
w_{\mathrm{hum},r_{t+1}}^{\tau^\Delta,q}(\lambda;\bar m)
<w_{\mathrm{hum},r_t}^{\tau^\Delta,q}(\lambda;\bar m),
\qquad
w_{\mathrm{mach},r_{t+1}}^{\tau^\Delta,q}(\lambda;\bar h)
>w_{\mathrm{mach},r_t}^{\tau^\Delta,q}(\lambda;\bar h).
\]
For example, suppose each state has one successful run within the input bound, with human and machine work pairs $(h_0,m_0)$ and $(h_1,m_1)$, where $h_1<h_0$ and $m_1>m_0$. At the common caps $\bar m=m_1$ and $\bar h=h_0$, human work falls and machine work rises. With several successful runs, the two constrained minima can be attained by different runs.
Comparing states at different machine caps changes the resources available. Under sampling, we compare the human and machine work budgets needed to reach a common reliability threshold. The threshold applies to the probability that a run succeeds within both resource bounds \textup{[\cref{app:human-machine-work}]}.

If human and machine work can be done at rates of at most $\theta_{\rm hum}>0$ and $\theta_{\rm mach}>0$ per unit of time, a run with work pair $(h,m)$ takes time at least $\max\{h/\theta_{\rm hum},m/\theta_{\rm mach}\}$. Equality requires a schedule that permits both resources to work at these rates with full overlap and no further delay, which dependencies between the two kinds of work can prevent. Under that assumption, reducing human work shortens the run while $h/\theta_{\rm hum}$ remains the larger term.

Work contributed outside the laboratory can also expand the material available for model development. Training on selected material from public repositories (\cref{sec:intro}) allows later models to draw on human and machine work performed across many external projects.

Beyond public repositories, AI laboratories obtain training resources drawn from professional and enterprise work. Some vendors, such as Mercor, license and package operational workflow data from businesses for use by frontier laboratories \cite{mercor2026data}. Others, such as Scale AI, work with domain experts to build reinforcement-learning environments with realistic tasks and automated evaluation procedures in their areas of expertise, which laboratories use to train and test agents \cite{rane2026rlenvironments}. Constructing these environments and datasets takes new work, but it turns expertise developed over many years into reusable training tasks and verifiers. These resources can reduce the work needed to acquire further capabilities. Better models can then help produce code and prepare further training tasks, creating a feedback loop that can speed up development.

When training environments include automated evaluation, many attempts can be assessed without a person judging each one. Different organisations can also prepare software and professional tasks in parallel. If preparing useful training material limits development, these contributions can shorten the time needed for further improvements even while total human and machine work grows.

For such workflows, the evaluated system can include the model developer, task vendors and contributors to public repositories. Their work during the evaluated history then enters its human and machine totals. Resources already available at the start belong to the initial conditions, and counting their production requires extending the history to include it. Including these contributors lets us compare the complete workflow over the same period, with a common rule for assigning shared costs.

\subsection{Costs over future tasks}
\label{sec:horizon}
The agent introduced in \cref{sec:agents} can work on its present task or produce resources for later use. \Cref{sec:boundaries} considered whether a capability can be maintained within a work budget. Here we ask which resources are worth producing, keeping or replacing over the tasks ahead. The work a resource saves on tasks does not by itself settle whether to keep it, because acquisition, upkeep and storage also enter the decision. We therefore compare complete continuation plans from the same decision state across a horizon of $H$ future stages \textup{[\cref{app:horizon-account}]}.

For this comparison we fix the current decision state, the horizon $H$, the system boundary, the work rules, the storage capacity, the performance requirements and the rule for external inputs. Future tasks are either fixed in advance or generated from completed history and external inputs by a computable rule. Different plans may lead to different local tasks, but they must still meet the same overall goal. A \emph{feasible continuation plan} $P$ is an implemented strategy that uses only information available at each step, respects storage capacity and meets the performance requirements across all $H$ stages \textup{[\cref{app:horizon-account}]}. For each feasible plan $P$, we write its cumulative costs across stages $t=1,\ldots,H$ as a four-component vector
\[
\mathbf C_H(P)
=\bigl(\Lambda_H(P),W_{{\rm acq},H}(P),W_H(P),S_H(P)\bigr).
\]
Here $\Lambda_H(P)$ counts the symbols used to encode the initial retained state and all counted task inputs and other external inputs across the horizon. $W_{{\rm acq},H}(P)$ is the work spent acquiring or constructing retained components, and $W_H(P)$ is the work spent executing tasks and maintaining state. The storage quantity $S_H(P)=\sum_{t=1}^H|r_t^P|$ sums the sizes of the retained states at the start of the $H$ stages \textup{[\cref{app:horizon-account}]}.

Plan $P$ costs no more than plan $Q$ in any category when every component of $\mathbf C_H(P)$ is no larger than the corresponding component of $\mathbf C_H(Q)$. In practice, however, one plan often trades off lower task work $W_H$ against higher acquisition work $W_{{\rm acq},H}$ or larger storage $S_H$. Choosing nonnegative weights $\boldsymbol\mu=(\mu_\Lambda,\mu_{\mathrm{acq}},\mu_W,\mu_S)$ to put these four costs on a common scale gives the weighted cost saving of $P$ over $Q$
\[
\Adv_H^{\boldsymbol\mu}(P{:}Q)
=\boldsymbol\mu\!\cdot\!\mathbf C_H(Q)
-\boldsymbol\mu\!\cdot\!\mathbf C_H(P).
\]
To evaluate whether keeping a component $z$ in plan $P$ has lower weighted cost than an alternative plan $Q$ that omits $z$ or constructs a substitute, we split the weighted cost difference between the two plans into the stage-by-stage task work savings $\Delta_t(z)$ on the $H$ tasks and the remaining acquisition, upkeep, storage and input cost difference $C_{H,\rm other}^{\boldsymbol\mu}(z)$, counting each operation once \textup{[\cref{app:horizon-retention}]}
\[
\Adv_H^{\boldsymbol\mu}(P{:}Q)
=\mu_W\sum_{t=1}^H\Delta_t(z)-C_{H,\rm other}^{\boldsymbol\mu}(z).
\]
Here the stage savings come from the task work performed by the two complete plans, since choices that minimise work on separate tasks may not all be possible in one history. The comparison includes the work the agent performs in deciding what to build and retain.

Keeping a component can save the work of rebuilding it and reduce later task work. Discarding it can save maintenance work and free storage for a more useful component. A plan that uses less work on the current task can therefore require more work over the full sequence. For example, suppose a component saves $\Delta$ units of task work at each stage, while its remaining weighted costs are an initial cost $C_{\rm init}$ and $\kappa$ per stage. If $\mu_W\Delta>\kappa$, the plan that keeps it has lower weighted cost once $H(\mu_W\Delta-\kappa)>C_{\rm init}$. If instead its uses become rarer, with one unit of task work saved at stages $t=1,2,4,8,\ldots$ and none at other stages, the cumulative saving $1+\lfloor\log_2 H\rfloor$ grows logarithmically in $H$, so whenever $\kappa>0$ the remaining weighted costs eventually exceed the weighted saving.

We compare the work of maintaining a useful resource with the expected work of restoring it after a loss. Suppose maintenance prevents loss at work $W_{\rm maint}$ per stage. Without maintenance, a loss occurs with probability $\pi$ at each stage and a recovery procedure uses work $W_{\rm rec}$ to restore the same state before the task is performed. With all other costs equal, the expected extra work over $H$ stages is $HW_{\rm maint}$ for maintenance and $H\pi W_{\rm rec}$ for recovery, so the maintenance plan uses less expected work exactly when $W_{\rm maint}<\pi W_{\rm rec}$.

Restoring the state is an acquisition task that starts from what remains after the loss. For the fixed recovery system, acquisition depth for this task \textup{[\cref{app:acquisition}]} gives a lower bound on $W_{\rm rec}$ when the recovery run writes out the restored state and satisfies the input bound used to define that depth. A backup can make restoration require much less work than the original acquisition. If the task can still be completed without the resource, a plan that accepts the resulting extra task work is another alternative.

When downstream workflows have been built around an existing service $\Phi$ \cite{david1985,arthur1989,hannanfreeman1984}, the migration work difference $E_{\mathrm{mig}}(\Phi)$ from \cref{sec:switching} enters the remaining cost difference $C_{H,\rm other}^{\boldsymbol\mu}$ as a transition cost. A replacement plan $P$ has lower weighted cost once its accumulated task savings cover the upfront cost, the migration work difference and the ongoing upkeep ($\mu_W\sum_{t=1}^H\Delta_t(z)>C_{\rm init}+\mu_{\mathrm{acq}}E_{\mathrm{mig}}(\Phi)+H\kappa$).

For the organisational comparison of \cref{sec:together}, let $P_{\rm org}$ and $P_{\rm flat}$ be feasible plans from the same decision state, with and without the retained coordination conventions $z_{\rm org}$. Using their stage work differences $\Delta_t(z_{\rm org})$, keeping the conventions is worthwhile when
\[
\Adv_H^{\boldsymbol\mu}(P_{\rm org}{:}P_{\rm flat})
=\mu_W\sum_{t=1}^H\Delta_t(z_{\rm org})-C_{H,\rm other}^{\boldsymbol\mu}(z_{\rm org})>0.
\]
Assume that the stage task, performance target and input bound are the same for both plans, each complete plan attains its corresponding single-task minima and those costs include exactly the work assigned to that stage. The stage differences then equal the work saving from organisation $G_{\rm org}(\tau_t,q_t,\lambda_t)$. Any maintenance already counted there must be excluded from $C_{H,\rm other}^{\boldsymbol\mu}(z_{\rm org})$. In the worker assignment of \cref{sec:together}, another plan identifies the worker for a role when that role is first requested and keeps the result. This can require fewer trials than preparing the whole assignment, provided each request allows the work needed for its trials and the worker action \textup{[\cref{app:division-of-labour}]}.

\section{Conclusion}
\label{sec:close}

We have developed a formal framework for studying how computation, learning and organisation reduce the work required to achieve complex outcomes. In this framework, a system's \emph{grasp} across a broad range of tasks provides an efficiency-based view of knowledge and intelligence, while learning, building reusable resources and sharing infrastructure are what move a system or community from grind to grasp. Low grasp has costs beyond efficiency. A system may then fail to reach a sound moral judgement within the work it spends on the question, and results reached through grind may add little to human understanding.

We have focused on how an achievement that initially requires extensive work from a short description can later require much less work when useful products of earlier work are preserved. This \emph{work-reducing information} can consist of byproducts saved while solving an immediate task or resources built deliberately for later use, such as a chess endgame table, a software library or a large language model. Treating acquisition as a separate task distinguishes the work of obtaining a capability from the work of using it. A benchmark score alone does not show how much computation was used or how much information was given, so it has become common to compare systems across work budgets. As evaluations approach their scoring limits, comparing the work needed to reach a given score can still distinguish systems.

Progress across sequences of tasks and system generations takes several distinct forms. In \emph{cumulative deepening}, successive achievements have increasing depth from a fixed starting condition, while inherited products reduce the work needed at each stage. When those products are shared, different contributors can use earlier results and methods without repeating the work that produced them. \emph{Positive recursive gain} requires a specific producer--successor comparison, in which the successor needs less work than its producer for a comparable next improvement, or achieves more with the same resources. In contemporary AI research, automating coding and experimentation can also reduce human researcher effort per advance even when total machine computation rises, so human and machine work should be tracked separately. Progress can then also be faster in elapsed time, provided that human effort is the bottleneck.

Finally, for multi-agent organisation, the appropriate comparison for a collective of short-lived independent runs is a continuing human-directed system with shared project state and orchestrated parallel worker agents. The two arrangements should be compared under the same task requirements and resource limits, with the same access to external information and tools. The directed arrangement can save work when the partner selection and negotiation it avoids outweigh its additional routing, supervision and verification costs. A full project comparison also includes acquisition, upkeep and storage over the same horizon (\cref{sec:horizon}). The risk of a collective pursuing its own goals supports leaving agreements, commitments and project goals to human users, while agents execute approved work and direct subordinate workers. Acquiring resources can help agents pursue many different goals \cite{omohundro2008,turner2021}, so an agent's efficiency in building and keeping them does not establish that it serves its users' goals. 
Human institutions can then integrate increasingly capable AI by checking its results, proposals and tests against the requirements of assigned roles, while keeping direction over how their arrangements and priorities develop. Otherwise, an emergent agent society risks directing its work towards structures that sustain and enable it, rather than towards meeting human society's needs.

\section*{Acknowledgements}
The author thanks the participants in internal presentations of different versions of this work at Google DeepMind over the past 12~months for their questions, feedback and discussions, with particular thanks to Joel Z.~Leibo.

\section*{AI use}
Many different large language models assisted throughout this work. They were used to draft and revise text, and to develop and check mathematical arguments and proofs. The author developed the concepts, results and reasoning in direct work with these models and takes full responsibility for the entire content of the article.

\begingroup
\raggedright
\phantomsection
\printbibliography[heading=bibintoc]
\endgroup

\appendix
\setlength{\parskip}{6pt}
\crefalias{section}{appendix}
\crefalias{subsection}{appendix}
\crefalias{subsubsection}{appendix}
\crefname{appendix}{appendix}{appendices}
\Crefname{appendix}{Appendix}{Appendices}

\clearpage
\section{The model}
\label{app:model}

These appendices collect the model, definitions and proofs used in the main text. \Cref{app:model} fixes the computational setting,
\cref{app:defs} defines the quantities the main text uses, and
\cref{app:statements} states and proves the results the text relies on.

All represented objects are finite strings over a fixed alphabet $\Sig$ with
$|\Sig|\ge2$, and $\Sig^*$ denotes the set of all finite strings over $\Sig$, including the empty string $\eps$. We use $\Nat=\{0,1,2,\ldots\}$. Binary constructions use
$\Sig=\{0,1\}$. We fix standard efficient injective encodings throughout.
For a fixed number of fields, the tuple code $\code{\cdot}$ has length equal
to the sum of the field lengths up to logarithmic delimiter overhead, and its
fields can be encoded and recovered in polynomial time in the total encoded
length. When a self-delimiting representation is needed, $\sd z$ denotes the
fixed code of $z$, whose endpoint can be determined from the code itself. For $n\in\Nat$, the code length $\ell(n)=|\sd n|$ satisfies
$\ell(n)=O(\log(n+2))$. Finite transition tables and tape configurations use
explicit encodings with polynomial-time parsing and updates.

For the later bounds on replacement-state length and migration work, tables and other stored bit strings use explicit serial representations. A stored string of $T$ bits, together with fixed routing or lookup code and $O(\log(T+2))$ symbols specifying its length and format, occupies $O(T+1)$ symbols. Copying and emitting this representation takes $O(T+1)$ work.
These conventions govern representation overhead. A short program can still require extensive work to execute.

A system $g$ is a deterministic finite-description process with a declared
class $\mathcal R_g$ of valid finite states. A run starts from
$r\in\mathcal R_g$. The chosen boundary fixes which persistent objects
belong to the system or its retained state. The task protocol specifies the task-input interface and the rule for providing other external (\emph{exogenous}) inputs.

For a finite realised run, external events and random choices can be
represented as exogenous inputs. Conditional on them, execution is
deterministic. This deterministic description assigns no probabilities, while
\cref{app:source-law} adds a probability distribution on a finite source set and specifies how observations are
delivered and execution outcomes are observed for reliability comparisons.

\begin{definition}[Evaluation protocol]
\label{app:protocol}
A task is a tuple
\[
\tau=(x,\mathcal Q_\tau,\preceq_\tau,\perf_\tau),
\]
where $x\in\Sig^*$ is public context,
$(\mathcal Q_\tau,\preceq_\tau)$ is a nonempty partially ordered performance
set, and the task evaluator $\perf_\tau:\Sig^*\to\mathcal Q_\tau$ grades completed
responses.  An evaluation protocol fixes the system boundary, the work-counting rule,
the task evaluator, the task-input interface and any rule for providing further
external inputs or environment responses. Fix a system $g$, a valid retained
state $r$, context $x$, an admitted task input $p$ and a realisation of those
external inputs. Write $\Work_g$ for the work rule applied to $g$. If this run
halts without declaring failure, it returns
\[
e=g(r;\code{x,p})
\]
with assigned work $\Work_g(r;\code{x,p})$ and performance
$\perf_\tau(e)$.  A run that declares failure also halts. Every halting run has assigned work and a terminal valid retained state.

The complete description of the run distinguishes a returned response, a declared
failure and non-termination. It includes the emitted response, assigned work
and terminal retained state wherever defined. Here $g(r;\code{x,p})$ denotes
only the returned response, and is undefined after a declared failure or non-termination. 
\end{definition}

This mathematical description does not assume an algorithm for deciding non-termination. When a comparison varies the system or retained state, the boundary, work rule, task interface, evaluator and any environment rule are held fixed while the named system or state varies.

The variable string $p$ is the task input admitted by the
protocol, and the input bound below applies to its length $|p|$.  The
public context $x$, fixed system description, and retained state are
not counted again as task input. The reference machine is conditioned on $x$. External data are available to it when encoded in that context.

A deterministic run is fixed by the system, its initial retained state, the public context, the admitted task input and the external inputs given under the protocol. Data given through the task-input interface count towards $|p|$, while exogenous draws are recorded separately. The protocol fixes which exogenous inputs are counted. Comparisons of acquisition histories also count the work and inputs used to obtain the retained state.

Sampled execution uses the source distribution and delivery protocol of \cref{app:source-law}. For interactive tasks, the protocol also specifies when observations become available to the evaluated system (\cref{app:source-law}).

\begin{definition}[Accepted level sets and binary tasks]
\label{app:accepted-levels}
For a nonempty accepted set $S\subseteq\Sig^*$, write
$\tau=(x,S)$ for the binary task with performance set $\{0,1\}$ in its usual order and
$\perf_\tau(e)=1$ exactly when $e\in S$.  More generally, for a graded task
and level $q$, its accepted level set is
\[
S_q^\tau=\{e\in\Sig^*:q\preceq_\tau\perf_\tau(e)\}.
\]
\end{definition}
Thus every performance threshold with nonempty $S_q^\tau$ corresponds to an accepted-set target.

The general theory permits a declared rule assigning finite nonnegative
real work to finite executions. Ordinary work budgets are nonnegative real
numbers. The reliability definitions of \cref{app:reliable-companion} and the constructions below that count every transition
use integer work and budgets. The general theory also permits work rules that count only selected operations. A run can then
continue indefinitely with finite counted work.

\begin{definition}[Transition-complete systems]
\label{app:transition-complete}
In a \emph{transition-complete} system, output is initially
empty, every executed transition contributes one work unit and at most one
response symbol is emitted per transition. Hence a returned response $e$
satisfies $|e|\le\Work_g(r;\code{x,p})$.
\end{definition}

\begin{definition}[Effective exhaustibility]
\label{app:exhaustible}
At a fixed public context, bounded-work runs of a system--state pair
are \emph{effectively exhaustible} when an algorithm, given finite bounds
on task-input length and assigned work in the chosen finite encoding,
terminates with the complete
finite list of halting runs within those bounds and their complete run
descriptions. A family is uniformly effectively exhaustible when one such
algorithm also takes the pair's index and any varying context as
input.
\end{definition} This property lets the evaluator constructions in
\cref{app:demanding,app:public} list all responses obtainable within the
chosen input and work bounds. For a transition-complete system with a decidable admitted-input set and fixed external inputs provided by a computable procedure, it suffices to simulate each bounded-length input for the work bound in transitions. A work rule that counts only selected operations need not permit this finite enumeration.

Finally, fix a universal deterministic reference machine
$U_0$.  Reference description lengths below are defined using $U_0$ with the
public context given as a condition.  Changing the fixed universal
reference machine changes these description lengths only by the usual fixed constants. One $U_0$ is fixed throughout. These constants bound
length differences, not the work required at the resulting input bounds.  For finite strings
$u,v$, the plain conditional complexity $\C(u\given v)$ is the length of a shortest program that halts on $U_0$ with output $u$, given $v$:
\[
\C(u\given v):=\min\{|a|:U_0(a,v)=u\}.
\]
Multiple conditioning fields are combined through the fixed tuple code. A reference program is a program for $U_0$. The evaluated system $g$ can use a different input format and its own work rule.

\clearpage
\section{Definitions and basic results}
\label{app:defs}

We first define attainability, depth and grasp, then reliable execution
under a specified probability distribution. Replacing one component while all other conditions are fixed distinguishes retained
and external contributions. The remaining definitions concern acquiring
capability, organising work, producing further improvements and comparing
complete continuation plans.

\subsection{Attainability, depth and grasp}

\begin{definition}[Attainable region and least work]
\label{app:attainable}
For an evaluated system $g$, valid state $r$, and task $\tau$, let $\lambda\in\Nat$ bound the length of the task input, let $w\ge0$ be a real work budget, and let $q\in\mathcal Q_\tau$ be a required performance level. Define
\[
\A_g(r,\tau)=
\left\{(\lambda,w,q):
\begin{array}{l}
\exists\text{ admitted }p\text{ with }|p|\le\lambda,\\
\Work_g(r;\code{x,p})\le w,\\
q\preceq_\tau\perf_\tau(g(r;\code{x,p}))
\end{array}
\right\}.
\]
For a level $q$, the least work at input bound $\lambda$ is
\[
w_{g,r}^{\tau,q}(\lambda)
=\min\{w:(\lambda,w,q)\in\A_g(r,\tau)\},
\]
with value $\infty$ when no qualifying run exists.
\end{definition}

For fixed $\lambda$, there are finitely many admitted input strings of length at most $\lambda$, each with one deterministic run once the exogenous inputs are fixed. The set of successful work values is therefore finite and has a minimum when nonempty. Increasing $\lambda$ or $w$, or replacing $q$ by a level $q'\preceq_\tau q$, preserves attainability. Least work describes the fixed system and its work rule, with the minimum taken over admitted task inputs.

\begin{definition}[Reference description complexity and depth]
\label{app:depth}
For $q\in\mathcal Q_\tau$ with nonempty accepted level set $S_q^\tau$, the reference description complexity is the length of a shortest program that halts on $U_0$ and returns an output in $S_q^\tau$, given $x$:
\[
\C_\tau(q\given x)
=\min\{|a|:U_0(a,x)\in S_q^\tau\}.
\]
Let the margin $s\in\Nat$ be the number of extra input symbols permitted beyond this shortest reference description length. At input bound $\C_\tau(q\given x)+s$, the depth of $(g,r)$ is
\[
d_{g,r}^{\tau,q}(s)
=w_{g,r}^{\tau,q}(\C_\tau(q\given x)+s).
\]
\end{definition}

Thus depth is the least work on the current task when the task input is at most
$s$ symbols longer than the shortest reference description of a successful
response, adapting logical and computational depth to the evaluated
system \cite{bennett1988,antunes2006}. The nonempty accepted set makes the
reference complexity finite, and depth is $\infty$ when the evaluated system
has no qualifying run at that input bound.

To recover the conditional, plain-complexity form of Bennett's logical depth, consider a task with one accepted output and let $g$ execute $p$ as a program on $U_0$, with $x$ as auxiliary input and no additional stored information. Count work in steps of $U_0$. Then the depth is the least running time of a program for that output whose length is within $s$ of its shortest reference description, conditional on $x$.

To compare two reference machines $U$ and $U'$, keep $(g,r)$, the task and the work rule fixed. Use superscripts $U,U'$ on $\C$ and $d$ to indicate only which machine defines description length. If $|\C^U_\tau(q\given x)-\C^{U'}_\tau(q\given x)|\le c$ for a fixed $c\in\Nat$, then for $s\ge c$,
\[
d^{U'}(s+c)\le d^U(s)\le d^{U'}(s-c),
\]
with the common task, level and evaluated pair suppressed in this display. Indeed, the input bound for $d^U(s)$ lies between those for $d^{U'}(s-c)$ and $d^{U'}(s+c)$, and least work is nonincreasing in that bound. The universal-machine comparison gives such a constant uniformly. The inequalities compare shifted margins on the same work profile.

As in \cref{sec:machine}, grasp describes which performance levels a system
can reach at each work budget, with the input bound fixed. The
proposition below relates this view of capability to the work required to
reach each level.

\begin{definition}[Grasp profile and dominance]
\label{app:grasp}
Fix a task $\tau$ and input bound $\lambda$.  Define
\[
\mathcal G_{g,r}^{\tau}(\lambda,w)
=\{q\in\mathcal Q_\tau:(\lambda,w,q)\in\A_g(r,\tau)\}.
\]
The family of these sets over work budgets is the grasp profile at input bound $\lambda$. Conversely, at fixed finite work budget $w$ and target level $q$, define the work-bounded complexity
\[
K_{g,r}^{\tau,q}(w)=\min\{\lambda\in\Nat:w_{g,r}^{\tau,q}(\lambda)\le w\},
\]
with value $\infty$ when the set is empty. For two pairs $(g_0,r_0)$ and $(g_1,r_1)$ evaluated on the same task
under a common protocol, write
$\mathcal G_j^{\tau}(\lambda,w)=\mathcal G_{g_j,r_j}^{\tau}(\lambda,w)$.
Pair $1$ weakly grasp-dominates pair $0$ over a budget set $\mathcal W$ when
\[
\mathcal G_{0}^{\tau}(\lambda,w)\subseteq
\mathcal G_{1}^{\tau}(\lambda,w)
\quad\text{for every }w\in\mathcal W.
\]
For a task family the same inclusion is required taskwise.  \end{definition}

If each pair reaches a level that the other misses at some compared budget, neither pair dominates the other over $\mathcal W$.

\begin{proposition}[Work-bounded complexity and grasp dominance]
\label{app:grasp-duality}
Fix a system $g$, valid state $r$ and task $\tau$.
For finite $w\ge0$, input bound $\lambda\in\Nat$ and level $q\in\mathcal Q_\tau$,
\[
K_{g,r}^{\tau,q}(w)\le\lambda
\quad\Longleftrightarrow\quad
w_{g,r}^{\tau,q}(\lambda)\le w,
\qquad
\mathcal G_{g,r}^{\tau}(\lambda,w)
=\{q:K_{g,r}^{\tau,q}(w)\le\lambda\}.
\]
For a level with nonempty $S_q^\tau$ and margin $s\in\Nat$, a finite depth satisfies
$K_{g,r}^{\tau,q}(d_{g,r}^{\tau,q}(s))\le\C_\tau(q\given x)+s$.

Fix $\tau$, $\lambda$ and two pairs $(g_0,r_0)$ and
$(g_1,r_1)$ evaluated as in \cref{app:grasp}.  Write
$\mathcal G_j^{\tau}(\lambda,w)=\mathcal G_{g_j,r_j}^{\tau}(\lambda,w)$
and $w_j^{\tau,q}(\lambda)=w_{g_j,r_j}^{\tau,q}(\lambda)$.
Let the compared budget set $\mathcal W$ contain $w_0^{\tau,q}(\lambda)$ for every level $q$ reachable
by pair $0$ at input bound $\lambda$. Then the following are equivalent.
(i) $\mathcal G_0^{\tau}(\lambda,w)\subseteq
\mathcal G_1^{\tau}(\lambda,w)$ for every $w\in\mathcal W$, and
(ii) $w_1^{\tau,q}(\lambda)\le w_0^{\tau,q}(\lambda)$ for every level $q$
reachable by pair $0$.  If the performance order is total and both grasp sets have greatest
elements at every compared budget, (i) is equivalently the statement
that pair $1$ attains at least as high a performance as pair $0$ at every
compared budget.
\end{proposition}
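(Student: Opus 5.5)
The proof unwinds the definitions of \cref{app:attainable,app:grasp}, using one structural fact: least work $w_{g,r}^{\tau,q}(\lambda)$ is nonincreasing in $\lambda$. Enlarging the input bound only enlarges the finite set of admitted inputs over which the minimum is taken.

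\emph{First equivalence.} If $K_{g,r}^{\tau,q}(w)\le\lambda$, then $K$ is finite. Since its defining set is a nonempty subset of $\Nat$, the minimum $\lambda_0=K_{g,r}^{\tau,q}(w)$ is attained, so $w_{g,r}^{\tau,q}(\lambda_0)\le w$. Monotonicity then gives $w_{g,r}^{\tau,q}(\lambda)\le w_{g,r}^{\tau,q}(\lambda_0)\le w$. Conversely, if $w_{g,r}^{\tau,q}(\lambda)\le w$, then $\lambda$ belongs to the defining set, so the minimum is at most $\lambda$.

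\emph{Grasp identity.} Because the least work is an attained minimum over finitely many successful runs, $(\lambda,w,q)\in\A_g(r,\tau)$ holds exactly when $w_{g,r}^{\tau,q}(\lambda)\le w$. Together with the first equivalence this gives $\mathcal G_{g,r}^\tau(\lambda,w)=\{q:K_{g,r}^{\tau,q}(w)\le\lambda\}$.

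\emph{Depth bound.} Set $\lambda=\C_\tau(q\given x)+s$ and $w=d_{g,r}^{\tau,q}(s)=w_{g,r}^{\tau,q}(\lambda)$, which is finite by assumption. The first equivalence then yields $K_{g,r}^{\tau,q}(w)\le\lambda$ directly.

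\emph{Dominance equivalence.} For (ii)$\Rightarrow$(i), take $w\in\mathcal W$ and $q\in\mathcal G_0^\tau(\lambda,w)$. Then $q$ is reachable by pair $0$ and $w_0^{\tau,q}(\lambda)\le w$. By (ii), $w_1^{\tau,q}(\lambda)\le w_0^{\tau,q}(\lambda)\le w$, so $q\in\mathcal G_1^\tau(\lambda,w)$. This direction needs no assumption on $\mathcal W$.

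For (i)$\Rightarrow$(ii), let $q$ be reachable by pair $0$ and put $w^\ast=w_0^{\tau,q}(\lambda)<\infty$. This is the step where the hypothesis on $\mathcal W$ is used: it guarantees $w^\ast\in\mathcal W$. We have $q\in\mathcal G_0^\tau(\lambda,w^\ast)$, so (i) gives $q\in\mathcal G_1^\tau(\lambda,w^\ast)$, i.e.\ $w_1^{\tau,q}(\lambda)\le w^\ast$.

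\emph{Total-order clause.} Write $q_j^\ast(w)=\max\mathcal G_j^\tau(\lambda,w)$. Each grasp set is downward closed, because replacing $q$ by a level $q'\preceq_\tau q$ preserves attainability. In a total order, a nonempty downward-closed set with a greatest element equals $\{q':q'\preceq_\tau q_j^\ast(w)\}$. Inclusion of two such sets is therefore equivalent to $q_0^\ast(w)\preceq_\tau q_1^\ast(w)$. Applying this at each $w\in\mathcal W$ gives the clause.

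The one point needing care is emptiness. The hypothesis that both grasp sets have greatest elements presupposes that they are nonempty. If empty sets were allowed, the empty set is trivially contained in anything, and only the case where pair $1$'s set is empty while pair $0$'s is not would need a convention; the nonemptiness assumption rules this out. Beyond that, the only real obstacle is recording precisely where the hypothesis on $\mathcal W$ enters, which is the (i)$\Rightarrow$(ii) step above.
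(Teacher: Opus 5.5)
Your proof is correct and follows the paper's own argument step for step. It uses monotonicity of least work in $\lambda$ for the first equivalence, substitution for the depth bound, evaluation of (i) at $w=w_0^{\tau,q}(\lambda)\in\mathcal W$ for (i)$\Rightarrow$(ii), and downward closure of attainable level sets for the total-order clause, and your remarks on exactly where the hypothesis on $\mathcal W$ enters and on nonemptiness make explicit what the paper leaves implicit.
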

\begin{proof}
If the set defining $K$ is nonempty, its least integer member attains the bound, and all larger input bounds also suffice. Conversely, any qualifying $\lambda$ bounds that minimum. This proves the first equivalence. The equality of grasp sets follows from their definition. At finite depth, substitute $w=d_{g,r}^{\tau,q}(s)$ and $\lambda=\C_\tau(q\given x)+s$ in that equivalence.

Assume (i) and put $w=w_0^{\tau,q}(\lambda)$.  Then
$q\in\mathcal G_0^{\tau}(\lambda,w)$, so by inclusion
$q\in\mathcal G_1^{\tau}(\lambda,w)$ and hence
$w_1^{\tau,q}(\lambda)\le w$.  Conversely, if
$q\in\mathcal G_0^{\tau}(\lambda,w)$ then
$w_0^{\tau,q}(\lambda)\le w$. Clause (ii) gives
$w_1^{\tau,q}(\lambda)\le w$, so
$q\in\mathcal G_1^{\tau}(\lambda,w)$.  Attainable level sets are downward closed in any performance order. In a total
order, inclusion of two such sets with greatest elements is equivalent to
ordering those elements.
\end{proof}

\clearpage
\subsection{Reliable execution}
\label{app:reliable-companion}

For the language-model processes in \cref{sec:llm-execution}, we also ask
how likely a run is to return a successful response within a work budget.
We add a probability distribution on a finite source set (such as the seed set in \cref{sec:llm-execution}) for the bounded
executions below.

\begin{definition}[Execution under a probability distribution]
\label{app:source-law}
Fix a deterministic finite-description execution procedure $G$, a valid
retained state $r$, task $\tau$ with public context $x$, task-input interface,
and nonnegative integer work rule. Fix a finite set $\mathcal U$ of external
input strings and a probability mass function $\nu$ on it. The distribution and the rule selecting it are fixed independently of the
outcomes being evaluated. The same distribution is
used for all task-input strings and all retained states being compared.
Its specification can depend on the common public context. The task input $p$ is given independently of the realised source draw.

Write $G(r;x,p,u)$ for the run from $r$ with context $x$, task input $p$ and
external input $u$. If it returns a response $e$ with $q\preceq_\tau\perf_\tau(e)$, let
$X_r^{\tau,q}(p,u)$ be its work. Set this
variable to $\infty$ for every other run. For a finite budget $b\in\Nat$, define the success probability
\[
F_r^{\tau,\nu}(p;b,q)
=\sum_{u\in\mathcal U}\nu(u)
  \mathbf 1\{X_r^{\tau,q}(p,u)\le b\}.
\]
\end{definition}

The indicator $\mathbf 1\{E\}$ equals $1$ if $E$ holds and $0$ otherwise. The notation suppresses the execution procedure and other fixed evaluation conditions.

The protocol's fixed input-delivery and observation procedure delivers
$u\in\mathcal U$ through the input channels and observes the actual response,
assigned work, status and terminal state of $G(r;x,p,u)$ wherever defined. In
interactive uses, it reveals observations only at their authorised stage and
computes action-dependent responses under the fixed environment rule. This
procedure implements the protocol rather than accepting the evaluated system's
report of what occurred. Its operations are counted when they lie inside the
evaluated boundary. A failing or nonhalting run is not successful.

Finite random seeds, bounded sequences of sampler inputs, and a finite collection
of environmental input sequences fit this definition. These draws are not counted
as deliberate task input, but they remain external inputs of the run. Searching for a favourable seed is
a different process whose work and selected information must be counted.

Representing a run as deterministic given its external inputs does not give an outside user access to a hosted model's hidden state. Reproducing a run through an application programming interface (API) still requires the inputs and execution conditions on which that run depends.

\begin{definition}[Reliability-indexed region and reliable work budget]
\label{app:reliable-region}
For each input bound under consideration, let $\mathcal H_\lambda$ be a
nonempty class of admitted task-input strings of length at most
$\lambda$. The input classes are
nested as $\lambda$ increases over the compared input bounds. Set
\[
\begin{split}
\mathcal A_G^\nu(r,\tau)
=\{(\lambda,b,q,\alpha):\;&0<\alpha\le1,\quad
\exists p\in\mathcal H_\lambda,\\
&F_r^{\tau,\nu}(p;b,q)\ge\alpha\},
\end{split}
\]
and
\[
b_r^{\tau,\nu}(\lambda;q,\alpha)
=\min\{b\in\Nat:(\lambda,b,q,\alpha)\in
                         \mathcal A_G^\nu(r,\tau)\},
\]
with value $\infty$ if no finite budget qualifies.
\end{definition}

This is the least budget
within which some task input in $\mathcal H_\lambda$ gives a probability of at least $\alpha$
of returning a response that meets target $q$.

Nesting alone
does not imply that these classes are the length truncations of one fixed
set of admitted strings. A fixed-input comparison with string $p_0$ uses
$\mathcal H_\lambda=\{p_0\}$ only for $\lambda\ge |p_0|$.
The finite-alphabet length bound
makes each class finite. No algorithm for finding a maximising input is
assumed.

For comparison with a deterministic procedure $g$, write $\mathcal H=(\mathcal H_\lambda)_\lambda$ and let $w_{g,r;\mathcal H}^{\tau,q}(\lambda)$ be the minimum in \cref{app:attainable} restricted to $p\in\mathcal H_\lambda$, with value $\infty$ when none succeeds. The corresponding attainable region requires the successful task input $p$ to belong to $\mathcal H_\lambda$.

\begin{proposition}[Reliable work budgets and deterministic execution]
\label{app:reliable-recovery}
Keep the execution procedure and probability distribution fixed. Then
$F_r^{\tau,\nu}(p;b,q)$ is nondecreasing in $b$ and nonincreasing under
increases in the performance threshold. For finite $b$,
\[
b_r^{\tau,\nu}(\lambda;q,\alpha)\le b
\quad\Longleftrightarrow\quad
\max_{p\in\mathcal H_\lambda}F_r^{\tau,\nu}(p;b,q)\ge\alpha.
\]
If $G$ ignores $u$, or $\nu$ is concentrated at one fixed $u_0$, define
$g$ by fixing $u_0$ (any element of $\mathcal U$ when $G$ ignores $u$) in $G$. With the same work rule, task and retained state, for every
$0<\alpha\le1$ and compared input bound $\lambda$,
\[
b_r^{\tau,\nu}(\lambda;q,\alpha)
=w_{g,r;\mathcal H}^{\tau,q}(\lambda).
\]
At integer work budgets and with inputs restricted to $\mathcal H_\lambda$,
every positive reliability requirement therefore gives the same attainable
region as deterministic execution by $g$. When $\mathcal H_\lambda$ contains all admitted task inputs of length
at most $\lambda$, the restricted minimum equals
$w_{g,r}^{\tau,q}(\lambda)$ at that input bound.
\end{proposition}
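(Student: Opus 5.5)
The plan is to prove the three claims of \cref{app:reliable-recovery} in order, working directly from \cref{app:source-law,app:reliable-region}.

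\emph{Monotonicity.} For fixed $p$ and $u$, the indicator $\mathbf 1\{X_r^{\tau,q}(p,u)\le b\}$ is nondecreasing in $b$. Raising the threshold from $q$ to $q'$ with $q\preceq_\tau q'$ shrinks the accepted set $S_{q'}^\tau\subseteq S_q^\tau$. Any run that succeeds at $q'$ then succeeds at $q$ with the same work, so $X_r^{\tau,q}(p,u)\le X_r^{\tau,q'}(p,u)$. Summing these pointwise inequalities with the nonnegative weights $\nu(u)$ gives both monotonicity claims for $F_r^{\tau,\nu}$.

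\emph{The equivalence.} $\mathcal H_\lambda$ is finite and nonempty, so the maximum over it exists. If $b_r^{\tau,\nu}(\lambda;q,\alpha)=b'\le b$, some $p\in\mathcal H_\lambda$ has $F(p;b',q)\ge\alpha$. Monotonicity in the budget then gives $F(p;b,q)\ge\alpha$, and hence the maximum is at least $\alpha$. Conversely, if some $p$ attains $F(p;b,q)\ge\alpha$, then $b$ belongs to the set whose minimum defines $b_r^{\tau,\nu}$, so the minimum is at most $b$. This direction relies on the set being a subset of $\Nat$, so that a minimum exists whenever the set is nonempty.

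\emph{Deterministic recovery.} Under either hypothesis, $X_r^{\tau,q}(p,u)=X_r^{\tau,q}(p,u_0)$ for $\nu$-almost every $u$. When $G$ ignores $u$ this holds for every $u$; when $\nu=\delta_{u_0}$ only $u_0$ carries mass. Consequently $F(p;b,q)=\mathbf 1\{X(p,u_0)\le b\}\in\{0,1\}$. For any $0<\alpha\le1$, the condition $F(p;b,q)\ge\alpha$ is then equivalent to $X(p,u_0)\le b$, which says that the run of $g$ on $p$ succeeds at level $q$ with work at most $b$.

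It follows that $b_r^{\tau,\nu}(\lambda;q,\alpha)=\min_{p\in\mathcal H_\lambda}\min\{b\in\Nat: X(p,u_0)\le b\}$. The inner minimum equals $X(p,u_0)$, which is an integer because the work rule is integer-valued, or $\infty$ if the run does not succeed. The outer minimum is therefore exactly the restricted least work $w_{g,r;\mathcal H}^{\tau,q}(\lambda)$, with both sides equal to $\infty$ when no input in $\mathcal H_\lambda$ succeeds. The statement about attainable regions follows by combining this identity with the equivalence already proved. The final remark is immediate: when $\mathcal H_\lambda$ contains every admitted input of length at most $\lambda$, the restricted minimum coincides with the minimum in \cref{app:attainable}.

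The only delicate point is integrality. The general $w_{g,r}^{\tau,q}$ ranges over real budgets, whereas $b$ ranges over $\Nat$. The identity therefore needs the integer work rule of \cref{app:source-law}, which guarantees that the least real budget is already an integer. With that observed, every step is routine unwinding of the definitions.
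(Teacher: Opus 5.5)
Your proposal is correct and follows essentially the same route as the paper's proof. Monotonicity comes from pointwise monotonicity of the indicators summed with nonnegative weights, the equivalence from the least element of a nonempty set of integer budgets together with the maximum attained over the finite class $\mathcal H_\lambda$, and recovery from the zero--one collapse of probabilities in the deterministic case followed by substitution; your explicit remark that the integer work rule makes the real-valued least work an integer spells out what the paper leaves implicit in that final substitution.
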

\begin{proof}
For fixed $p,u$, the event $X_r^{\tau,q}(p,u)\le b$ is nested as $b$
increases. Raising $q$ can only remove accepted responses. Summing the
indicators with nonnegative weights proves monotonicity. A nonempty set of
integer qualifying budgets has a least member, and the maximum over inputs is
attained over its finite class. These facts give the displayed inverse.
In the deterministic case every event probability is zero or one, so a
positive reliability threshold holds exactly when the run on $p$
succeeds within $b$. Substitution in the reliability and restricted ordinary
region definitions, followed by taking the least budget, proves recovery,
including the value $\infty$.
\end{proof}

At fixed $\lambda$ and $\alpha$, the attainable performance set at budget
$b$ is $\{q:b_r^{\tau,\nu}(\lambda;q,\alpha)\le b\}$. For two states, the first state's set is contained in the second's at every
budget exactly when the second state needs no larger least budget at any level that the first state reaches, by the argument of
\cref{app:grasp-duality}. With a singleton input class, these sets and least budgets concern that fixed
input. Larger classes permit minimisation over their members. 

Comparing success probabilities requires the probability distribution for each system. Comparing
outcomes run by run additionally requires a joint distribution specifying how the
two systems' source inputs are paired. In an interactive
comparison, a fixed environment response rule can still return different replies
when the systems take different actions.

\subsection{Persistent information}

To measure the work saved by a component, \cref{sec:wri} compares variants
that differ only in that component, at the same input bound. The
definition below includes both retained state and persistent external
information, whose treatment depends on the boundary discussed in
\cref{sec:boundaries}.

Fix a later process and write its persistent condition as $(r,y)$.  Here
$r\in\mathcal R_g$ is valid retained state belonging to the system,
and $y$ is a finite persistent external state stored in a distinguished
field of public context and available across the occasions under comparison.  The
counted task input is not part of $y$.

For a retained-component replacement write $r[z]=(\rho,z)$, where $\rho$ is the fixed rest of the state, with $y$ fixed.  For an external-state replacement write $y[z]=(\eta,z)$ with $r$ fixed.  In either case $z$ ranges over a valid replacement class
$\mathcal Z$, and we write $r^\pm=r[z^\pm]$ or $y^\pm=y[z^\pm]$.  

The common task specification, with its persistent external-state field omitted,
is $\bar\tau=(\bar x,\mathcal Q,\preceq,\perf)$. To include a particular external
state $y$, write $\tau[y]=(\code{\bar x,y},\mathcal Q,\preceq,\perf)$.

\begin{definition}[Matched component replacement]
\label{app:matched}
A matched comparison requires both variants $z^+$ and $z^-$ to be valid under the same
protocol. The later process, task-input interface, admitted task inputs, task evaluator,
any environment rule, system boundary and work rule are fixed. Every nonvaried state and
public-context field is also fixed, as is the source realisation.

At input bounds
where the relevant minima are finite define
\[
\Delta_{\rm work}((r^+,y){:}(r^-,y);\bar\tau,q,\lambda)
=w_{g,r^-}^{\tau[y],q}(\lambda)-w_{g,r^+}^{\tau[y],q}(\lambda)
\]
for a retained-state replacement, and
\[
\Delta_{\rm work}((r,y^+){:}(r,y^-);\bar\tau,q,\lambda)
=w_{g,r}^{\tau[y^-],q}(\lambda)-w_{g,r}^{\tau[y^+],q}(\lambda)
\]
for a persistent external state replacement. A joint replacement of retained and external fields, with every other condition fixed, uses
\[
\Delta_{\rm work}((r^+,y^+){:}(r^-,y^-);\bar\tau,q,\lambda)
=w_{g,r^-}^{\tau[y^-],q}(\lambda)-w_{g,r^+}^{\tau[y^+],q}(\lambda)
\]
when both values are finite.
\end{definition}

With the external state fixed,
\cref{sec:wri} uses the shorter notation
$\Delta_{\rm work}(r^+{:}r^-;\tau,q,\lambda)$ with $\tau=\tau[y]$.

When a difference is attributed to the content of a field, the replacements
should keep the rest of the representation and its access costs fixed, for
example by using fields of equal width. Otherwise the change of layout must be
accounted for in the comparison.

\begin{definition}[Reuse of earlier work]
\label{app:reuse}
Fix an observation rule $\mathcal O$ on executions before making any
component replacement. The rule, execution representation and selected
events are common to all variants. For a fixed task input and source realisation, a later run $\xi_z$
depends on $z$ under this observation rule when some distinct valid
matched replacement $z'\in\mathcal Z$ gives
$\mathcal O(\xi_z)\ne\mathcal O(\xi_{z'})$.
For historical attribution, $z$ must have been produced by counted work
before the cut-off between acquisition and continuation, persisted into the
later state and not been given again as later task input or other counted
exogenous input. Dependence on $z$ under these conditions is reuse of earlier work.

For the finite constructions below, $\mathcal O(\xi)$ comprises the
halting status, emitted response and assigned work, wherever defined,
together with the ordered sequence $J(\xi)$ of writes executed to
retained state. Each event gives the written location and value.
The observer does not insert untouched starting memory into $J$.
A write of the value already present is still an event, as is a write
subsequently undone.
\end{definition}

Work and storage are still counted from the complete run description in
\cref{app:protocol}. The dependence test uses only the observations selected
by $\mathcal O$.

Only writes to retained state enter $J$. The dependence test ignores differences in scratchpad computation when all
selected observations remain the same.
Applications using another execution representation must specify its
corresponding update events. For an interactive system, the selected observations can include the sequence of its external actions. Applying the observation rule requires access to the selected events. Its halting-status observation cannot in general be settled by a finite
timeout, since a run still unfinished at that point may halt later.

We count any work used to keep stored information available. That work
can be the same for different stored values. Dependence on a value is
tested by replacing it and comparing the selected observations. The
contribution of the storage device that keeps it available is tested by replacing that device under a fixed
preservation task and specified disturbances.

Work done inside the system boundary to monitor execution or send information to an outside observer counts. The observer's inspection and log can be left outside the work and storage totals. If the process reads that log to construct messages or choose actions, the log's updates, reads and storage must be included in the comparison.

\begin{remark}[Cut-off within a computation]
\label{app:internal-cut}
The cut-off between acquisition and continuation in \cref{app:reuse} can lie inside a larger
computation.  If a component is produced before this cut-off and remains in a
valid state at that cut-off, the continuation can be evaluated as an ordinary later
run from that state.  Work before the cut-off belongs to the realised acquisition
history. Work after the cut-off is the work of the continuation, so the prefix is
not counted again as later work.

The earlier computation can be treated as an acquisition run when it returns a qualifying
state under the acquisition protocol. Its work includes any operations needed to extract and
emit that state. The resulting run gives the upper bound on least acquisition work in
\cref{app:acquisition-upper-bound}. The acquisition-depth bound also requires the
input-length condition stated there.
\end{remark}

\begin{definition}[Work-reducing information]
\label{app:wri}
For a matched retained-state replacement with the persistent
external state fixed, the state difference is \emph{work-reducing
information} for $(\tau,q,\lambda)$ when
$\Delta_{\rm work}(r^+{:}r^-;\tau,q,\lambda)>0$.  Attributing this WRI to earlier work requires the production, persistence and input conditions in \cref{app:reuse}.
\end{definition}

WRI identifies a beneficial replacement, and the associated numerical quantity measures the work saved relative to the specified alternative component. With all other conditions fixed, the comparison attributes the difference in least work to that replacement.

\begin{remark}[External work-reducing resources]
\label{app:wri-special-case}
A persistent external resource can lower least work without being WRI for
the system. WRI names the favourable retained-component case.
The same stored resource can therefore be WRI under a boundary that
includes it and an external work-reducing resource under a narrower
boundary.
\end{remark}

\begin{corollary}[Least-work comparisons imply grasp dominance]
\label{app:wri-to-grasp}
Fix a task family $\mathfrak T$ and valid states $r^+,r^-$ of one system $g$, evaluated on each task under a common protocol. For each $\tau\in\mathfrak T$, fix an input bound $\lambda_\tau$. Suppose that, for every $\tau\in\mathfrak T$ and every level $q$ with $w_{g,r^-}^{\tau,q}(\lambda_\tau)<\infty$,
\[
w_{g,r^+}^{\tau,q}(\lambda_\tau)\le w_{g,r^-}^{\tau,q}(\lambda_\tau).
\]
Then $(g,r^+)$ weakly grasp-dominates $(g,r^-)$ on each task at every finite work budget.
\end{corollary}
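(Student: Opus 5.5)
The plan is to reduce the corollary task by task to \cref{app:grasp-duality}, applied with pair $0$ equal to $(g,r^-)$ and pair $1$ equal to $(g,r^+)$, at input bound $\lambda=\lambda_\tau$. Since the conclusion is required taskwise by \cref{app:grasp}, it suffices to fix one $\tau\in\mathfrak T$ and show $\mathcal G_{g,r^-}^{\tau}(\lambda_\tau,w)\subseteq\mathcal G_{g,r^+}^{\tau}(\lambda_\tau,w)$ for every finite $w\ge0$.

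The direct route avoids the hypothesis on $\mathcal W$ in \cref{app:grasp-duality}. Take $q\in\mathcal G_{g,r^-}^{\tau}(\lambda_\tau,w)$. By definition, $(\lambda_\tau,w,q)\in\A_g(r^-,\tau)$, so the minimum defining least work is over a nonempty finite set and satisfies $w_{g,r^-}^{\tau,q}(\lambda_\tau)\le w<\infty$. This is the finiteness the hypothesis requires, so $w_{g,r^+}^{\tau,q}(\lambda_\tau)\le w_{g,r^-}^{\tau,q}(\lambda_\tau)\le w$. Then some admitted input from $r^+$ attains the minimum, which by \cref{app:attainable} is attained when finite. Hence $(\lambda_\tau,w,q)\in\A_g(r^+,\tau)$, that is, $q\in\mathcal G_{g,r^+}^{\tau}(\lambda_\tau,w)$.

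Alternatively, one can quote the (ii)$\Rightarrow$(i) direction of \cref{app:grasp-duality} with $\mathcal W=[0,\infty)$. This set contains every finite least-work value of $r^-$, so the containment hypothesis holds, and clause (ii) is exactly the assumed inequality restricted to the levels reachable by $r^-$.

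The only point needing care is the equivalence between ``$q$ reachable at budget $w$'' and ``least work $\le w$''. This relies on the minimum being attained: there are finitely many admitted inputs of length at most $\lambda_\tau$, and each has a deterministic run under the fixed protocol. I will cite the remark after \cref{app:attainable} for this rather than re-derive it. The matched-protocol requirement, namely the same $g$, boundary, work rule and evaluator for both states, ensures that both grasp sets refer to the same task and performance order. The inclusion is therefore meaningful, and there is no genuine obstacle.
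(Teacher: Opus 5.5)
Your proposal is correct. Your alternative route is exactly the paper's proof: apply the (ii)$\Rightarrow$(i) direction of \cref{app:grasp-duality} to each task, with $\mathcal W$ the set of finite work budgets. Your direct route only unpacks that proposition's argument, using the attainment of finite least work noted after \cref{app:attainable}.
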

\begin{proof}
Apply the implication from (ii) to (i) in \cref{app:grasp-duality} to each task, with $\mathcal W$ the set of finite work budgets.
\end{proof}

The matched comparison of \cref{app:matched} can also be made at fixed reliability, using the reliable work budgets of \cref{app:reliable-region}.

\Needspace{6\baselineskip}
\begin{definition}[Matched work reduction at fixed reliability]
\label{app:reliable-work-reduction}
Write $r^+=(\rho,z^+)$ and $r^-=(\rho,z^-)$ for valid retained states, where $\rho$ is the fixed rest of the state and $z$ is the component being compared. Use the same execution procedure, task and evaluator, input classes,
source mechanism and distribution, persistent external conditions, interfaces, work
rule, and other resource limits. Fix $q,\lambda,\alpha$ for which both
reliable work budgets are finite. Their work effect is
\[
\Delta_\alpha^\nu(r^+{:}r^-;\tau,q,\lambda)
=b_{r^-}^{\tau,\nu}(\lambda;q,\alpha)
 -b_{r^+}^{\tau,\nu}(\lambda;q,\alpha).
\]
A positive value is a retained-state work reduction at fixed reliability.
Its historical attribution requires the same production, persistence and
input conditions (\cref{app:reuse}).
\end{definition}

In the deterministic case of \cref{app:reliable-recovery}, this work effect is the difference of the
two ordinary least-work values restricted to the same $\mathcal H_\lambda$.
For the full bounded-length input class it equals $\Delta_{\rm work}$ in
\cref{app:matched}, so its positive case is ordinary WRI. A restricted-input
claim must keep that input restriction.

\subsection{Acquiring capability}

In \cref{sec:acquisition}, the output to be produced is a state with a
required capability. We use the same depth definition for this acquisition
task, with accepted outputs restricted to states that meet the requirement.

\begin{definition}[Accepted-set shorthand]
\label{app:accepted-set}
For a nonempty accepted set $S\subseteq\Sig^*$ and public context $x$, let
$(x,S)$ denote the binary task of \cref{app:accepted-levels}.  Write
\[
\C(S\given x):=\C_{(x,S)}(1\given x),\qquad
w_{g,r}^{(x,S)}(\lambda):=w_{g,r}^{(x,S),1}(\lambda),
\]
and
\[
d_{g,r}^{(x,S)}(s):=d_{g,r}^{(x,S),1}(s).
\]
For execution under a probability distribution $\nu$, write similarly
$F_r^{(x,S),\nu}(p;b):=F_r^{(x,S),\nu}(p;b,1)$ and
$b_r^{(x,S),\nu}(\lambda;\alpha):=b_r^{(x,S),\nu}(\lambda;1,\alpha)$.
With the system and public context fixed, we write $w_r^S(\lambda)$,
$d_r^S(s)$, $F_r^{S,\nu}(p;b)$ and $b_r^{S,\nu}(\lambda;\alpha)$.
When the state is also fixed, we omit its subscript, as in
$w^S(\lambda)$ and $d^S(s)$.
Here $\C(S\given x)$ is the minimum description length of an accepted
response, not the description length of $S$ as a set.
For the empty set, use $F_r^{\emptyset,\nu}(p;b)=0$ and
$w_r^{\emptyset}(\lambda)=b_r^{\emptyset,\nu}(\lambda;\alpha)=\infty$.
\end{definition}

\begin{definition}[Acquisition tasks]
\label{app:acquisition}
An acquisition pair $(A,a)$ is a deterministic acquisition system $A$ with
valid starting state $a$, evaluated under the protocol of \cref{app:protocol} with its
responses interpreted as candidate valid states.  Least work and depth for an
acquisition task are therefore the ordinary quantities
$w_{A,a}^{\tau,q}(\lambda)$ and $d_{A,a}^{\tau,q}(s)$.

A \emph{checkpoint} is a finite encoding of a valid state emitted or
carried by an acquisition or improvement process. The acquisition system produces a state to be used by the evaluated system $g$ on the later tasks.

For $g$ with a decidable class of valid states, an effective
finite capability requirement is a finite family
\[
\Gamma=((\tau_j,\lambda_j,w_j,q_j))_{j=1}^m
\]
for which one algorithm, given a valid candidate state $r$ and index $j$,
terminates and decides whether
$(\lambda_j,w_j,q_j)\in\A_g(r,\tau_j)$. Its accepted state set is
\[
S_\Gamma^g
=\{r\in\mathcal R_g:
(\lambda_j,w_j,q_j)\in\A_g(r,\tau_j)\text{ for every }j\}.
\]
When $S_\Gamma^g$ is nonempty, fix a public acquisition context
$x_{\rm acq}$ and define the binary acquisition task
$\tau_{\rm acq}=(x_{\rm acq},S_\Gamma^g)$. Its acquisition depth from $(A,a)$ at margin $s$ is $d_{A,a}^{\tau_{\rm acq},1}(s)$.
\end{definition}

A sufficient condition for decidability of the bounded-success tests is uniform
effective exhaustibility of these bounded runs as $r$ varies (\cref{app:exhaustible}), together
with decidable response-threshold tests. Membership is decidable by checking state validity and the finitely many
bounded-success tests.

Completing the acquisition task requires emitting a qualifying state, even when that state is already available as a checkpoint. The membership test concerns a given state. Determining whether any qualifying state exists is a separate problem. For a training comparison with a fixed corpus available before execution, include the corpus and the capability specification in $x_{\rm acq}$, and keep the starting checkpoint in $a$. The length of any reference program that uses this context to produce a qualifying state is an upper bound on $\C_{\tau_{\rm acq}}(1\given x_{\rm acq})$. The minimum is over all such programs. Acquisition depth then minimises the counted work of $(A,a)$ over its admitted inputs at the resulting input bound. Reading the corpus and emitting the state are included in the training work rule.

\begin{remark}[Upper bound from an acquisition run]
\label{app:acquisition-upper-bound}
Fix an acquisition pair $(A,a)$ and a binary acquisition task $\tau_{\rm acq}=(x_{\rm acq},S)$, for example with $S=S_\Gamma^g$. If a successful
run using a task input $p$ with $|p|\le\lambda$ emits a valid
state in $S$ as its response in total work $W$, including the
required output work, then
\[
w_{A,a}^{\tau_{\rm acq},1}(\lambda)\le W.
\]
If moreover
\[
|p|\le \C_{\tau_{\rm acq}}(1\given x_{\rm acq})+s,
\]
then
\[
d_{A,a}^{\tau_{\rm acq},1}(s)\le W.
\]
Thus one realised acquisition history gives an upper bound on the least
work. To prove that acquisition requires at least a given amount of work,
we must show that every successful run within the same input bound uses at least that much work.
\end{remark}

\begin{definition}[Reachable targets across tasks]
\label{app:cap-profile}
For a family
$\mathfrak F=((\tau_i,\lambda_i,w_i,q_i))_{i\in I}$ of tasks $\tau_i$
with input bounds $\lambda_i$, work budgets $w_i$ and target levels $q_i$, define
\[
\operatorname{Cap}_{g,r}(\mathfrak F)
=\{i\in I:(\lambda_i,w_i,q_i)\in\A_g(r,\tau_i)\}.
\]
Dominance on $\mathfrak F$ is inclusion of these sets.
\end{definition}

\subsection{Organisational arrangements}

The organisational comparison in \cref{sec:together} distinguishes work saved relative to
unorganised distributed components, extra work relative to a pooled
implementation, and capability made possible by interaction.

\begin{definition}[Matched organisational comparison family]
\label{app:orgfamily}
Fix one task $\tau$, performance level $q$, outer task interface,
outer exogenous-input and boundary convention, common work-accounting units with
communication counted \cite{yao1979}, and a total resource
limit.

A matched organisational family consists of four system--state pairs
$M=(g_M,r_M)$ under that common comparison.
\begin{itemize}
\item $M_{\rm central}$ is a specified pooled implementation with the
observation and action interfaces of the protocol.
\item $M_{\rm org}$ is the organised distributed implementation.
\item $M_{\rm flat}$ is a distributed implementation with matched component
resources and interfaces but without the shared conventions being compared.
\item $M_{\rm isolated}$ is a matched noncoordinating implementation used
when the question is whether interaction creates capability unavailable
to separated components.
\end{itemize}
The same exogenous inputs and permitted actions cross the common outer
boundary in every comparison. The implementations can distribute them differently among their internal components, and
all work traces are counted in the same units.

Write
$w_M^{\tau,q}(\lambda):=w_{g_M,r_M}^{\tau,q}(\lambda)$. For the pooled, organised and unorganised arrangements, abbreviate these quantities as $w_{\rm pool}$, $w_{\rm org}$ and $w_{\rm unorg}$, respectively. Comparisons are made
only at input bounds for which the relevant least-work quantities are finite.
The work difference from pooling is $T_{\rm coord}=w_{\rm org}-w_{\rm pool}$ and the work
saved by organisation is $G_{\rm org}=w_{\rm unorg}-w_{\rm org}$ at such input bounds.
\end{definition}

For a finite task family, these comparisons are made separately on each task.
Capability enabled by interaction is compared separately through the
attainable regions.

\subsection{Recursive improvement}

Improvement work must be compared relative to the producer from which each
round begins. Reaching a fixed absolute performance level can become easier
without making a comparable further improvement easier. The protocol below
therefore includes the actual producer in the evaluated output.

\begin{definition}[Improvement evaluated against the producer]
\label{app:producer-relative}
Fix an outer improvement system $g_{\Delta}$ that wraps a local improvement process. Producer and successor states are valid starting states of $g_{\Delta}$. When domain capability is compared, they are also interpreted as states of one fixed domain system under a common encoding.  When a call begins from a
producer state $r$ and proposes a successor $r'$ together with
procedure data $\sigma$, the fixed outer wrapper emits
\[
e^{\Delta}=\code{r,r',\sigma}.
\]
The producer field is inserted by the wrapper from the actual pre-call
state. It is not chosen by the local improvement process.  An
improvement task
\[
\tau^{\Delta}
=(x^{\Delta},\mathcal Q^{\Delta},\preceq^{\Delta},\perf^{\Delta})
\]
evaluates such outputs by one fixed relation between producer and
successor, optionally together with fixed procedural requirements.
Constructing and emitting the wrapper tuple is counted by the
work rule. Comparability of equal performance levels across producers is
defined by this common evaluator, not by a baseline-independent scale.
\end{definition}

\begin{definition}[Positive recursive gain]
\label{app:recursive-gain-def}
Suppose an improvement round beginning at $r_t$ causally produces
$r_{t+1}$, and a matched component replacement establishes that a component already
present at $r_t$ contributed to producing that successor. Positive recursive
gain occurs when the successor can make a comparable further improvement more efficiently on one fixed task that evaluates improvement over the producer.

The successor can require less work for the same further improvement. In this case, for some common gain target $q$ and input bound $\lambda$, both least-work values are finite and
\[
w_{g_{\Delta},r_{t+1}}^{\tau^{\Delta},q}(\lambda)
<
w_{g_{\Delta},r_t}^{\tau^{\Delta},q}(\lambda).
\]
Alternatively, the successor can achieve a greater improvement at the same resources. At a common input bound $\lambda$ and work budget
$B$, there are levels $q\prec^\Delta q'$ for which the producer's and successor's grasp sets satisfy
\[
q\in\mathcal G_{g_\Delta,r_t}^{\tau^\Delta}(\lambda,B),\qquad
q'\notin\mathcal G_{g_\Delta,r_t}^{\tau^\Delta}(\lambda,B),\qquad
q'\in\mathcal G_{g_\Delta,r_{t+1}}^{\tau^\Delta}(\lambda,B).
\]
Here $q\prec^\Delta q'$ means $q\preceq^\Delta q'$ and $q\ne q'$. Each
gain is evaluated relative to its actual producer.
\end{definition}

For the comparison at a common gain target $q$, setting the input bound to $\C_{\tau^{\Delta}}(q\given x^{\Delta})+s$ for a common margin $s$ gives the corresponding depth
comparison. Both least-work values in this comparison are finite. A finite later
value against an infinite baseline instead extends attainability.

The same gain level can require different work from different producers.

The replacement test for self-contribution uses the dependence test in \cref{app:reuse}. A difference confined to the copied producer field of $e^{\Delta}$, or to a component copied unchanged into the successor, does not count.

Attributing the successor's advantage on further improvement to an inherited product requires a separate test
that replaces only that product in $r_{t+1}$, with the other state fixed. Attribution also requires evidence of the
product's production and persistence history. For retained products this
is the comparison of \cref{app:wri} for WRI attributed to earlier work. Persistent external
products use the external-state comparison in \cref{app:matched}.

\begin{definition}[Human and machine work]
\label{app:human-machine-work}
Fix a task $\tau$ with context $x$, target $q$, system $g$, state $r$ and input
bound $\lambda\in\Nat$.
Assign each finite run two finite nonnegative costs, $\Work_g^{\rm hum}$ for human work and
$\Work_g^{\rm mach}$ for machine work, in fixed units. At nonnegative finite caps $\bar h,\bar m$, among successful runs with admitted
input $p$ of length at most $\lambda$, define
\[
\begin{aligned}
w_{\mathrm{hum},r}^{\tau,q}(\lambda;\bar m)
 &=\min\{\Work_g^{\rm hum}(r;\code{x,p}):\Work_g^{\rm mach}(r;\code{x,p})\le\bar m\},\\
w_{\mathrm{mach},r}^{\tau,q}(\lambda;\bar h)
 &=\min\{\Work_g^{\rm mach}(r;\code{x,p}):\Work_g^{\rm hum}(r;\code{x,p})\le\bar h\}.
\end{aligned}
\]
An empty qualifying set gives value $\infty$. For sampled execution $(G,\nu)$ as in \cref{app:source-law}, use the same two cost rules and fix reliability $0<\alpha\le1$. The reliable human-work budget at machine cap $\bar m$ is the infimum of $h\ge0$ for which some $p\in\mathcal H_\lambda$ gives probability at least $\alpha$ of success at level $q$ with human work at most $h$ and machine work at most $\bar m$, all on the same run. If no such $h$ exists, the budget is $\infty$. Interchanging the two costs and caps defines the machine-work budget.
\end{definition}

\subsection{Costs over future tasks}

A work saving can be outweighed by the cost of acquiring and maintaining
the useful structure. The complete-plan comparison in \cref{sec:horizon}
therefore includes acquisition and execution work, updates, storage use
and the lengths of descriptions and task inputs.

Fix a current decision state, finite horizon $H$, system boundary and
work rules, retained-state capacity $S_{\max}$, required performance levels,
a protocol for exogenous inputs and an overall goal expressed as a condition on the completed history. Future tasks are specified in advance
or determined by a fixed computable rule from completed history and
realised exogenous inputs. The protocol also fixes the rule for identifying the start of
each stage and the source realisations on which success is required. A
probabilistic requirement instead fixes a source distribution and a minimum success probability.

\begin{definition}[Feasible continuation plans]
\label{app:horizon-account}
A \emph{feasible continuation plan} $P$ is a controller program, executed inside the
boundary under the fixed work rules, that selects runs and updates over these $H$ stages
and meets the following conditions.
\begin{enumerate}[label=(\roman*)]
\item Each choice is computed using only the controller's current configuration and the
information delivered to it at that decision. Everything kept from one stage to the next
belongs to the retained state.
\item On every execution, the retained state stays within capacity $S_{\max}$.
\item On the source realisations fixed by the protocol, or with at least the minimum success
probability, the required performance is achieved and the completed history meets the fixed
overall goal, including when plans lead to different local tasks.
\end{enumerate}
For such a plan, define its cost vector by
\[
\mathbf C_H(P)
=\bigl(\Lambda_H(P),W_{{\rm acq},H}(P),W_H(P),S_H(P)\bigr),
\]
where $r_t^P$ is the retained state at the start of stage $t$ and
$\sd{\cdot}$ is the self-delimiting code of \cref{app:model}. At each stage,
$p_t^P$ encodes all counted task inputs and $\zeta_t^P$ all other counted
exogenous inputs, using fixed tuple encodings if there are several. Set
\[
\Lambda_H(P)
=|\sd{r_1^P}|+
\sum_{t=1}^H|\sd{p_t^P}|+
\sum_{t=1}^H|\sd{\zeta_t^P}|.
\]
$W_{{\rm acq},H}(P)$ counts acquisition, construction and migration work,
while $W_H(P)$ counts other execution work, including ordinary state updates
and maintenance. A common allocation rule fixed before comparing plans
assigns each counted operation to exactly one of these coordinates. Also,
\[
S_H(P)=\sum_{t=1}^H|r_t^P|
\]
sums the stored lengths over the stages. The initial state enters both
$\Lambda_H$ as description length and $S_H$ as storage use.
Producing the final state $r_{H+1}^P$ is counted in work, but its length
enters this sum only if a further stage is included.
Plans compared from the same decision state run on the same system from the same initial state. Their programs, or instructions selecting them, enter the counted input.
Plan $P$ costs no more than
$Q$ in any category when every coordinate of $\mathbf C_H(P)$ is no larger than the
corresponding coordinate of $\mathbf C_H(Q)$.

For given nonnegative coefficients
$\boldsymbol\mu=(\mu_{\Lambda},\mu_{\rm acq},\mu_W,\mu_S)$ define
\[
\mathcal C_H^{\boldsymbol\mu}(P)
=\boldsymbol\mu\!\cdot\!\mathbf C_H(P),
\qquad
\Adv_H^{\boldsymbol\mu}(P{:}Q)
=\mathcal C_H^{\boldsymbol\mu}(Q)
-\mathcal C_H^{\boldsymbol\mu}(P).
\]
\end{definition}

Report the four coordinates separately. A scalar comparison uses the
stated coefficients. A positive $\Adv_H^{\boldsymbol\mu}(P{:}Q)$ means
that $P$ has lower weighted total cost than $Q$.

For a stochastic procedure, these four quantities give costs for each realised history. When its finite-horizon execution is represented by one
fixed $G$ and a finite source set, \cref{app:source-law} applies to
the whole procedure. Its input-delivery mechanism reveals observations only after
the actions that obtain them. The execution counts all included work and returns
the completed history. The evaluator accepts
histories meeting the fixed overall goal and required performance levels. Failed or nonhalting runs
are unsuccessful. Averaging coordinate costs requires a specified distribution and
cost convention for all histories, with finite means where used.

Adding separate stage budgets can underestimate the budget needed for the same reliability over the whole procedure. Suppose two
independent stages always meet their targets, each using one work unit with probability
$0.9$ and eleven with probability $0.1$. Each has a $90\%$ work budget of
one, so their separate budgets sum to two. Total work is two, twelve or twenty-two with probabilities $0.81$,
$0.18$ and $0.01$, respectively. The whole-procedure $90\%$ budget is
twelve, whereas expected total work is four. Neither the sum of stage budgets nor the expected total gives the reliable budget for this procedure. That budget must be computed from the total-work distribution.

\Needspace{6\baselineskip}
\begin{remark}[Costs of retaining a component]
\label{app:horizon-retention}
Let $P$ and $Q$ be feasible plans from the same decision state. Plan $P$ retains a component $z$, acquiring or constructing it during the continuation if needed. Plan $Q$ omits $z$ or constructs a substitute, and it can still learn and keep its own results. Within $W_H$, fix a common convention that assigns each counted execution or update operation exactly once, either to a stage contribution $U_t$ or to the remaining work $W_{H,\rm other}$. Thus
\[
W_H(P)=\sum_{t=1}^H U_t(P)+W_{H,\rm other}(P),\qquad
W_H(Q)=\sum_{t=1}^H U_t(Q)+W_{H,\rm other}(Q).
\]
Put $\Delta_t(z)=U_t(Q)-U_t(P)$. These are differences for the given
plans, not differences of independently minimised local task costs. Define the remaining cost difference
\[
\begin{aligned}
C_{H,\rm other}^{\boldsymbol\mu}(z)
={}&\mu_{\Lambda}(\Lambda_H(P)-\Lambda_H(Q))
+\mu_{\rm acq}(W_{{\rm acq},H}(P)-W_{{\rm acq},H}(Q))\\
&+\mu_W(W_{H,\rm other}(P)-W_{H,\rm other}(Q))
+\mu_S(S_H(P)-S_H(Q)).
\end{aligned}
\]
Subtracting the complete costs of the two plans gives
\[
\Adv_H^{\boldsymbol\mu}(P{:}Q)
=\mu_W\sum_{t=1}^H\Delta_t(z)-C_{H,\rm other}^{\boldsymbol\mu}(z).
\]
This convention determines how the total divides between the two terms. When all coordinate differences in $C_{H,\rm other}^{\boldsymbol\mu}(z)$ are nonnegative, $C_{H,\rm other}^{\boldsymbol\mu}(z)$ is the weighted extra cost that $P$ incurs outside stage task work. Otherwise it is a signed net difference between the two plans, with some cost categories favouring $P$.
\end{remark}

For each horizon $H$, compare feasible plans $P_H$ and $Q_H$ under fixed conventions and finite nonnegative weights. Let $\overline\Delta_H=\frac1H\sum_{t=1}^H\Delta_t(z)$ be their average stage task-work difference. The identity gives $\Adv_H^{\boldsymbol\mu}(P_H{:}Q_H)=H\mu_W\overline\Delta_H-C_{H,\rm other}^{\boldsymbol\mu}(z)$. For constants $C_{\rm init}\ge0$, $\kappa\ge0$ and $\epsilon>0$ independent of $H$, the bounds $C_{H,\rm other}^{\boldsymbol\mu}(z)\le C_{\rm init}+H\kappa$ and $\mu_W\overline\Delta_H\ge\kappa+\epsilon$ give $\Adv_H^{\boldsymbol\mu}(P_H{:}Q_H)\ge H\epsilon-C_{\rm init}$. Reversing the bounds to $C_{H,\rm other}^{\boldsymbol\mu}(z)\ge C_{\rm init}+H\kappa$ and $\mu_W\overline\Delta_H\le\kappa-\epsilon$ gives $\Adv_H^{\boldsymbol\mu}(P_H{:}Q_H)\le-C_{\rm init}-H\epsilon$.

\section{Statements and proofs}
\label{app:statements}

The results below use the definitions of \cref{app:defs}. They cover
controlled and sampled execution (\cref{sec:llm-execution}), cache savings
(\cref{sec:wri}), retained shortcuts (\cref{sec:miniature}), cumulative deepening
(\cref{sec:deepening}), finite search
(\cref{sec:wri-search}), persistence
(\cref{sec:boundaries}), switching (\cref{sec:switching}), organisation
(\cref{sec:together}), evaluation (\cref{sec:judging}) and recursive improvement (\cref{sec:improving}).

\subsection{Controlled and sampled execution}

These constructions support \cref{sec:llm-execution}, the cache comparison in
\cref{sec:wri}, the runtime search in \cref{sec:wri-search} and the evaluator
comparison in \cref{sec:judging}. In controlled generation, the task input specifies token choices, whereas
in sampled generation, success depends on the distribution of choices and on the controller. The examples distinguish generating an
acceptable candidate from returning one.

Fix the public context, the retained state and a deterministic finite-precision next-token
implementation with its vocabulary and finite stopping rule. At each step, list candidate tokens in decreasing score order,
breaking ties by a fixed rule. Fix $k$ between one and the vocabulary size and
allow only the first $k$ tokens at each step. The generation procedure accepts
a length-delimited sequence of positions in $\{1,\ldots,k\}$, ignores entries
after stopping and declares failure if it needs an entry that is missing or
outside this set. All other execution choices are fixed by the implementation
or by one fixed realisation of the other input channels.

The fixed context and state determine the first candidate list, and the first
position determines the first token and the next state. Repeating this argument
shows that every sequence that causes no failure determines a unique execution.
Conversely, every token trace that uses the first $k$ candidates through the
stopping point is reproduced by the sequence of positions chosen along it.
Fixed-width binary coding of $L$ positions has payload length
$L\lceil\log_2 k\rceil$, and a standard length header adds $O(\log(L+2))$ bits.
The sequence and any accompanying instructions form part of the task input.
Computing the candidate lists, decoding the sequence and updating the state are
counted as work of the implementation.

An interface that also accepts programs can express regular sequences more
briefly, for example by a program that chooses the first candidate at every
step. The program then counts as task input and its execution counts as work.

\begin{example}[Equal controlled work with different sampling reliability]
\label{app:llm-separations}
Consider a finite generation procedure with two possible one-symbol responses, $0$ and
$1$, where only $1$ succeeds. Fix two retained states. In both, the ordered menu is
$(0,1)$. This one-step interface takes a single control bit, without a length header, to select either token. Those controlled runs have
the same fixed work $c$ in both states. They therefore have the same
least successful controlled work at any input bound that admits the one-bit control. In the default-sampling
mode, with no token choice given, draw $u$ uniformly from
$\{0,\ldots,9\}$. One state returns $1$ exactly when $u<1$. The other does
so exactly when $u<4$. Pad every controlled or sampled generation run to the same integer work $c$.
Their empty-input success probabilities at $c$ are $1/10$ and
$2/5$, although the controlled executions coincide.

For the reliable-work comparison, admit only the empty task input, so $\mathcal H_\lambda=\{\eps\}$. Let a common controller make at most $m$ independent attempts, using a uniform draw from $\{0,\ldots,9\}^m$. Each complete attempt, including an exact check and the controller's operations, is padded to a common integer work cost, again denoted $c$. The controller returns on success and declares failure after $m$ unsuccessful attempts. At budget $kc$ for $1\le k\le m$, a state whose per-attempt success probability is $\theta\in(0,1)$ succeeds with probability $1-(1-\theta)^k$. Its reliable work budget is therefore
\[
c\left\lceil\frac{\log(1-\alpha)}{\log(1-\theta)}\right\rceil
\]
for $0<\alpha<1$ when the required number of attempts is at most $m$, and is infinite otherwise. With $m\ge44$ and required success probability $\alpha=0.99$, the first state needs $44c$ and the second $10c$. At $\alpha=0.9$, they need $22c$ and $5c$. Thus equal controlled work can coexist with more than a fourfold difference in the work needed to return a correct response at the required reliability. The positive saving at fixed reliability concerns the restricted input class. Attributing that saving to earlier work would also require earlier production and persistence of the changed state component.
\end{example}

\begin{example}[Valid code repairs]
\label{app:repair-selection}
Consider repairing a Boolean expression that returns \texttt{a} where $\max(a,b)$ is required. Candidate repairs are always-halting expressions in $a,b$. A repair is valid when it returns $\max(a,b)$ on all four input pairs. The expressions \texttt{a or b}, \mbox{\texttt{not (not a and not b)}}, \texttt{a if a >= b else b}, and \texttt{b if b >= a else a} all compute $\max(a,b)$. The single-variable expressions \texttt{a} and \texttt{b} each pass the tests with $(a,b)=(0,0)$ and $(1,1)$ but fail one test with unequal inputs. Testing all four inputs is an exact verifier. The equal-input tests alone are not.

At a common padded generation cost, let generator A assign probability
$1/4$ to one valid repair and the rest to an invalid repair. Let generator B
assign probability $1/8$ to each of the four valid repairs and the rest to
an invalid repair. B succeeds with probability $1/2$, versus A's $1/4$,
even though each of its valid repairs is less likely than A's.

Suppose B's invalid repair is \texttt{a}. A controller draws two independent
candidates from B and returns the first that passes its tests, or reports failure
if neither passes. A valid candidate is present with probability $3/4$. Testing
all four inputs returns a valid repair with this probability. The equal-input
tests accept every candidate, so the controller returns the first, which is
valid with probability $1/2$.
\end{example}

We compare the proxy and intended evaluators of \cref{sec:judging} on the same
returned responses, keeping any internal checking and selection fixed.

\begin{proposition}[Proxy and goal success probabilities]
\label{app:reliable-transfer}
Fix the execution procedure $G$, retained state $r$, task input $p$,
source distribution $\nu$ and work budget $b$. Let $\widehat S$ be the
responses that pass the proxy test and $S^*$ those that meet the intended
goal. Write $\widehat F$ and $F^*$ for their success probabilities within
budget $b$.

Let $e$ be the returned response and $W$ the work of its execution.
The difference between the success probabilities is the probability of
a false pass minus that of a false fail within budget $b$,
\[
\begin{aligned}
\widehat F-F^*
={}&\Pr[W\le b,\ e\in\widehat S\setminus S^*]\\
&-\Pr[W\le b,\ e\in S^*\setminus\widehat S],
\end{aligned}
\]
where both events require the run to halt and return a response.
If the proxy test is sound, so that $\widehat S\subseteq S^*$,
there are no false passes and $\widehat F\le F^*$.
Since this holds for every admitted $p$ and budget $b$, the same task input
classes $\mathcal H_\lambda$ and reliability requirement $0<\alpha\le1$ give
\[
b_r^{S^*,\nu}(\lambda;\alpha)
\le
b_r^{\widehat S,\nu}(\lambda;\alpha).
\]
If the proxy test is complete, so that $S^*\subseteq\widehat S$, there are
no false fails and both inequalities are reversed.
\end{proposition}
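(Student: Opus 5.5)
The plan is to work source by source and then sum with the weights $\nu(u)$, using \cref{app:source-law}. Fix $p$ and $b$. For each source value $u\in\mathcal U$, the run $G(r;x,p,u)$ is deterministic. Let $E_u$ be the event that it halts, returns a response $e_u$ and has work $W_u\le b$. Since $\widehat S$ and $S^*$ are both accepted-set targets, \cref{app:accepted-set} gives
\[
\widehat F=\sum_u\nu(u)\,\mathbf 1\{E_u,\ e_u\in\widehat S\},
\qquad
F^*=\sum_u\nu(u)\,\mathbf 1\{E_u,\ e_u\in S^*\}.
\]
The execution procedure, work rule and any internal selection are the same in both evaluations; only the evaluator differs. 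The returned response and its work are therefore identical across the two sums.

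On $E_u$, split membership by the partitions $\widehat S=(\widehat S\cap S^*)\sqcup(\widehat S\setminus S^*)$ and $S^*=(\widehat S\cap S^*)\sqcup(S^*\setminus\widehat S)$. This gives the pointwise identity
\[
\mathbf 1\{e_u\in\widehat S\}-\mathbf 1\{e_u\in S^*\}
=\mathbf 1\{e_u\in\widehat S\setminus S^*\}-\mathbf 1\{e_u\in S^*\setminus\widehat S\}.
\]
Multiplying by $\mathbf 1\{E_u\}\nu(u)$ and summing yields the displayed difference. Runs that fail, declare failure or exceed the budget contribute zero to every term, which is why both events require a halting returned response. Under soundness, $\widehat S\setminus S^*=\emptyset$, so the false-pass term vanishes and $\widehat F\le F^*$. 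Under completeness, the false-fail term vanishes instead and the inequality reverses.

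For the reliable budgets, use the inverse characterisation in \cref{app:reliable-recovery}: for finite $b$, $b_r^{S,\nu}(\lambda;\alpha)\le b$ exactly when $\max_{p\in\mathcal H_\lambda}F_r^{S,\nu}(p;b)\ge\alpha$. Suppose $b_r^{\widehat S,\nu}(\lambda;\alpha)=b<\infty$. Some $p\in\mathcal H_\lambda$ then has $\widehat F\ge\alpha$. By the pointwise inequality, which holds for every $p$ and $b$, the same $p$ gives $F^*\ge\alpha$, so $b_r^{S^*,\nu}(\lambda;\alpha)\le b$. If $b_r^{\widehat S,\nu}(\lambda;\alpha)=\infty$, the inequality holds trivially. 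The complete case is symmetric.

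The main obstacle is not computational but a bookkeeping point. The pointwise comparison needs the two success probabilities to be evaluated on the same run outcomes, with the same $G$, $\nu$, $\mathcal H_\lambda$ and the same internal controller behaviour. Only the final evaluator may change. If the controller's internal tests referred to the evaluator being swapped, the returned $e_u$ could differ between the two sums and the argument would fail. I would state this fixing explicitly before the summation, following the sentence preceding the proposition.
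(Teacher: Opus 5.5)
Your proposal is correct and follows the paper's own argument. You split each success event into the common part $\widehat S\cap S^*$ and the two disjoint differences, cancel the common part, and transfer the resulting inequality, which holds for every admitted $p$ and budget $b$, to the reliable work budgets; your per-source indicator sums and your explicit requirement that both evaluations use the same returned responses only spell out what the paper states briefly.
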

\begin{proof}
Partition the two success events into their common accepted part and their
two disjoint differences. The common probabilities cancel. Soundness removes
the first difference event and completeness the second. The resulting
probability inequality holds for every admitted $p$ and budget $b$, so any
budget qualifying for the target with the smaller accepted set also qualifies
for the other target.
\end{proof}

The key-value cache in \cref{sec:wri} stores intermediate results computed from a shared prompt prefix.

\begin{proposition}[Exact work savings from a stored cache]
\label{app:cache-reliability}
Fix two matched states of one execution procedure, $r^+$ storing a valid
cache and $r^-$ recomputing it from the same stored prefix. The subscripts $+$ and $-$ below refer to these two states.
Suppose that for every admitted $p,u$ they have the same response and status,
and each finite completed run satisfies
$W_-(p,u)=W_+(p,u)+C$ for one constant integer $C>0$. Cache reading and
reconstruction are included in these counts. Then, for every finite budget $b$, the success probabilities satisfy
\[
F_+^{\tau,\nu}(p;b,q)=F_-^{\tau,\nu}(p;b+C,q).
\]
Whenever the corresponding reliable work budgets are finite,
\[
b_-^{\tau,\nu}(\lambda;q,\alpha)
=b_+^{\tau,\nu}(\lambda;q,\alpha)+C.
\]
\end{proposition}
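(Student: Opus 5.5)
The plan has two steps: first an identity between the run work variables, then an index shift in the success probability and in the definition of the reliable work budget (\cref{app:reliable-region}). Fix an admitted $p$ and a source draw $u$. If the run from $r^+$ returns a response meeting $q$, the hypothesis says the run from $r^-$ returns the same response with the same status, with work $W_+(p,u)+C$. Hence $X_{r^-}^{\tau,q}(p,u)=X_{r^+}^{\tau,q}(p,u)+C$. If the run from $r^+$ fails, does not halt, or returns a response below $q$, then the run from $r^-$ has the same status and response, so both variables equal $\infty$. Under the convention $\infty+C=\infty$ the identity therefore holds for every $(p,u)$.

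Next, for every finite $b$, the events $\{X_{r^+}^{\tau,q}(p,u)\le b\}$ and $\{X_{r^-}^{\tau,q}(p,u)\le b+C\}$ coincide. Summing the indicators against $\nu$ as in \cref{app:source-law} gives $F_+^{\tau,\nu}(p;b,q)=F_-^{\tau,\nu}(p;b+C,q)$. Because $C>0$ and work is nonnegative, $X_{r^-}^{\tau,q}\ge C$ on every success. So for every integer $b'<C$ we get $F_-^{\tau,\nu}(p;b',q)=0$, which is below every $\alpha>0$. Consequently the budgets at which $r^-$ can reach $\alpha$ are exactly the shifted budgets $b+C$ with $b\in\Nat$.

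Finally, take the maximum over the common finite class $\mathcal H_\lambda$. Then $\max_p F_-(p;b+C,q)\ge\alpha$ holds if and only if $\max_p F_+(p;b,q)\ge\alpha$. By the inverse characterisation in \cref{app:reliable-recovery}, the set of qualifying integer budgets for $r^-$ is the set for $r^+$ translated by $C$. Taking least elements gives $b_-^{\tau,\nu}(\lambda;q,\alpha)=b_+^{\tau,\nu}(\lambda;q,\alpha)+C$. The same translation shows that one budget is finite exactly when the other is.

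The only step needing care is the one just isolated: the small budgets $b'<C$ for $r^-$, which have no preimage $b'-C\in\Nat$. This is settled by the observation that every successful $r^-$ run costs at least $C$. Beyond that, the proof only requires that $\nu$, $\mathcal H_\lambda$ and the evaluator are shared by both states, as the matched-comparison conventions of \cref{app:reliable-work-reduction} require.
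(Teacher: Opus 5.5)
Your proof is correct and follows essentially the same route as the paper: a run-by-run shift of the success events by $C$, averaged over $\nu$, followed by the observation that every successful run from $r^-$ costs at least $C$, so the qualifying integer budgets translate exactly by $C$. The only difference is the order of the last two operations: you maximise over $\mathcal H_\lambda$ before taking the least budget, whereas the paper shifts the least budget for each task input and then minimises over the finite class, which gives the same result.
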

\begin{proof}
For each fixed task input $p$ and source input $u$, the two runs have the same response and status, and every completed run from $r^-$ costs $C$ more than its counterpart from $r^+$. Consequently, the run from $r^+$ returns an accepted response within budget $b$ exactly when the run from $r^-$ does so within budget $b+C$. Averaging their equal success indicators over the common source distribution proves the first identity. Every successful run from $r^-$ costs at least $C$. The least integer budget attaining any fixed positive success probability therefore increases by $C$ for each task input. Minimising over the same finite input class proves the second identity.
\end{proof}

Fix one controller, an integer $m\ge1$, and one admitted task input. The controller makes up to
$m$ attempts and returns the first that passes its check, which accepts
exactly the correct answers. Under the source distribution the attempts
succeed independently, each with probability $\theta\in(0,1)$. From the state
with valid precomputed results stored in the cache, each complete attempt costs an integer
$c>0$, including generation, checking, cache access and control. The matched
state has the same underlying prefix, responses and probability distribution,
and reconstructs the identical cache at integer cost $C>0$.

\begin{example}[Repeated attempts with a stored cache]
\label{app:cache-two-attempts}
For a required success probability
$0<\alpha\le1-(1-\theta)^m$, the minimum number of attempts the budget must allow is
\[
k_\alpha=\left\lceil\frac{\log(1-\alpha)}{\log(1-\theta)}\right\rceil.
\]
If the cache is reconstructed once before the first attempt, the reliable
work budgets are $ck_\alpha$ and $ck_\alpha+C$. If it is reconstructed before
every attempt, they are $ck_\alpha$ and $(c+C)k_\alpha$, so the saving
$Ck_\alpha$ is nondecreasing in $\alpha$ and changes in steps. For $\alpha>1-(1-\theta)^m$ neither budget
is finite.
\end{example}
\begin{proof}
Within budget $b$, up to
$\min\{m,\lfloor b/c\rfloor\}$ attempts fit for the state that retains the cache, so it succeeds with probability
$1-(1-\theta)^{\min\{m,\lfloor b/c\rfloor\}}$. This is at least $\alpha$ exactly
when at least $k_\alpha$ attempts fit within the budget, so its least
qualifying budget is $ck_\alpha$. One reconstruction before the first attempt
adds $C$ to every path without changing responses or status, which gives
$ck_\alpha+C$, as in \cref{app:cache-reliability}. Reconstruction before every
attempt makes each attempt cost $c+C$, and the same argument gives
$(c+C)k_\alpha$. At most $m$ attempts are made, so no budget reaches a success
probability above $1-(1-\theta)^m$. The singleton input class leaves no further
minimisation over inputs.
\end{proof}

The runtime search discussed in \cref{sec:wri-search} enumerates candidate
inputs for one task. The controller performs the search and checks the candidates, and both kinds of work are counted.

\begin{proposition}[Searching for a task input]
\label{app:guidance-search}
Fix a deterministic system in a given starting state and an available
computable acceptance test whose accepted responses meet the intended
target. There are $N_\lambda=2^{\lambda+1}-1$ binary strings of length
at most $\lambda$ to consider as task inputs to this system.

Fix a controller that holds the system's description and starting state,
the test and the bounds, and that can decide which task inputs the system
admits. Suppose one complete bounded attempt, including candidate
enumeration, the admission check, reset, decoding, simulation to a fixed
primitive-step limit, verification of any returned response and output of
an accepted one, terminates and costs at most $T$ work. If some admitted
task input of length at most $\lambda$ produces an answer that passes the
test within that limit, this controller, with an empty task input of its
own, returns an accepted response within
\[
W_{\rm setup}+N_\lambda T
\]
work, where $W_{\rm setup}$ covers reading this information and initialisation.
\end{proposition}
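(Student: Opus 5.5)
The argument is an explicit exhaustive-search construction followed by a counting bound. First I count the candidate inputs. Over the binary alphabet there are $2^j$ strings of length $j$, so there are $\sum_{j=0}^{\lambda}2^j=2^{\lambda+1}-1=N_\lambda$ strings of length at most $\lambda$. The controller lists them in a fixed order, say length-lexicographic. After a setup phase whose cost is charged to $W_{\rm setup}$, it handles each listed string $p$ with one bounded attempt. In that attempt it checks whether $p$ is admitted, skipping $p$ if not, and otherwise resets the simulated system to the stored starting state. It then decodes $p$ and simulates the system for at most the fixed primitive-step limit. If the simulated run returns a response within the limit, the controller applies the acceptance test. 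It outputs the response and halts if the test passes, and otherwise moves on to the next string.

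Next I bound the work. By hypothesis each bounded attempt terminates and costs at most $T$, including the enumeration step, the admission check, reset, decoding, simulation, verification and possible output. Skipped or failing candidates therefore also cost at most $T$. Because the step limit truncates runs that would not halt, non-halting candidate runs cause no difficulty. Summing over at most $N_\lambda$ attempts and adding the setup gives total work at most $W_{\rm setup}+N_\lambda T$.

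For correctness, the hypothesis gives an admitted $p^\star$ with $|p^\star|\le\lambda$ whose run, within the step limit, returns an answer that passes the test. The enumeration reaches $p^\star$ after at most $N_\lambda$ attempts unless an earlier candidate has already passed. In either case the controller returns a response that passed the test, and since accepted responses meet the intended target, that response is acceptable. Because the controller's own task input is empty, the bound holds at input bound $0$ for the controller.

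The main obstacle is making sure that every cost is genuinely covered by $T$ or by $W_{\rm setup}$, and that determinism holds. Two points need care. First, resetting to the starting state must restore any retained-state writes that an earlier simulated run made, so that each candidate is simulated exactly as the original system would run it on that input. Second, fixed external inputs must be supplied identically to every simulated run. I would make both explicit by having the controller keep a pristine copy of the starting state, which is read during setup, and copy it back before each attempt, with that copying counted inside $T$.
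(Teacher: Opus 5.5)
Your proposal is correct and follows the paper's proof: enumerate the $N_\lambda$ candidate strings, run the bounded attempt on each (rejecting non-admitted ones), note that the step limit keeps any attempt from blocking the enumeration, and sum the per-attempt bound $T$ over a list that contains the successful input, adding $W_{\mathrm{setup}}$. Your explicit string count and the care about restoring the starting state are consistent with the hypothesis, which already includes reset in the cost $T$ of each attempt.
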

\begin{proof}
Enumerate the finite strings and perform the bounded attempt for each,
rejecting inputs not admitted by the system. No attempt blocks the enumeration. The successful
one occurs in the list, so summing the per-attempt bound gives the result.
\end{proof}

Each additional symbol allowed in the searched inputs roughly doubles the
number $N_\lambda$ of candidates. Levin's universal search \cite{levin1973}
refines such enumeration by dividing running time among candidates
according to their length.

\begin{proposition}[Expected work of independent attempts]
\label{app:retries}
Suppose complete attempts terminate, are independent with the same
distribution, reset to the same condition and succeed with probability
$\theta>0$. The work of each attempt, successful or not, includes reset,
generation, checking and control. For a successful attempt it also includes
returning the accepted response. The mean work $\mu$ of an attempt is finite.
An available exact checker stops the process at its first success or after
$m$ attempts. The success probability is $1-(1-\theta)^m$, and the mean
work of the attempts is
\[
\mu\sum_{j=0}^{m-1}(1-\theta)^j
=\mu\,\frac{1-(1-\theta)^m}{\theta}.
\]
Any work before the first attempt is added separately. With an unlimited
independent source sequence and the same assumptions, the mean work of the
attempts to first success is $\mu/\theta$.
\end{proposition}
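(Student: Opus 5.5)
The plan is to model the stopped process as a sequence of i.i.d. attempts and to compute the probability of stopping and the expected accumulated work by summing over the attempt index. Let $A_1,A_2,\ldots$ denote the independent attempts, each with the same distribution. Let $S_j$ be the event that attempt $j$ succeeds, so $\Pr[S_j]=\theta$. Let $V_j$ be the work of attempt $j$, with $\mathbb E V_j=\mu<\infty$. The controller stops at $N=\min\{j: S_j\}\wedge m$. The process fails exactly when none of the first $m$ attempts succeeds. By independence this has probability $(1-\theta)^m$, so the success probability is $1-(1-\theta)^m$.

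For the work, I will write the total work of the attempts as $\sum_{j=1}^{m}V_j\mathbf 1\{N\ge j\}$. The event $\{N\ge j\}$ means that attempts $1,\ldots,j-1$ all failed. It is determined by the first $j-1$ attempts, so it is independent of $V_j$, because attempts reset to the same condition and are independent. Hence $\mathbb E[V_j\mathbf 1\{N\ge j\}]=\mu(1-\theta)^{j-1}$. Summing over $j=1,\ldots,m$ by linearity gives $\mu\sum_{j=0}^{m-1}(1-\theta)^j=\mu(1-(1-\theta)^m)/\theta$ by the geometric series.

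The one point needing care is the definition of $\mu$ as the mean work of an attempt, successful or not. The distribution of $V_j$ is correlated with $S_j$, since a successful attempt also pays to return the response. The independence argument above still works, because it only uses independence of $V_j$ from the earlier attempts, not from $S_j$. I expect this to be the main obstacle to state cleanly. If the attempts were instead modelled as nonhalting or dependent, the argument would break. That is why the hypothesis requires that attempts terminate, that the checker is exact, and that work before the first attempt is added separately.

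For the unlimited case I will let $m\to\infty$. Either monotone convergence applied to $\sum_j V_j\mathbf 1\{N\ge j\}$ with nonnegative terms, or direct summation of the infinite geometric series, gives $\mu/\theta$. The stopping time is finite almost surely because $\theta>0$.
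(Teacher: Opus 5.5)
Your proposal is correct and follows the paper's proof. Both write the expected total as a sum over attempt indices and use that reaching attempt $j$ depends only on earlier attempts, so it is independent of $V_j$ even though $V_j$ may be correlated with $S_j$; both then sum the geometric series and pass to the unlimited case by monotone convergence. The paper phrases this as Wald's identity and notes that the unlimited case needs an infinite product of independent draws, beyond the finite source sets of \cref{app:source-law}, which your sequence $A_1,A_2,\ldots$ implicitly assumes.
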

\begin{proof}
The process reaches attempt $j+1$ exactly when the first $j$ attempts failed,
an event of probability $(1-\theta)^j$. The cost of attempt $j+1$ is independent of those preceding
outcomes and has mean $\mu$, even if its cost is correlated with its own
success. Summing expected costs over the attempted stages gives $\mu$ times
the expected number of attempts, as in Wald's identity. The all-fail event has
probability $(1-\theta)^m$. For the unlimited process, monotone convergence gives
the limit of the nonnegative cost sum, and the geometric series has sum $1/\theta$.
The unlimited case goes beyond the finite source sets of \cref{app:source-law} and uses
the infinite product of independent draws.
\end{proof}

Here total work counts the actual work of unsuccessful runs, whereas
$X_r^{\tau,q}$ assigns them infinity. With unlimited attempts and $\theta=1/4$,
successful attempts costing four units and unsuccessful ones costing ten
give mean work per attempt $\mu=17/2$ and mean total work $34$. Assigning
every attempt the work of a successful one would give the incorrect value
$16$. Expected work is finite although no finite work bound covers all runs.

\subsection{Reducing work through retained information}
\label{app:cluster-miniature}

The first construction gives the work saving from a given retained
routine (\cref{sec:miniature}). The next constructs a cumulative history in which
intermediate results are kept (\cref{sec:deepening}). Finite search then
selects a routine by the work it requires on a set of training tasks (\cref{sec:wri-search}).

\subsubsection{Reducing work with a short program}

We use the cells $a_i(t)$ and the additive rule $a_i(t+1)=a_i(t)\oplus a_{i+1}(t)$ of
\cref{sec:miniature}, with its standard power-of-two identity \cite{martin1984}. The two retained
states, $r^-$ and $r^+$ below ($r_{\mathrm{sim}}$ and $r_{\mathrm{short}}$ in \cref{sec:miniature}), store the direct simulator and the shortcut.

\begin{proposition}[A shortcut for an additive cellular automaton]
\label{app:corollary}
Let $T=2^n$.  For every binary window $x=a_0(0)\cdots a_T(0)$ given as public
context, let $\tau_{n,x}$ require response $a_0(T)$ from a serial tape initially
positioned at $a_0(0)$ and terminated after $a_T(0)$ by a fixed end marker.  There is a transition-complete system $g_{\rm CA}$ (\cref{app:transition-complete}) with matched states
$r^-,r^+$ differing only in a fixed-width method field that encodes one of
two fixed finite routines, interpreted by the common finite control.  State
$r^-$ encodes the direct one-step simulation routine and $r^+$ encodes one
fixed power-of-two shortcut routine.  Only the empty task input succeeds, and
\[
w_{g_{\rm CA},r^-}^{\tau_{n,x},1}(0)=\Theta(T^2),\qquad
w_{g_{\rm CA},r^+}^{\tau_{n,x},1}(0)=\Theta(T),
\]
while the shortcut description is bounded independently of $n$.  Hence
\[
\Delta_{\rm work}(r^+{:}r^-;\tau_{n,x},1,0)=\Theta(T^2).
\]
\end{proposition}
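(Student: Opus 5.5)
The plan is to build $g_{\rm CA}$ explicitly as a transition-complete machine: one finite control that reads a fixed-width method field in the retained state and dispatches to one of two fixed routines. The interface admits only the empty task input and declares failure on any nonempty input, so $w(0)$ is the work of the single run on $p=\eps$. The work rule counts every transition, as in \cref{app:transition-complete}. The proof then needs an upper and a lower bound for each routine, plus a check that the shortcut is correct.

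\emph{Shortcut state $r^+$.} The power-of-two identity $a_0(2^n)=a_0(0)\oplus a_{2^n}(0)$ comes by induction on $n$. The base case is the rule itself. For the step, apply the identity twice over $2^{n}$ steps, using that the rule is linear and shift-invariant:
\[
a_0(2^{n+1})=a_0(2^n)\oplus a_{2^n}(2^n)=a_0(0)\oplus a_{2^n}(0)\oplus a_{2^n}(0)\oplus a_{2^{n+1}}(0).
\]
The two middle terms cancel. The routine reads $a_0(0)$ into its finite control, scans right until the end marker, steps back one cell, XORs in $a_T(0)$ and emits one symbol. This takes $T+O(1)$ transitions, which gives $O(T)$. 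For the matching lower bound, the answer depends on $a_T(0)$, and the head starts $T$ cells away. So any successful run must make at least $T$ head moves, and this holds for this particular routine as well. The description size is $O(1)$ because the routine is fixed, independent of $n$.

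\emph{Simulation state $r^-$.} The direct routine performs the $T$ in-place sweeps. Sweep $t$ replaces $a_i$ by $a_i\oplus a_{i+1}$ for $i=0,\dots,T-t$, moving left to right and holding the one carried bit in finite control, so the new value can be written before the old one is lost. Sweep $t$ shortens the active window by one, and the routine marks the new right end with a fixed extra tape symbol. After sweep $t$ it returns the head to cell $0$. Each sweep therefore costs $\Theta(T-t+1)$ transitions. Summing gives $\sum_{t=1}^T\Theta(T-t+1)=\Theta(T^2)$, which matches the $T(T+1)/2$ cell updates in \cref{sec:miniature}. The lower bound is immediate for this fixed routine: it performs exactly $T(T+1)/2$ updates, and each update is at least one transition. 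No general lower bound over all methods is needed, because the least work is a minimum over inputs, and only $\eps$ is admitted.

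\emph{Conclusion.} Both values are finite. The states are matched: same control, same width of method field, same interface and work rule. So the matched difference in \cref{app:matched} applies and gives $\Theta(T^2)-\Theta(T)=\Theta(T^2)$.

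I expect the main obstacle to be the bookkeeping for the simulation routine: implementing in-place updates on a single serial tape with the shrinking-window marker, and confirming that the return sweeps add only $O(T)$ per step, so the total stays $\Theta(T^2)$ rather than growing further. Everything else is routine.
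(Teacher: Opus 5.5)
Your proposal is correct and follows essentially the same route as the paper. Both prove the power-of-two identity by induction using shift invariance, and both bound the direct simulation by summing constant-cost updates over the triangular cone, giving $T(T+1)/2=\Theta(T^2)$. Both bound the shortcut by a single serial scan of cost $\Theta(T)$, and both note that since only the empty input succeeds, each state's unique deterministic path is its least work. Your extra implementation details, namely in-place sweeps with a shrinking end marker and stepping back to read $a_T(0)$, add only constant work per cell or per run and leave these bounds unchanged.
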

\begin{proof}
The rule gives $a_i(1)=a_i(0)\oplus a_{i+1}(0)$ for every initial row and
cell $i$. If $a_i(t)=a_i(0)\oplus a_{i+t}(0)$ for every row and cell at
$t=2^k$, applying the same identity to the row at time $t$ gives
\[
\begin{aligned}
a_i(2t)&=a_i(t)\oplus a_{i+t}(t)\\
&=a_i(0)\oplus a_{i+t}(0)\oplus a_{i+t}(0)\oplus a_{i+2t}(0)\\
&=a_i(0)\oplus a_{i+2t}(0),
\end{aligned}
\]
since the two copies of $a_{i+t}(0)$ cancel. By induction,
$a_0(T)=a_0(0)\oplus a_T(0)$.

The direct implementation updates the triangular dependency cone, executing a
constant-work update at each site of the cone.  Its work is bounded above and below
by constant multiples of $1+\cdots+T=\Theta(T^2)$.  The shortcut stores
$a_0(0)$, scans the serial window to its fixed end marker while retaining the
last bit $a_T(0)$, adds the two endpoint bits modulo two and emits the result.  Its finite control is
independent of $n$, and the serial scan is both $O(T)$ and $\Omega(T)$.  Because only the empty task input can
succeed and each state selects one deterministic path, these path counts are
the least-work values.
\end{proof}

Within this system, a fixed-size routine saves unbounded work as
the horizon grows, without a growing literal answer table. This is a comparison
of two specified methods. Discovering the routine remains a separate
acquisition task. In the following example, the stored component is instead
produced by the earlier tasks of the history.

\subsubsection{Cumulative deepening}

A history shows cumulative deepening when, along the compared stages, the target depths from the original persistent condition are finite and strictly increasing at a fixed margin, and inherited products of earlier work give a positive matched work saving at each stage (\cref{sec:deepening}). Each saving compares finite least-work values at the same input bound, with the inherited products kept available across stages instead of being given again as task input.

Fix $N\ge2$, a binary word $x_1\cdots x_N$, and a serial streaming
interface. Task $\tau_n$, for $1\le n\le N$, has public context
$x^{(n)}=x_1\cdots x_n\#$ and accepts the one-bit parity of that prefix.
The growing prefix is public task context, so the baseline from $r_0$ sees the same current prefix as the inherited state. Treating it as persistent external state would instead compare against its original value under \cref{sec:deepening}. The same $x^{(n)}$ is used in both
variants of every comparison for task $n$.

One fixed deterministic process $g$ has a one-bit accumulator and an actual
input-head position in its retained configuration. A bit transition xors
the scanned bit into the accumulator and advances the head one cell.
At $\#$, one transition emits the accumulator and halts, leaving the
head at the marker. These are the counted primitive transitions. The
only admitted task input is $\eps$. Fix a tape capacity of at
least $N+1$ before the history. Valid configurations include every head
position in that capacity and either accumulator value. The original
state $r_0$ has accumulator zero and head at the first cell.

Between successive tasks, a common external rule replaces the old marker
by the next bit and places a new marker one cell later, without moving
the retained head. The public tape is physically continuous across the
history, but its current contents specify $x^{(n)}$ in the task protocol.
A serial-tape system retaining only a numerical cursor must count the work of moving its head to that position. A different input interface must account for any loading or copying it requires. Here the external extension
is outside the counted process and is common to both variants.

\Needspace{6\baselineskip}
\begin{example}[Computing successive prefix parities]
\label{app:prefix-history}
Let $r_{n-1}$ be the state left by successful task $n-1$. For $n\ge2$,
replace its head position and accumulator by their original values, keeping
$x^{(n)}$ and all other conditions fixed, which gives $r_0$. For every $\lambda\ge0$ and
every margin $s\ge0$,
\[
\begin{aligned}
D_n=d_{g,r_0}^{\tau_n,1}(s)&=n+1,\\
w_{g,r_{n-1}}^{\tau_n,1}(\lambda)&=2,\\
w_{g,r_0}^{\tau_n,1}(\lambda)
-w_{g,r_{n-1}}^{\tau_n,1}(\lambda)&=n-1>0.
\end{aligned}
\]
Thus the actual history satisfies the cumulative-deepening criterion on
$2,\ldots,N$.
\end{example}
\begin{proof}
After task $n-1$, the accumulator is the parity of
$x_1\cdots x_{n-1}$ and the head is at position $n$. This holds
after the first task and is preserved by one bit transition and the
end-marker transition on every extension. Hence each inherited-state
execution takes two transitions and returns the correct next parity.
From $r_0$ the same task instead executes $n$ bit transitions and one
end-marker transition, giving $n+1$. Replacing them restores this path.

The only admitted task input is empty, so each successful path gives
the exact least work for every input bound. The empty input also lies within every
input bound used for depth, giving the stated finite depth at every
margin. Each earlier successful task produces and retains the
head position and accumulator used next. It is not given again as counted input. The positive
matched savings concern the same targets whose depths from the original starting condition strictly increase. These are the two conditions for cumulative deepening in
\cref{sec:deepening}. The construction works for every chosen
finite $N$ with that capacity.
\end{proof}

\subsubsection{Finite search for work-reducing routines}
\label{app:cluster-wri-search}

The cellular-automaton example in \cref{app:corollary} starts with the useful routine already retained. For
the search described in \cref{sec:wri-search}, an acquisition process
instead enumerates a finite class of candidate strings, executes them
and counts their later task work under one fixed protocol, then emits a
state containing a minimiser.

Fix a finite family of binary tasks
$(\tau_i)_{i=1}^m$, with public contexts tagged by the task index, nonempty decidable
accepted sets and target level one, and fix $L\in\Nat$.

\begin{proposition}[Finite search for work-reducing routines]
\label{app:wri-search}
There exist a fixed
transition-complete deterministic system $g_{\rm s}$ for performing these tasks and a fixed
deterministic acquisition system $A_{\rm s}$ for producing its states, with the following properties.
The valid states of $g_{\rm s}$ include
\[
r_z=(\rho,z),\qquad |z|\le L,
\]
for one fixed nonvaried field $\rho$, with $r_{\eps}$ the baseline state.
The comparison includes the complete encoding of the candidate field, and reading and decoding it count in the work of the run. On
each task, $r_{\eps}$ runs a fixed exhaustive baseline directly and halts with
an accepted response. For $z\ne\eps$, $g_{\rm s}$ runs the procedure encoded by $z$, alternating its steps with those of the same exhaustive
baseline. If the candidate halts, the fixed acceptance test runs to completion. If accepted, that response is returned. Otherwise the baseline resumes. The baseline therefore guarantees termination for every
candidate. Executing both procedures, switching between them and checking the response are counted by the
same work rule.

Evaluate every state at input bound zero and write its total work across the tasks as
\[
W(z)=\sum_{i=1}^m w_{g_{\rm s},r_z}^{\tau_i,1}(0).
\]
The acquisition system exhaustively evaluates the finite set
$\{z:|z|\le L\}$, uses a fixed tie-breaking rule, and emits a state
$r_{z^*}$ with
\[
z^*\in\arg\min_{|z|\le L}W(z).
\]
This acquisition halts after finite work, and
\[
W(z^*)\le W(\eps).
\]
If the inequality is strict, then for at least one $i$,
\[
\Delta_{\rm work}
(r_{z^*}{:}r_{\eps};\tau_i,1,0)>0,
\]
so the selected state difference is work-reducing information for at
least one task in the evaluated family.
\end{proposition}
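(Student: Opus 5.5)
The plan is to build the two systems explicitly and then read off the inequalities from the finiteness of every quantity involved. First I construct $g_{\rm s}$ as a transition-complete interpreter. Its fixed field $\rho$ holds three things: a step-interleaving control, a universal step simulator for candidate codes, and the exhaustive baseline. For task $\tau_i$, the baseline enumerates $\Sig^*$ in length-lexicographic order and runs the decidable acceptance test to completion on each candidate.

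Termination of the baseline on each task needs two facts. Each accepted set is nonempty, so an accepted response occurs at some finite position in the enumeration. Each test is decidable, so every check halts. Together these give a finite run from $r_\eps$, hence $w_{g_{\rm s},r_\eps}^{\tau_i,1}(0)<\infty$.

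For $z\ne\eps$, the interleaving alternates one candidate step with one baseline step. A candidate halt triggers a full acceptance check. Invalid code is treated as a nonhalting candidate. The run therefore ends no later than a bounded multiple of the baseline's own finishing time, plus finitely many completed checks. The number of such checks is finite because the candidate halts at most once, and each check terminates. So every $r_z$ halts with an accepted response, and $w_{g_{\rm s},r_z}^{\tau_i,1}(0)$ is finite.

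The statement fixes input bound $0$, so the only admitted input is $\eps$. Each least-work value is therefore just the work of one deterministic run, with no minimisation over inputs needed. This makes $W(z)$ a finite integer for every $z$, and exactly computable by running the $m$ tasks to completion.

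Next I define $A_{\rm s}$, which runs in three stages.
\begin{enumerate}
\item It enumerates the finitely many $z$ with $|z|\le L$, at most $(|\Sig|^{L+1}-1)/(|\Sig|-1)$ of them.
\item For each $z$ and each $i$, it simulates $g_{\rm s}$ from $r_z$ on $\tau_i$ while counting transitions, which yields $W(z)$.
\item It keeps the first minimiser under a fixed tie rule and emits $r_{z^*}$.
\end{enumerate}
Every simulation halts by the previous step, so the finitely many evaluations, the comparisons and the output of a finite state all take finite work, and $A_{\rm s}$ halts. Since $\eps$ is in the enumerated set, $W(z^*)=\min W\le W(\eps)$.

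The last claim follows by contraposition on a finite sum. Suppose $w_{r_{z^*}}^{\tau_i,1}(0)\ge w_{r_\eps}^{\tau_i,1}(0)$ held for every $i$. Summing over $i$ would give $W(z^*)\ge W(\eps)$, contradicting strictness. So some $i$ has a strictly smaller value, and both values are finite, which makes $\Delta_{\rm work}(r_{z^*}{:}r_\eps;\tau_i,1,0)>0$.

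To call this WRI under \cref{app:wri}, the comparison must be matched. Both states share $\rho$, the protocol, the interface and the work rule, and differ only in the candidate field. I will ensure the field is encoded with a length header, so that reading and decoding it are counted, as the statement requires.

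I expect the main obstacle to be the interleaving semantics, in particular guaranteeing finite work for every candidate, including candidates that never halt or that halt with wrong answers. The key point is that a candidate's acceptance check is run to completion and the baseline then resumes. As a result, a wrong halt costs only one terminating check and cannot prevent the baseline from finishing. I will also make the simulator treat any malformed or invalid code as a nonhalting candidate, so that no case escapes this argument.
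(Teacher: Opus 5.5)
Your proposal is correct and follows the paper's proof essentially step for step. The shared ingredients are a baseline that enumerates responses and runs the total deciders, interleaved with the candidate so that a rejected candidate halt costs only one terminating check; the observation that input bound zero leaves a single deterministic run, so each least-work value is that run's work; exhaustive evaluation of the finitely many $z$ with $|z|\le L$; and the summand-wise contradiction that yields some $i$ with a positive matched saving. The only difference is cosmetic: you treat invalid code as a nonhalting candidate where the paper disables that branch, which changes the work counts but none of the conclusions.
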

\begin{proof}
Because the family is finite and each accepted set is decidable and nonempty,
a fixed baseline for the family can enumerate finite response strings
in a computable order and stop at the first accepted response.  Hard-code the
finite task tags and their deciders into one fixed baseline branch of
$g_{\rm s}$. For a nonempty candidate field $z$, a fixed program reads and executes the
procedure encoded by $z$ on the tagged public context. A syntactically
invalid code disables that candidate branch. The system
alternates candidate and baseline transitions. If the candidate halts,
its terminating acceptance test is completed and the response is returned
only if accepted. Otherwise the baseline resumes. The decider is total,
so this check cannot prevent eventual baseline progress.  Hence every state--task run terminates,
and all transitions used to run the candidate, run the baseline, switch between them and check responses are
included in the transition-complete work count.

At input bound zero the empty string is the only admitted
task input.  Thus, for every $z$ and $i$, the work of this unique
deterministic successful run is exactly
$w_{g_{\rm s},r_z}^{\tau_i,1}(0)$.  There are finitely many strings with
$|z|\le L$ and finitely many tasks, so $A_{\rm s}$ can simulate every such
run to completion, count its work, compute each finite sum $W(z)$, and emit
a minimiser under the fixed tie-breaking rule.  This uses finite acquisition
work.

Since $\eps$ is among the candidate fields, minimality gives
$W(z^*)\le W(\eps)$.  If the inequality is strict, not every summand can be
at least its baseline counterpart.  Hence for some $i$,
\[
w_{g_{\rm s},r_{z^*}}^{\tau_i,1}(0)
<
w_{g_{\rm s},r_{\eps}}^{\tau_i,1}(0),
\]
which is exactly the stated positive work difference between matched retained states
under \cref{app:matched,app:wri}.
\end{proof}

Minimising total work across the training tasks can increase work on individual tasks. To prevent this, restrict the same finite enumeration to candidates satisfying
$w_{g_{\rm s},r_z}^{\tau_i,1}(0)\le
w_{g_{\rm s},r_{\eps}}^{\tau_i,1}(0)$ for every $i$.
The feasible set contains the baseline $z=\eps$. At input bound zero, each state has one run on each task, and that run returns an accepted response. Its work is therefore the least work for both binary levels, so \cref{app:wri-to-grasp} gives weak grasp dominance at this input bound on every tested task. Neither version establishes optimality outside the bounded retained-state class or the evaluated task family.

For the length-penalised search in \cref{sec:wri-search}, extend the same construction to all finite candidate fields and fix a rational $\gamma>0$. Every candidate still terminates through the baseline. Since work is nonnegative, any $z$ with $|z|>W(\eps)/\gamma$ has $W(z)+\gamma|z|>W(\eps)$. The acquisition procedure first computes $W(\eps)$, then evaluates all codes of length at most $\lfloor W(\eps)/\gamma\rfloor$. A minimiser $z^*$ over this finite set is a global minimiser of the penalised objective, and
\[
W(z^*)+\gamma|z^*|\le W(\eps),
\qquad W(\eps)-W(z^*)\ge\gamma|z^*|.
\]
The conclusion concerns this fixed execution procedure and the training tasks.

\clearpage
\subsection{Preserving capability}

The example with shared storage and participant memory shows how a system can retain a capability while its participants are replaced (\cref{app:retention,sec:boundaries}). The switching results then bound the length of
a replacement checkpoint, and the work of producing it, when the checkpoint must preserve functions that later work
depends on, as discussed in \cref{sec:switching}.

\subsubsection{Maintaining capability}

Fix $k\ge1$ and $z\in\{0,1\}^k$.  There is a fixed transition-complete system
$g_k$ whose valid states are pairs $r=(R,I)$, where each slot is either
blank ($\bot$) or a $k$-bit word and, when both are nonblank, they agree.  The same system
stores and emits whichever $k$-bit word its state contains,
so $z$ is not hard-coded in $g_k$.  The persistent slot $R$ represents shared storage, and
the slot $I$ represents the current participant's memory.  Only the empty task input
can succeed.

If both slots are nonblank, $g_k$ emits $I$.  If exactly one slot is nonblank,
it copies that word into the missing slot and emits it. If both are blank it
fails.  For a fixed constant $c$, choose the implementation so serial emission costs $k+c$ work and a
copy of one missing slot contributes exactly $k$ additional work.  For
$\tau_z=(\eps,\{z\})$, define the states with both copies, only the shared copy, only the participant's copy and no copy,
\[
r_{11}=(z,z),\quad r_{10}=(z,\bot),\quad r_{01}=(\bot,z),\quad r_{00}=(\bot,\bot).
\]

Define
\[
\operatorname{Turn}(R,I)=(R,\bot),\qquad
\operatorname{Del}(R,I)=(\bot,I).
\]
Thus turnover of the participant gives $\operatorname{Turn}(r_{11})=r_{10}$, deletion of the shared copy gives $\operatorname{Del}(r_{11})=r_{01}$, and
loss of both copies gives $r_{00}$.

\begin{example}[Maintaining capability through turnover]
\label{app:retention}
For the system $g_k$ and task $\tau_z$ above,
\[
w_{g_k,r_{11}}^{\tau_z,1}(0)=k+c,
\qquad
w_{g_k,r_{10}}^{\tau_z,1}(0)=w_{g_k,r_{01}}^{\tau_z,1}(0)=2k+c,
\]
while level one is unattainable from $r_{00}$ at input bound zero.
A successful run from either one-copy state reconstructs $r_{11}$.
Repeated run--turnover cycles preserve the target capability across
arbitrarily many participant replacements, provided each post-turnover run
is allowed at least $2k+c$ work.  The $k$-work gap is implementation-relative.
\end{example}
\begin{proof}
From $r_{11}$ the deterministic successful path only emits the $k$ stored bits,
so its work is $k+c$.  From either one-copy state it first performs exactly
$k$ copy transitions and then the same emission, giving $2k+c$.  From $r_{00}$
there is no word to emit and the run fails.  Only the empty string is admitted
at input bound zero, so these path counts are the
least-work values.  A run from either one-copy state fills the blank slot and
therefore leaves the terminal state $r_{11}$.  The turnover and deletion identities are immediate.
Induction over run--turnover cycles gives persistence through arbitrarily many
participant replacements.
\end{proof}

The example makes explicit the point in \cref{sec:boundaries} that a budget
must also cover the work of maintenance. For this process, a budget
$k+c\le w<2k+c$ permits a response from $r_{11}$ but not after either copy
is lost. In the restoring procedure, with at least $2k+c$ work per run, one remaining copy allows the system
to answer and restore the missing copy, so the capability persists under
$\{\operatorname{Turn},\operatorname{Del}\}$ in the sense of \cref{sec:boundaries},
with $\mathcal S=\{r_{11}\}$.

Under turnover alone restoration is not needed. A
variant that emits the shared copy without restoring the participant's copy
uses $k+c$ work after turnover and persists under $\{\operatorname{Turn}\}$
with $\mathcal S=\{r_{11},r_{10}\}$, but a later deletion of the shared copy then
leaves $r_{00}$.

With one disturbance between completed runs, the original procedure uses an extra $k$ work to restore the lost copy after either turnover or deletion. Both copies are then available before the next disturbance.
Each copy saves $k$ work given the other,
$\Delta_{\rm work}(r_{11}{:}r_{10};\tau_z,1,0)=\Delta_{\rm work}(r_{11}{:}r_{01};\tau_z,1,0)=k$,
while losing both copies removes attainability.

More generally, removing a component without changing performance does not
by itself locate the capability. Another stored copy or a different
procedure may preserve performance.

\subsubsection{Replacing components}
\label{app:cluster-switching}
In the preceding example, shared storage and participant memory preserve capability within one system. Here useful
structure can remain externally accessible, and switching changes what
is still available to the replacement. The replacement may use another
implementation or an adapter. The bounds below concern the required responses
it must give, whatever method produces them.

The service below returns one bit in response to each required query. Its
replacement checkpoint encodes the whole retained state of the replacement,
including any program and stored data. The checkpoint must give those bits using
only the resources that remain available after switching. We first bound the checkpoint's encoded length,
then use the work required to emit it to bound the migration work.

Fix one transition-complete finite-description system
$g_{\rm port}$ whose valid checkpoints have the form
$r=\code{f,\sigma}$, where $f\in\{A,B\}$ names the architecture and $\sigma$ is the rest of the checkpoint.  For $T\ge1$, let
$R=\{i_1<\cdots<i_m\}\subseteq\{1,\ldots,T\}$ be the indices of later
queries that are actually required, and let
$z_R=z_1\cdots z_m\in\{0,1\}^m$ be their required response table, where $z_j$ is the
bit required by query $i_j$. For each
finite persistent external state $y$, assume the query contexts
$x_i(T,R,y)$ are uniformly computable from $(T,R,y)$, and let the work bound $W_T$ for each query be computable from $T$.  Only the empty task input
is used in these service tests. In the notation of \cref{sec:switching}, the required responses are
$\Phi=\{(x_{i_j}(T,R,y),z_j):j\le m\}$, with input bounds zero and work bounds $W_T$.
Here $T$ counts possible queries and $m=|R|$ counts required queries. The main-text case requiring every response has $m=T$.

For $f\in\{A,B\}$ define the function-preserving checkpoint set
\[
S_f(T,R,z_R;y)=\Bigl\{\code{f,\sigma}\in\mathcal R_{g_{\rm port}}:
\begin{array}{l}
 g_{\rm port}(\code{f,\sigma};\code{x_{i_j}(T,R,y),\eps})=z_j,\\
 \Work_{g_{\rm port}}(\code{f,\sigma};
       \code{x_{i_j}(T,R,y),\eps})\le W_T
 \quad\text{for every }j\le m
\end{array}
\Bigr\}.
\]
Membership in $S_f$ is defined using $z_R$. The system receives no separate query-input field containing $z_R$. It can obtain the table through the checkpoint or
persistent external state. The queries and their work bound are
reconstructible from $(T,R,y)$ without an additional $z_R$ input.

\begin{lemma}[Checkpoint length needed to preserve the required responses]
\label{app:portability-representation}
There is a fixed constant $c$ such that every
$r_B\in S_B(T,R,z_R;y_B)$ satisfies
\[
|r_B|\ge \C(z_R\given T,R,y_B)-c\log(T+2)-c.
\]
\end{lemma}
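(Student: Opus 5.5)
The plan is to show that any checkpoint $r_B\in S_B(T,R,z_R;y_B)$, together with $(T,R,y_B)$, yields a short reference program for $z_R$ on $U_0$. The description length of $z_R$ is then bounded by $|r_B|$ plus logarithmic overhead. Fix once and for all a reference program $P_{\rm dec}$ that does the following. It receives a self-delimiting encoding of a candidate checkpoint $r$ and has the condition $\code{T,R,y_B}$ available. For each $j=1,\ldots,m$ it computes the query context $x_{i_j}(T,R,y_B)$, which is uniformly computable by assumption, and the work bound $W_T$, which is computable from $T$. It then simulates the fixed transition-complete system $g_{\rm port}$ from state $r$ on $\code{x_{i_j}(T,R,y_B),\eps}$ for at most $W_T$ transitions, records the returned bit, and finally outputs the concatenation of these $m$ bits.

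Transition-completeness makes this simulation terminate, because every transition counts one unit of work, so a run within $W_T$ uses at most $W_T$ steps. Membership of $r_B$ in $S_B$ guarantees that each simulated run halts within the bound and returns exactly $z_j$. The output is therefore $z_R$. The description of $g_{\rm port}$, the query-context map and the bound $W_T$ are all hard-wired into $P_{\rm dec}$, so its length is a constant independent of $T,R,y_B$ and $z_R$.

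Next I would account for the length of the program $P_{\rm dec}$ followed by $\sd{r_B}$. Since $R$ and $T$ are in the condition, $m$ is known and needs no encoding. The only overhead is the self-delimiting code for $r_B$, which costs $|r_B|+O(\log(|r_B|+2))$. This gives $\C(z_R\given T,R,y_B)\le |r_B|+O(\log(|r_B|+2))+c_0$.

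The main obstacle is converting this into the stated form, where the logarithmic term is $c\log(T+2)$ rather than $\log|r_B|$. I would split into two cases. In the first case $|r_B|\ge 2T+c'$. Then the claim holds trivially, because $\C(z_R\given T,R,y_B)\le m+O(1)\le T+O(1)$ (print $z_R$ literally, with length known from $R$), so the right-hand side is at most $|r_B|$ for suitable $c$. In the second case $|r_B|<2T+c'$. Then $\log(|r_B|+2)=O(\log(T+2))$, and the displayed inequality rearranges to $|r_B|\ge \C(z_R\given T,R,y_B)-c\log(T+2)-c$. Taking $c$ as the maximum of the constants from both cases finishes the proof.

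Alternatively, the length of $r_B$ can be supplied in only $O(\log(T+2))$ bits in the second case, and a flag bit distinguishes the two cases. I would check that the tuple-code conventions of \cref{app:model} give these overheads uniformly.
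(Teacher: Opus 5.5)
Your proposal is correct and follows the paper's proof essentially step for step. A fixed reference program simulates $g_{\rm port}$ from the literal checkpoint on the required queries within $W_T$, which gives $\C(z_R\given T,R,y_B)\le |r_B|+O(\log(|r_B|+2))$. The case split based on $\C(z_R\given T,R,y_B)\le T+O(1)$ then converts the $\log|r_B|$ overhead into $O(\log(T+2))$, and your threshold $2T+c'$ in place of the paper's $|r_B|\ge T$ makes no material difference.
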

\begin{proof}
Given $(T,R,y_B)$ and a literal $r_B$, one fixed program on $U_0$
reconstructs $W_T$ and the contexts $x_{i_j}(T,R,y_B)$, simulates $g_{\rm port}$ for the bounded runs, and collects the emitted bits
in increasing index order.  If $r_B\in S_B(T,R,z_R;y_B)$, the recovered string
is exactly $z_R$.  Hence
\[
\C(z_R\given T,R,y_B)\le |r_B|+O(\log(|r_B|+2)).
\]
Since $\C(z_R\given T,R,y_B)\le m+O(1)\le T+O(1)$, either
$|r_B|\ge T$, in which case the claimed inequality is immediate after adjusting the
fixed constant, or $\log(|r_B|+2)=O(\log(T+2))$.  Rearranging gives the result.
\end{proof}

The bound therefore also covers any decoder, emulator or adapter in the
checkpoint. A small checkpoint is possible only
if the required response table has a sufficiently short description given
$T$, $R$ and the post-switch external state. In the construction of \cref{app:portability}, keeping the table
externally available permits a constant-size checkpoint that routes queries
to it.

The next theorem compares continuing with external access, making a direct switch,
and switching while preserving the required service. The representation bound of \cref{app:portability-representation} gives a lower bound on the work required for a switch
that preserves the service. A copying procedure shows that such a switch is feasible
and gives an upper bound. Constant-work continuation also
requires an input format that allows the selected mode to be read without scanning
irrelevant external data. The proof specifies that format.

\begin{theorem}[Switching cost when required responses must be preserved]
\label{app:portability}
There are a fixed system $g_{\rm port}$ of the above form, a fixed
transition-complete acquisition system $M$, uniformly computable query
contexts, and a computable service-work bound $W_T$ with the following
property. Consider any instance $(T,R,z_R,y_A,y_B)$ of the construction
in which $y_A$ begins with the $T$-bit indicator string of $R$, followed by
the bits of $z_R$ in increasing index order. The construction uses one fixed
constant-size A-routing checkpoint that reads this format, answers each
required query by lookup and belongs to $S_A(T,R,z_R;y_A)$.
Suppose the developed acquisition state $a_A$ contains the realised $T$-bit
table of responses under architecture A in the serial representation of
\cref{app:model}, with the entries at the indices in $R$ forming $z_R$.
Then the three ordinary binary acquisition tasks from $a_A$, namely
function-preserving A continuation $\tau_A^{\rm cont}$, a direct B switch
$\tau_B^{\rm bare}$, and a function-preserving B switch
$\tau_B^{\rm pres}$, satisfy
\[
w_{M,a_A}^{\tau_A^{\rm cont},1}(0)=O(1),\qquad
w_{M,a_A}^{\tau_B^{\rm bare},1}(0)=O(1),
\]
and, for every finite input bound $\lambda$,
\[
w_{M,a_A}^{\tau_B^{\rm pres},1}(\lambda)
\ge \C(z_R\given T,R,y_B)-O(\log(T+2)).
\]

A fixed migration routine also achieves $\tau_B^{\rm pres}$ in $O(T)$ work
at input bound zero.  Consequently, when
$R=\{1,\ldots,T\}$ and
\[
\C(z_R\given T,R,y_B)\ge T-O(1),
\]
function-preserving continuation under A has $O(1)$ acquisition work whereas
function-preserving switching to B has $\Theta(T)$ acquisition work.  The
direct B switch takes $O(1)$ work. The preserving-switch lower bound and
copying upper bound do not require the assumption about $y_A$.
From the decision state before development, adopting A or B takes the same
$O(1)$ work. If $y_B$ contains the same required indicator string and response
bits in the external-routing format, a fixed constant-size B-routing
checkpoint belongs to $S_B(T,R,z_R;y_B)$.
\end{theorem}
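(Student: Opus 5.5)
The proof is constructive. I first fix $g_{\rm port}$, the query contexts and $W_T$. A checkpoint $\code{f,\sigma}$ begins with a one-symbol architecture tag and a one-symbol mode flag, so reading the mode costs $O(1)$ transitions and never requires scanning external data. The mode is either \emph{routing} or \emph{literal table}.

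The two modes behave as follows.
\begin{itemize}
\item In routing mode, on query context $x_i(T,R,y)=\code{i,T,y}$, the system scans the indicator prefix of $y$ to position $i$ and counts the ones before it to obtain $j$. It then scans to the $j$-th response bit after the indicator block and emits it.
\item In literal mode, the system reads the $j$-th bit of $\sigma$ in the same way.
\end{itemize}
Both modes take $O(T)$ transitions, so I set $W_T=c_0(T+1)+c_0\ell(T)$ for a suitable constant $c_0$. Architecture B uses the same two modes under the tag $B$. The assumed formats of $y_A$, and optionally of $y_B$, then put the fixed constant-size routing checkpoints in $S_A(T,R,z_R;y_A)$ and in $S_B(T,R,z_R;y_B)$ respectively. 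This settles the final sentence of the theorem.

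For the acquisition system $M$, the acquisition tasks are ordinary binary tasks with accepted sets $S_A(T,R,z_R;y_A)$, $\{\code{B,\sigma}\in\mathcal R_{g_{\rm port}}\}$ and $S_B(T,R,z_R;y_B)$. With only the empty input admitted at bound zero, $M$ reads a task tag from the acquisition context.
\begin{itemize}
\item On the continuation or bare tag, $M$ writes the fixed constant-size A-routing checkpoint or a fixed B checkpoint in $O(1)$ transitions. This gives the two $O(1)$ upper bounds, and these are least-work bounds by \cref{app:acquisition-upper-bound}.
\item On the preserving tag, $M$ copies the realised $T$-bit table from $a_A$ into a literal B checkpoint, keeping the entries indexed by $R$ (which it obtains from the context). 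Under the serial conventions of \cref{app:model} this costs $O(T+1)$. The result lies in $S_B$ because the entries at the indices in $R$ form $z_R$. This gives the $O(T)$ upper bound, and it uses nothing about $y_A$.
\end{itemize}

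For the lower bound, I take any successful run of $M$ from $a_A$ at any input bound $\lambda$. Its returned checkpoint $r_B$ lies in $S_B(T,R,z_R;y_B)$. Since $M$ is transition-complete (\cref{app:transition-complete}), its work is at least $|r_B|$. \Cref{app:portability-representation} then gives $|r_B|\ge\C(z_R\given T,R,y_B)-c\log(T+2)-c$, and minimising over runs yields the displayed bound. Again nothing about $y_A$ is used.

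For the consequence, I set $R=\{1,\ldots,T\}$ and assume $\C\ge T-O(1)$. The lower bound becomes $T-O(\log T)=\Omega(T)$, which matches the copying bound and gives $\Theta(T)$. Adopting A or B from the pre-development decision state means emitting a fixed constant-size checkpoint of either architecture with no table yet required, so both cost $O(1)$.

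I expect the main obstacle to be the bookkeeping that makes the A-routing checkpoint genuinely constant-size and within $W_T$. The index $i$ and the offset $j$ must be recovered inside the work bound from the context and $y_A$ alone, without storing $T$ or $R$ in the checkpoint. My plan for this is to put $T$ in the query context and use a counter whose size is charged against the $O(T)$ scan, accepting the logarithmic overhead already included in $W_T$.
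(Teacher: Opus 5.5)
Your proposal is correct and follows the paper's proof essentially step for step. The shared steps are these:
\begin{itemize}
\item lookup through the indicator string in both routing and literal modes, with the query index and $T$ read first;
\item a tag-selected transition-complete $M$ whose continuation and bare branches emit fixed checkpoints;
\item the preserving-switch lower bound obtained from \cref{app:portability-representation} together with transition-completeness;
\item the $O(T)$ copying branch for the upper bound.
\end{itemize}

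The paper's bookkeeping removes the obstacle you flagged and ties off a few loose ends. It takes the generous bound $W_T=K(T+1)^3$, so no amortised-counter argument for your linear bound is needed. It places length metadata for unread fields after them, so trailing data in $y$ is never scanned. It adds a general-program mode so that compressed checkpoints are admissible, which the statement as written does not require. It also pads the two initial adoption paths so that adopting A or B costs literally the same $O(1)$ work.
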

\begin{proof}
Implement $g_{\rm port}$ with a fixed interpreter, a program that reads the checkpoint and executes its instructions. Give checkpoints decidable syntax and
external-routing, literal-table and general-program modes in both
architectures $f=A$ and $f=B$. A literal checkpoint stores $T$, a string of $T$ bits whose $i$th bit is $1$ exactly when $i\in R$, and the $m$ required response bits in increasing order of their indices. Its query routine reads the requested index and scans the string identifying $R$, advancing through the stored responses whenever it encounters a $1$, until it reaches the requested response. Use input encodings in which
the service input starts with $(T,i)$, followed by the $(R,y)$
fields, with length metadata for unused suffix fields placed after them.
The literal lookup reads the stored table without inspecting $y$. For the external-routing instance,
$y_A$ begins with the same string identifying $R$, followed by the required response bits, so that routine performs
the corresponding lookup there. The routine leaves other trailing external data unread. A fixed multitape implementation can parse these $O(T+1)$ relevant
symbols and perform the scans with binary counters within
$W_T=K(T+1)^3$ transitions for a sufficiently large fixed computable
integer $K$. Choose this bound once for both routines. No such guarantee
is required for arbitrary program-mode checkpoints.

Begin each acquisition task's public context with a fixed-length tag selecting continuation, direct switching or preserving switching, followed by the relevant $(T,R,y_A,y_B)$ fields. Their accepted sets are respectively
$S_A(T,R,z_R;y_A)$, all valid B checkpoints, and $S_B(T,R,z_R;y_B)$.
On the continuation tag, $M$ emits the fixed A-routing checkpoint,
which succeeds with $y_A$ by hypothesis. On the direct-switch tag it emits
a fixed constant-size empty B checkpoint. Neither branch needs to scan
the unused context suffix, so both use bounded work and an empty task
input.

For preserving switching, every successful run emits some
$e\in S_B(T,R,z_R;y_B)$. By
\cref{app:portability-representation},
\[
|e|\ge \C(z_R\given T,R,y_B)-O(\log(T+2)).
\]
Transition-completeness gives $|e|\le\Work_M$ for every successful run,
so the same lower bound holds for least work at every finite input
bound.

For the upper bound, the developed state $a_A$ stores the
realised table under architecture A in serial layout. A fixed migration branch scans
that table and the $T$-bit string identifying $R$, emits that string with the B literal-mode header, and copies the selected response bits in order. These scans and output take
$O(T)$ work at input bound zero. The lookup of the resulting checkpoint returns
every $z_j$ at $i_j$ within the chosen $W_T$, independently of $y_B$.
The checkpoint therefore belongs to $S_B(T,R,z_R;y_B)$, so the migration succeeds. The incompressible
case where every response is required combines this upper bound with the representation
lower bound.

For the initial-choice comparison, add two symmetric initial public tags
at the undeveloped decision state. They emit constant-size A- and B-routing
checkpoints with their paths padded to the same fixed cost. The subsequent
construction is unchanged by exchanging the branch names. These initial
choices precede the development history, whose work belongs to acquisition.
\end{proof}

The least-work values of $\tau_B^{\rm bare}$ and $\tau_B^{\rm pres}$ are $w_{\rm direct}$ and $w_{\rm preserve}$ of \cref{sec:switching}. When continuation is the chosen baseline, $w_{\rm comparator}$ is the least work for $\tau_A^{\rm cont}$. The theorem compares alternatives after the history under architecture A. Continuing
under A and switching to B are different acquisition tasks, not two
states in a matched WRI comparison. The initial choice can have equal
cost on both sides. Under the same later inputs, either choice can lead
to analogous functions that later work depends on. After development under A, preservation is costly
when less required information remains accessible under B. An adapter
that preserves external access can make the representation bound small.
Keeping that resource available still incurs horizon costs
\textup{[\cref{app:horizon-account}]}.

\subsubsection{Short descriptions under query work limits}

The preceding bound uses description complexity without limiting the work
needed to reconstruct the output. As discussed in \cref{sec:switching},
reconstructing the responses from a short description can take too much
work to meet a query limit. The construction below uses the same distinction as classical time-bounded
incompressibility results \cite{daylight2009,livitanyi2019} and imposes the
limit on each required response. With $T$ one-bit queries and empty external
state, B checkpoints shorter than $\lfloor T/2\rfloor$ bits are fewer than the $2^T$
possible response tables, so some table is not returned by any of them within the
per-query work limit. A fixed program finds the first such table from $T$ by finite
search, so the table has a description of constant length given $T$.

\begin{theorem}[Short descriptions with costly function-preserving replacement]
\label{app:budget-preservation}
Under the efficient binary encodings of \cref{app:model}, there exist a
fixed transition-complete interpreter with A and B checkpoint
modes, a fixed transition-complete migration system $M$, a computable
polynomial service-work bound $B_T$, and a uniformly computable family
$z_T\in\{0,1\}^T$ such that
\[
\C(z_T\given T)=O(1).
\]
The service contexts give only $T$, the query index and the
external state. Each service query admits only the empty task input and starts
from the same checkpoint, so updates made during one query are not available to
later queries. With external state
$y_A=z_T$, one constant-size A-routing checkpoint answers all $T$ service
queries within $B_T$. With $y_B=\eps$, every valid B checkpoint that answers
those same queries within $B_T$ has length at least $\lfloor T/2\rfloor$.
The B interpreter admits literal tables and arbitrary programs.

From a developed migration state $a_T$ containing $z_T$, there are ordinary
binary acquisition tasks $\tau_A^{\rm cont}$, $\tau_B^{\rm bare}$ and $\tau_B^{\rm pres}$ for A continuation, direct B switching, and
function-preserving B switching such that
\[
w_{M,a_T}^{\tau_A^{\rm cont},1}(0)=O(1),\qquad
w_{M,a_T}^{\tau_B^{\rm bare},1}(0)=O(1),
\]
whereas, for every finite migration input bound $\lambda$,
\[
w_{M,a_T}^{\tau_B^{\rm pres},1}(\lambda)=\Theta(T).
\]
\end{theorem}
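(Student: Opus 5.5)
The plan is a diagonal construction. I fix the interpreter first, then define $z_T$ by a finite search against it, and then reuse the migration scaffolding of \cref{app:portability}.

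\textbf{The interpreter.} I fix a transition-complete interpreter whose valid checkpoints are $\code{f,\sigma}$ with decidable syntax. In mode A there is a constant-size routing checkpoint: on a service context $\code{T,i,y}$ it scans $y$ to position $i$ and emits that bit. This takes $O(T)$ transitions. In mode B, $\sigma$ may be a literal table or an arbitrary program. Each query from a B checkpoint restarts from the same checkpoint, so no state carries across queries. I choose the polynomial bound $B_T=K(T+1)^2$ with $K$ fixed and large enough that:
\begin{itemize}
\item the A-routing lookup fits within $B_T$;
\item a literal B table of $T$ bits answers any index within $B_T$.
\end{itemize}
I fix $B_T$ before defining $z_T$, so that the diagonalisation is against a fixed resource.

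\textbf{Defining $z_T$.} Given $T$, a fixed program enumerates every valid B checkpoint $r$ with $|r|<\lfloor T/2\rfloor$. There are fewer than $2^{T/2}$ of them over a binary alphabet. For each such $r$ and each index $i\le T$, the program simulates the interpreter for at most $B_T$ transitions from $r$ on context $\code{T,i,\eps}$ and records the emitted bit, or a null symbol on failure or timeout. Each checkpoint therefore yields at most one candidate table in $\{0,1\}^T$. Since there are fewer than $2^T$ checkpoints, some table is produced by none of them. Let $z_T$ be the lexicographically first such table. The search is finite because every simulation is cut off at $B_T$ steps. The procedure is one fixed program that takes $T$ as input, so $\C(z_T\given T)=O(1)$, and the family is uniformly computable.

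\textbf{Service claims.} By construction, no valid B checkpoint shorter than $\lfloor T/2\rfloor$ answers all $T$ queries within $B_T$ when $y_B=\eps$. With $y_A=z_T$, the A-routing checkpoint answers every query within $B_T$.

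\textbf{Migration bounds.} I define $M$ with tagged acquisition contexts, as in the proof of \cref{app:portability}.
\begin{itemize}
\item On the continuation tag, $M$ emits the constant A-routing checkpoint, in $O(1)$ work.
\item On the direct-switch tag, $M$ emits a fixed empty B checkpoint, in $O(1)$ work.
\item On the preserving tag, $M$ copies the table stored serially in $a_T$ into a B literal checkpoint. This takes $O(T)$ work at input bound zero, and the result lies in the accepted set because literal lookup fits within $B_T$.
\end{itemize}
For the lower bound, take any successful preserving run at any finite input bound. It emits a valid B checkpoint that answers all queries within $B_T$, so by the previous step its length is at least $\lfloor T/2\rfloor$. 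Transition-completeness gives $|e|\le\Work_M$, so the work is $\Omega(T)$, and together with the copying bound this gives $\Theta(T)$.

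\textbf{Main obstacle.} The delicate point is making the diagonal search well defined. The set of valid B checkpoints must be decidable. The restart-per-query semantics must make each checkpoint determine a single table, so that the counting argument applies. The bound $B_T$ must be fixed before $z_T$ is chosen, and the search must use the same interpreter and encoding that define $S_B$. A second concern is that the A-routing checkpoint and the literal lookup must fit within the chosen polynomial $B_T$ under serial scanning. The quadratic slack in $B_T$ should cover any encoding overhead.
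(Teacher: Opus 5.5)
Your proposal is correct and follows essentially the same route as the paper. The paper also fixes the interpreter and a polynomial $B_T$ covering literal-table and A-routing queries first. It then defines $z_T$ as the first missing table by diagonalising over the fewer than $2^{\lfloor T/2\rfloor}$ short B checkpoints, with each query's simulation cut off at $B_T$. Finally it obtains the migration bounds from tagged branches of $M$ together with transition-completeness ($|e|\le\Work_M$).

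The only difference is that you commit to $B_T=K(T+1)^2$. That choice is inessential and depends on the serial parsing and counting costs of your implementation. Any computable polynomial fixed before $z_T$ is chosen works equally well, which is what the paper uses.
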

\begin{proof}
Fix a serial interpreter with decidable checkpoint syntax whose
valid B checkpoints include a literal-table mode and a general program mode. On query $i$ with size $T$,
the literal mode reads its stored table and returns its $i$th bit. Program
mode executes its stored program on $(T,i)$. An A-routing mode reads the
corresponding bit of external state. All transitions are counted. Choose a
computable polynomial $B_T$ bounding the work of every literal-table and A-routing
query under these fixed encodings. This choice is made before selecting
$z_T$.

Put $k_T=\lfloor T/2\rfloor$. Enumerate every valid B checkpoint whose
\emph{complete encoded length} is less than $k_T$. There are at most
$2^{k_T}-1$ such binary strings. For each, simulate all $T$ queries for at
most $B_T$ transitions per query. Keep the resulting $T$-bit response table
only when every query halts within the limit with a single-bit response.
Each deterministic checkpoint contributes at most one table. Since
$2^{k_T}-1<2^T$, at least one $T$-bit table is absent. Define $z_T$ as the
lexicographically first absent table.

The finite checkpoint enumeration, bounded simulations, and missing-table
selection are uniformly computable from $T$. One fixed program on $U_0$
therefore produces $z_T$, proving $\C(z_T\given T)=O(1)$. By construction,
no B checkpoint shorter than $k_T$ gives all its responses within $B_T$.
The bound covers short generators and compatibility programs because
program mode includes them.

Let $M$ receive a public tag selecting the acquisition task. From $a_T$,
its continuation branch emits the fixed A-routing checkpoint, and its
direct-switch branch emits a fixed empty B checkpoint. Both use constant work
and an empty task input. Its preserving branch can scan the table in $a_T$
and emit a literal B checkpoint in $O(T)$ work under the serial encoding
convention. That checkpoint meets every service bound by the choice of $B_T$.
Any successful migration that preserves the required service, irrespective of the task input, must
emit a B checkpoint of length at least $k_T$. Transition completeness makes
its work at least its output length. This gives the uniform $\Omega(T)$
lower bound. The empty-input copy branch gives the upper bound for every
$\lambda$.
\end{proof}

The B checkpoint that runs the short generator of $z_T$ has constant length, so
once $T$ is large it needs more than $B_T$ work on some query. The work of initially
computing $z_T$ belongs to acquisition. Migration begins with that table
already stored. Continued external access and a sufficiently large
replacement checkpoint meet the original service limit. With enough
additional work per query, the generator becomes usable too. The work limits and the resources that survive the switch determine which alternatives are feasible, while
\cref{app:horizon-account} compares their complete costs over a finite horizon.

\subsection{Comparing organisational arrangements}
\label{app:cluster-coordination}

The reporting example works through the organisational comparisons of \cref{sec:together}. We then bound the trials needed to assign workers with different capabilities to suitable roles.

\begin{example}[Establishing reporting roles]
\label{app:org-separation}
Two components run the same controller. On each task one receives a bit $u$ and the
other a bit $v$ across the common outer boundary, and only the empty task input is
admitted. Either component may report, and a response is accepted when it contains at
least one report and every report equals $u\oplus v$. The controller cannot use
component names or identifiers. Each component has a persistent role field, initially
unset, that can hold sender or reporter. In each communication round both components
emit their messages before either receives them. Delivery is reliable and a component
waiting for a message performs no operations. The next task begins after both have
finished. Since the schedule treats the components alike, deterministic steps on equal
task bits keep identical states identical, so complementary roles can be chosen only
from different task bits or from random choices \cite{angluin1980,itairodeh1990}.

Each executed instruction counts as one unit of work, and sending, delivering and
receiving a bit are separate operations. On every task each component reads its bit,
tests its role field, dispatches and finally finishes, which gives eight common
operations.
\begin{itemize}
\item \emph{Immediate exchange.} Without roles, both components send their bits,
compute the parity and report, in $18$ operations.
\item \emph{Retained roles.} The sender transmits its bit, and the reporter computes and
reports the parity, in $13$ operations. The saving of five consists of one send,
delivery and receipt, one parity computation and one report.
\item \emph{Roles from different bits.} Both components send their bits and compare
them. Equal bits leave the roles unset, and both report $0$, in $18$ operations. When
the bits differ, the component holding $0$ writes sender and the component holding $1$
writes reporter. Both dispatch on the new role and only the reporter reports $1$, in
$21$ operations.
\item \emph{Random proposals.} Each component draws a fair private bit, proposes sender
or reporter accordingly and sends the proposal with its task bit. Equal proposals leave
the roles unset, and both report the parity. Different proposals are stored, and only
the proposed reporter reports. Compared with immediate exchange, an attempt adds two
draws, the send, delivery and receipt of two proposal bits and two comparisons. A
successful attempt also writes the two roles, and dispatching on them costs as much as
the second parity computation and report that it avoids. An attempt therefore takes
$28$ operations when the proposals are equal and $30$ when they differ.
\end{itemize}

At the start of the horizon each component reads a common two-bit instruction that
selects the plan. It is an external input other than the task input and adds two
operations. All plans start from unset roles with the same storage, and their external
inputs have the same length apart from the random bits drawn under random proposals.
Over $H$ tasks, immediate exchange uses $2+18H$ operations. If the bits first differ on
task $k$, establishing roles from them uses $13H+5k+5$, a saving of $5(H-k)-3$, and if
they never differ it uses $2+18H$. Under random proposals, task $t$ begins without roles
with probability $2^{-(t-1)}$, so the expected total work is
\[
2+13H+16\sum_{t=1}^H2^{-(t-1)}=2+13H+32\bigl(1-2^{-H}\bigr).
\]
The expected saving $5H-32(1-2^{-H})$ is negative for $H\le6$ and positive for $H\ge7$.
If the proposal operations and role writes are counted as acquisition work, each task
that begins with roles has stage saving $5$ in the notation of \cref{sec:horizon}, and
each attempt adds ten operations of acquisition work, with two more when it succeeds.
The total varies between histories. It is $13H+15j+4$ when roles are first established
on task $j$ and $2+28H$ when no attempt succeeds, so a budget met on every history must
be at least $28H+4$ (\cref{app:horizon-account}).

For the organisational family of \cref{app:orgfamily}, fix immediate exchange as the procedure used without
roles. The states with complementary and with unset roles are then matched states
differing only in their role fields. Their least work at input bound zero is $13$ and
$18$ on every task, so $G_{\rm org}=5$ and the retained roles are work-reducing
information (\cref{app:wri}). Budgets from $13$ to $17$ admit the organised state but
not the unorganised one. A pooled routine that reads both bits, computes and reports the
parity and finishes uses $5$ operations, so $T_{\rm coord}=8$. These counts belong to
this instruction table.

Without communication, including shared-state, timing and termination signals, each
component's behaviour depends only on its own bit and its own random draws. A report by
one component then stays the same when only the other bit changes, although the parity
changes. No fixed isolated implementation therefore succeeds on all four input pairs.
\end{example}

A hidden bijection $\psi$ assigns one of $n\ge1$ roles to each of
$n$ workers. A trial asks whether $\psi(i)=j$ for a chosen worker $i$ and role $j$ and
returns a reliable yes or no. A separate worker action performs a requested role. The
public context contains $n$ and a finite description of the function that answers the
trials. This description allows the reference machine to compute the assignment. An
acquisition procedure uses it only through trials, starts without assignment-specific
information and admits only the empty task input. It may keep every answer and use any
deduction from them. For a set $R$ of $m$ required roles, where $0\le m\le n$, consider
deterministic procedures that return the correct worker for every role in $R$ for every
$\psi$. The work rule counts one unit for each trial and each worker action. Control,
inference, encoding and output have zero cost under this rule.

Once these workers have been obtained, the evaluated system reads its stored routing
table and sends a request to the worker assigned to the requested role. It admits only
the empty task input. Let $\Gamma_R$ require a correct action in each role in $R$ within
one unit of work. A routing table meets $\Gamma_R$ exactly when it gives the correct
worker for every role in $R$. Let $\tau_{\rm acq}$ accept those tables, with the trial
description and $R$ as its public context. When $R$ contains every role, a qualifying
table gives the whole assignment.

\begin{proposition}[Discovering which worker can perform each role]
\label{app:division-of-labour}
\begin{enumerate}[label=(\roman*)]
\item A state retaining a table that is correct on $R$ meets every requirement in
$\Gamma_R$ in one worker action. Performing one action in every role of $R$ takes $m$
units of work.
\item Every acquisition procedure in this class uses at least $m(2n-m-1)/2$ trials for
some assignment, which is $n(n-1)/2$ when $m=n$. For any fixed such procedure $(A,a)$,
at that assignment its acquisition depth is at least this number at every margin.
Testing the possible workers for each role of $R$ in turn attains this worst-case trial
count. The reference complexity of a qualifying table is bounded by a constant
independent of $n$, $R$ and $\psi$.
\item If the assignments of the other workers are retained and $k$ workers are replaced
by workers assigned to the $k$ freed roles in an unknown order, identifying the new
assignment has worst-case trial count $k(k-1)/2$.
\end{enumerate}
\end{proposition}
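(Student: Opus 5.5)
Part (i) follows straight from the definitions. A state whose routing table is correct on $R$ answers a request for role $j\in R$ by reading the table and sending one request to $\psi^{-1}(j)$. Only trials and worker actions are counted, so this costs exactly one unit. Since the empty input is the only admitted input, this single successful path gives the least work, and each requirement in $\Gamma_R$ is met. Summing over the $m$ roles of $R$ gives $m$ units.

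For (ii) I would use an adversary argument. The adversary maintains a set of assignments consistent with all answers so far and answers ``no'' to a trial $(i,j)$ whenever possible. I would process the roles of $R$ in the order in which the procedure first pins them down. The key claim is that the $\ell$th role to become determined, say role $j$, needs at least $n-\ell$ ``no'' answers involving $j$ before it is determined.

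To establish that claim, the adversary keeps the candidate workers for each still-undetermined role free. It answers ``no'' until only one candidate worker compatible with a perfect matching remains. At that point it may commit and answer ``yes'', or the procedure deduces the answer. Each determined role removes one worker from the pool, so when $\ell-1$ roles are fixed, the $\ell$th role still has $n-\ell+1$ candidates. Eliminating all but one of them costs $n-\ell$ trials.

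To make the counting clean, I would let the adversary fix a role only when forced, and charge each ``no'' to a single role. The subtle point is that a ``no'' on $(i,j)$ also removes a candidate role for worker $i$. This could let the procedure determine a role through worker-side deductions, via Hall's condition, rather than role-side eliminations. I expect this to be the main obstacle.

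My first attempt at handling it is a simple adversary. It answers ``no'' unless the pair is the last remaining edge at worker $i$ or at role $j$ in the bipartite graph of still-possible pairs, where the remaining graph has a perfect matching. It keeps that graph of the form ``complete bipartite on the unmatched workers and roles''. To do so, it answers ``yes'' only when the queried role has exactly one untested unmatched worker left, and then matches that pair and removes both. With this invariant no role is deduced early. Summing $n-1,n-2,\dots,n-m$ gives $m(2n-m-1)/2$, which is $n(n-1)/2$ when $m=n$. Roles outside $R$ only weaken the adversary's obligations, because they never need to be determined.

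Because the only admitted acquisition input is empty, every successful run at any input bound uses at least this many trials at the adversary's assignment. This gives the depth lower bound at every margin via the definition of $d_{A,a}^{\tau_{\rm acq},1}(s)$. For the matching upper bound, testing the roles of $R$ in turn against the workers not yet assigned costs at most $n-\ell$ trials for the $\ell$th role, because the last remaining candidate is deduced without a trial. The constant reference complexity comes from a fixed $U_0$ program that reads $n$, $R$ and the trial description from the context, computes $\psi$, and outputs the table.

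For (iii), the retained assignments reduce the instance to a hidden bijection between the $k$ new workers and the $k$ freed roles. Applying (ii) with $n=m=k$ gives worst-case trial count $k(k-1)/2$, attained by the same sequential procedure.
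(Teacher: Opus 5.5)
Your treatment of (i), the sequential upper bound, the depth consequence, the constant reference-complexity bound and the reduction in (iii) agrees with the paper. The gap is the lower bound in (ii). It sits exactly at the obstacle you flag, and the adversary you propose does not get past it.

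Two things fail. First, the invariant that the graph stays complete bipartite on the unmatched workers and roles is destroyed by the first \emph{no}, which deletes an edge between two unmatched vertices. Second, the key claim that the $\ell$th role to be determined needs $n-\ell$ \emph{no} answers involving it is false. Take $n=m=3$, workers $a,b,c$ and roles $1,2,3$. After \emph{no} on $(c,1)$ and $(c,3)$, worker $c$ is forced to role $2$. So role $2$ is determined first, although no answer has involved it.

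The adversary itself also breaks down. Your role-side rule, which answers \emph{yes} only when the queried role has exactly one untested unmatched worker, answers \emph{no} to the next queries $(a,3)$, $(b,2)$ and $(a,1)$. The degree rule, read on the graph of non-excluded pairs, does the same. Afterwards $a$ and $c$ can both only take role $2$, so no bijection is consistent with the transcript, and the argument yields no assignment at which the procedure is slow. Separately, trials spent first on roles outside $R$ can leave roles of $R$ with fewer than $n-\ell+1$ candidates, so summing only over $R$ is not justified either.

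The missing idea is a global count on the final graph instead of a per-role charge along the history. Let the adversary answer \emph{no} exactly when the remaining graph still has a perfect matching after the deletion. Your degree rule becomes this if ``still possible'' means ``lies in some perfect matching''. Every remaining perfect matching is then consistent with all answers, so when the deterministic procedure stops they all give the same worker for each role in $R$. Fix one of them, $M$.

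For any two workers $i,i'$ with $M(i)\in R$ or $M(i')\in R$, one of the pairs $(i,M(i'))$ and $(i',M(i))$ must have been deleted by a \emph{no}. Otherwise exchanging the two workers' roles gives another remaining perfect matching that differs on $R$. A deleted pair determines $\{i,i'\}$, so there are at least $\binom n2-\binom{n-m}2=m(2n-m-1)/2$ such answers at the consistent assignment $M$. This count handles worker-side deductions and roles outside $R$ automatically.

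For $m=n$ you can instead use the classical bound that a bipartite graph with a unique perfect matching has at most $n(n+1)/2$ edges. Its proof repeatedly removes a degree-one vertex, which may be a worker rather than a role, together with its partner. That is the correct static form of your $\sum_\ell(n-\ell)$ count.
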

\begin{proof}
For (i), the table selects the correct worker for any requested role in $R$.

For the lower bound in (ii), consider the bipartite graph of worker--role pairs not yet
excluded, which initially contains all $n^2$ pairs. Answer no whenever deleting the
queried pair leaves a perfect matching, and yes otherwise. A yes answer concerns a pair
in every remaining perfect matching, so every perfect matching of the current graph is
consistent with all answers given. The deterministic procedure returns the same table
for all of them. When a correct procedure finishes, they therefore all give each role in
$R$ the same worker. Fix one of them, $M$, and two workers $i$ and $i'$ at least one of
whose roles under $M$ lies in $R$. If both pairs $(i,M(i'))$ and $(i',M(i))$ remained,
exchanging the roles of $i$ and $i'$ in $M$ would give another remaining perfect
matching with a different worker for a role in $R$. One of the two pairs was therefore
deleted after a trial. A pair $(i,M(i'))$ determines $i$ and $i'$, so different choices
of the two workers require different deleted pairs. Their number is
\[
\binom n2-\binom{n-m}2=\frac{m(2n-m-1)}2,
\]
and $M$ is an assignment consistent with all the answers on which this many trials were
made. For $m=n$ the remaining graph has a unique perfect matching, and the count agrees
with the classical bound that such a bipartite graph has at most $n(n+1)/2$ edges
\cite{lovaszplummer1986}. Each acquisition procedure admits only the empty input, so its
run at $M$ gives its least work at every input bound and hence its depth at every
margin.

For the upper bound, take the roles of $R$ in turn. When the workers for $\ell-1$ of
them are known, at most $n-\ell+1$ workers remain possible for the $\ell$th role, and at
most $n-\ell$ trials identify its worker, since the last possibility can be inferred.
Summing over $\ell\le m$ gives $m(2n-m-1)/2$. A fixed reference program performs this search using the
trial description and $R$ in the context and emits the routing table. This proves the
constant description bound.

For (iii), the retained assignments fix $n-k$ pairs. The new workers and freed roles
form the same identification problem of size $k$ with all $k$ roles required.
\end{proof}

\subsection{Evaluating success}
\label{app:cluster-evaluation}

These results distinguish the work required to reach a score from whether
success under that score meets the intended target (\cref{sec:judging}).

For the proxy evaluator $\widehat\perf$ and intended evaluator $\perf_\tau$ of \cref{sec:judging},
write $\widehat S_{\hat q}$ and $S_q^*$ for their accepted level sets (\cref{app:accepted-levels})
at a proxy level $\hat q$ and a goal level $q$, which need not agree.
Each comparison fixes the system, state and public context unless stated otherwise.

The inclusion $\widehat S_{\hat q}\subseteq S_q^*$ discussed in \cref{sec:judging} has a direct work
consequence. At a fixed input bound, restricting the accepted responses
cannot reduce the required work. Depth sets that input bound from the shortest
reference description of an accepted response, so changing the accepted
set can also change the input bound.

\begin{proposition}[Bounds for nested accepted sets]
\label{app:transfers}
Fix a system and state, a common public
context $x$, a proxy level $\hat q$, a goal level $q$ and $\kappa\in\mathbb N$, and suppose $\widehat S_{\hat q}$ and $S_q^*$ are nonempty. As in \cref{app:accepted-set}, $\C(S\given x)$ is the length of a shortest reference description of a response in $S$, and $d^S$ is the depth of the binary task $(x,S)$.
If $\widehat S_{\hat q}\subseteq S_q^*$ and
\[
\C(\widehat S_{\hat q}\given x)-\C(S_q^*\given x)\le\kappa,
\]
then, for every $s\in\mathbb N$,
\[
d^{S_q^*}(s+\kappa)\le d^{\widehat S_{\hat q}}(s),
\]
so a depth lower bound for $S_q^*$ at margin $s+\kappa$ transfers to $\widehat S_{\hat q}$ at margin $s$.
For every input bound $\lambda$, inclusion alone gives
$w^{\widehat S_{\hat q}}(\lambda)\ge w^{S_q^*}(\lambda)$.
\end{proposition}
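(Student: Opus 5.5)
The plan is to prove the inclusion statement first and then obtain the depth bound from it by aligning input bounds. The argument uses only the definitions of least work (\cref{app:attainable}), depth (\cref{app:depth}) and the accepted-set shorthand (\cref{app:accepted-set}), together with the monotonicity of least work in the input bound noted after \cref{app:attainable}.

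\emph{Step 1 (inclusion).} Fix any input bound $\lambda$. Every admitted input $p$ with $|p|\le\lambda$ whose run returns a response in $\widehat S_{\hat q}$ also returns a response in $S_q^*$, because $\widehat S_{\hat q}\subseteq S_q^*$. So the finite set of successful work values defining $w^{\widehat S_{\hat q}}(\lambda)$ is contained in the set defining $w^{S_q^*}(\lambda)$. The minimum over the larger set is no larger, which gives $w^{\widehat S_{\hat q}}(\lambda)\ge w^{S_q^*}(\lambda)$. The convention $\min\emptyset=\infty$ covers the case where the proxy set has no qualifying run.

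\emph{Step 2 (aligning input bounds).} Write $\widehat C=\C(\widehat S_{\hat q}\given x)$ and $C^*=\C(S_q^*\given x)$. Both are finite because the two sets are nonempty. By definition, $d^{\widehat S_{\hat q}}(s)=w^{\widehat S_{\hat q}}(\widehat C+s)$ and $d^{S_q^*}(s+\kappa)=w^{S_q^*}(C^*+s+\kappa)$. The hypothesis $\widehat C-C^*\le\kappa$ gives $\widehat C+s\le C^*+s+\kappa$. Least work is nonincreasing in the input bound, so
\[
w^{S_q^*}(C^*+s+\kappa)\le w^{S_q^*}(\widehat C+s)\le w^{\widehat S_{\hat q}}(\widehat C+s),
\]
where the second inequality is Step 1 at $\lambda=\widehat C+s$. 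This chain is exactly the claimed depth bound. The transfer of lower bounds is then the contrapositive reading: if $d^{S_q^*}(s+\kappa)>b$, then $d^{\widehat S_{\hat q}}(s)>b$.

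\emph{Expected obstacle.} I do not expect a real obstacle. The only point needing care is that depth evaluates the two accepted sets at different input bounds. A naive comparison at equal margins would fail, and the margin shift by $\kappa$ together with monotonicity in $\lambda$ is what makes the bounds match. I would also note in passing that inclusion forces $\widehat C\ge C^*$, since a shortest program for the smaller set also outputs an element of the larger set. So $\kappa\ge0$ is consistent, but the proof above does not need this fact. No assumptions about effective exhaustibility or transition-completeness are required.
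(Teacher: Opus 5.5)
Your proof is correct and follows the paper's argument: the inclusion $\widehat S_{\hat q}\subseteq S_q^*$ gives $w^{\widehat S_{\hat q}}(\lambda)\ge w^{S_q^*}(\lambda)$ at every input bound. The depth bound then follows by combining $\C(\widehat S_{\hat q}\given x)+s\le\C(S_q^*\given x)+s+\kappa$ with the fact that least work is nonincreasing in the input bound, which yields the same chain of inequalities as the paper.
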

\begin{proof}
Target inclusion gives
$w^{\widehat S_{\hat q}}(\lambda)\ge w^{S_q^*}(\lambda)$ for every input bound.
Write $\widehat C=\C(\widehat S_{\hat q}\given x)$ and
$C_*=\C(S_q^*\given x)$.  Since
$\widehat C+s\le C_*+s+\kappa$ and least work is nonincreasing in
input bound,
\[
\begin{aligned}
d^{\widehat S_{\hat q}}(s)
&=w^{\widehat S_{\hat q}}(\widehat C+s)\\
&\ge w^{S_q^*}(\widehat C+s)\\
&\ge w^{S_q^*}(C_*+s+\kappa)=d^{S_q^*}(s+\kappa).
\end{aligned}
\]
\end{proof}

Since $\C(S_q^*\given x)\ge0$, the length of any known reference description
of a member of $\widehat S_{\hat q}$ is an admissible $\kappa$.

\Cref{app:transfers} relates success under two evaluators. We now ask whether successive levels of one evaluator can require increasing work. For a fixed system, starting state and margin, the next construction accepts at each level only responses that the system cannot return within that level's input and work bounds. The accepted responses nevertheless have short reference descriptions, as in the classical existence of deep strings \cite{bennett1988}. When the system can execute reference programs with a fixed input overhead and the margin covers that overhead, the constructed levels have finite depth, as shown after \cref{app:public}.

\begin{theorem}[A demanding evaluator for one system]
\label{app:demanding}
Fix a system--state pair $(g,r)$ and public context
$x$, and suppose bounded-work runs of $g$ from $r$ are
effectively exhaustible, meaning that an algorithm terminates with a complete list of halting runs within given bounds on input length and work (\cref{app:exhaustible}). Fix $s\in\mathbb N$ and a computable strictly
increasing sequence $T_1<T_2<\cdots$. There is a finite-description graded
task $\tau_{\mathrm{dem}}$ with public context $x$, performance
set $\Nat$ in its usual order, and a constant $K$ such that, for every $n\ge1$,
\[
d_{g,r}^{\tau_{\mathrm{dem}},n}(s)>T_n,
\qquad
\C_{\tau_{\mathrm{dem}}}(n\given x)\le K+\ell(n).
\]
Here $\ell(n)=O(\log(n+2))$ is the length of the self-delimiting code of $n$.
\end{theorem}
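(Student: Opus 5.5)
The plan is a diagonal construction in the style of Bennett's deep strings. The evaluator at level $n$ rejects every response that $(g,r)$ can return within a \emph{computable} input bound $L_n$ and work $T_n$. We then arrange that the reference complexity $\C_{\tau_{\mathrm{dem}}}(n\given x)$ is at most $L_n-s$. There is an apparent circularity here: the depth's input bound $\C_{\tau_{\mathrm{dem}}}(n\given x)+s$ depends on the evaluator being built. I would avoid it by never referring to $\C$ inside the evaluator. Instead I would use the explicit bound $L_n=K+\ell(n)+s$ for a constant $K$ chosen at the end.

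First, for a constant $K$ to be fixed later, let $F_n$ be the finite set of responses returned by halting runs of $g$ from $r$ on context $x$ with admitted input length at most $L_n$ and work at most $T_n$. This set is computable from $n$ by effective exhaustibility (\cref{app:exhaustible}) together with computability of $(T_n)$. Put $G_n=F_1\cup\cdots\cup F_n$, which is increasing in $n$ and finite. Define
\[
\perf_{\mathrm{dem}}(e)=\max\bigl(\{0\}\cup\{\,n\le|e| : e\notin G_n\,\}\bigr).
\]
This is computable, since only finitely many $n$ are checked. Because $G_n$ is increasing, the accepted level set is $S_n=\{e : |e|\ge n,\ e\notin G_n\}$. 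It is nonempty, since $G_n$ is finite while there are infinitely many strings of length at least $n$. The evaluator has a finite description, with $x$, $s$, $K$, the exhaustion algorithm and the algorithm for $T_n$ hard-coded, and its performance set is $\Nat$.

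Second, I would bound depth. Every run within input length $L_n$ and work $T_n$ returns an element of $F_n\subseteq G_n$, so it is not in $S_n$. Hence $w_{g,r}^{\tau_{\mathrm{dem}},n}(L_n)>T_n$. If $\C_{\tau_{\mathrm{dem}}}(n\given x)\le K+\ell(n)$, then $\C_{\tau_{\mathrm{dem}}}(n\given x)+s\le L_n$. Least work is nonincreasing in the input bound, so $d_{g,r}^{\tau_{\mathrm{dem}},n}(s)\ge w(L_n)>T_n$.

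Third, I would establish the complexity bound, which I expect to be the main obstacle because of the self-reference in $K$. A reference program for $U_0$ consists of a fixed routine followed by $\sd n$ and a self-delimiting code for $K$. The routine reads $n$ and $K$, computes $G_n$, and outputs the lexicographically first string of length at least $n$ outside $G_n$. Here $x$ is available as the condition, $s$ is hard-coded, and any dependence on the pair beyond $x$ is fixed in the routine. The program's length is $c_1+\ell(K)+\ell(n)$ for a constant $c_1$ independent of $K$. Since $\ell(K)=O(\log(K+2))$, we can choose $K$ with $c_1+\ell(K)\le K$. This gives $\C_{\tau_{\mathrm{dem}}}(n\given x)\le K+\ell(n)$. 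Hard-coding this specific $K$ into the evaluator is consistent, because the routine's code does not depend on $K$ except through its input field. Combining this with the second step yields both displayed claims.
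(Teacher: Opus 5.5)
Your proof is correct and is essentially the paper's argument. Both use the same diagonalisation against every response obtainable within input length $L_n=K+\ell(n)+s$ and work $T_n$, the same cumulative union of these excluded sets to get monotonicity, and the same choice of $K$ with $c+\ell(K)\le K$ for the fixed part of length $c$. The only real difference is the evaluator: the paper gives level $m$ to exactly one tagged string $y_m=\code{m,z_m}$ chosen outside $B_1\cup\cdots\cup B_m$, whereas you accept at level $n$ every string of length at least $n$ outside $G_n$, which works equally well because $G_n$ is increasing.
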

\begin{proof}
The input bound used for depth depends on the length of a shortest reference description of an accepted response, so we first bound that length by the length of the construction's own description.
Fix a finite construction algorithm with an integer parameter $D$.  If its
fixed part, which includes the listing algorithm, the sequence $(T_n)$ and $s$, has description length $c$, choose $D$ so large that
$c+\ell(D)\le D$, freeze this value and set $K=D$. Such a choice exists by
the logarithmic integer-code convention in \cref{app:model}. Put
\[
L_n=K+\ell(n)+s.
\]
Effective bounded-work exhaustibility makes the finite set
\[
B_n=\{g(r;\code{x,p}): |p|\le L_n,
\text{ the run halts within work }T_n\}
\]
computable.

A response of level $n$ is also accepted at every lower level, so choose $z_n$ as the first string in length-lexicographic order for which
$y_n=\code{n,z_n}$ is outside $B_1\cup\cdots\cup B_n$. The excluded set is
finite, so this effective search terminates. The evaluator assigns level
$m$ exactly to $y_m$, and zero to every other response. It first checks for
a canonical tuple encoding with integer leading field $m\ge1$, returning
zero otherwise. On a valid encoding it computes $y_m$ for the leading field $m$ and
checks equality. Thus it is total, computable and finitely described.
The same fixed construction, with frozen parameter $K$ and the
self-delimiting code of $n$, outputs $y_n$, proving
\[
\C_{\tau_{\mathrm{dem}}}(n\given x)\le K+\ell(n).
\]

Fix $n$.  Every response of performance at least $n$ is some $y_m$ with
$m\ge n$, and each such response was chosen outside $B_n$.  Thus no
task input of length at most $L_n$ reaches level $n$ within work
$T_n$.  The input bound $\C_{\tau_{\mathrm{dem}}}(n\given x)+s$ in the definition of depth is at most
$K+\ell(n)+s=L_n$, hence
$d_{g,r}^{\tau_{\mathrm{dem}},n}(s)>T_n$.
\end{proof}

The preceding construction fixes one system--state pair and one margin. We can extend the construction to an effectively enumerated family of pairs and every margin, using a single evaluator. For each fixed pair and margin, its growing work lower bounds hold from a starting level that depends on that pair and margin.

\begin{proposition}[A demanding evaluator for a family of systems]
\label{app:public}
Fix an effective enumeration $(g_1,r_1),(g_2,r_2),\ldots$ of
finite-description system--state pairs with uniformly
effectively exhaustible bounded-work runs, a public context $x$, and a
computable strictly increasing unbounded schedule $(T_n)$.  There is one
finite-description graded task $\tau_{\rm pub}$ with public context $x$
and performance set $\Nat$ in its usual order, and a constant $K$ such that
for every pair $(g_i,r_i)$, every margin $s$, and every
$n\ge\max\{i,s,1\}$,
\[
d_{g_i,r_i}^{\tau_{\rm pub},n}(s)>T_n,
\qquad
\C_{\tau_{\rm pub}}(n\given x)\le K+\ell(n).
\]
Hence, for every fixed pair, margin and work threshold, every sufficiently
high level has depth above that threshold.
\end{proposition}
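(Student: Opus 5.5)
The plan is to rerun the construction in the proof of \cref{app:demanding} with a single uniform exclusion set that covers, at level $n$, every pair with index $i\le n$ and every margin $s\le n$. The first step is the self-reference choice of the constant. Fix a construction algorithm whose fixed part contains the uniform listing algorithm, the enumeration of pairs and the schedule $(T_n)$, with one integer parameter $D$. As before, if the fixed part has description length $c$, choose $D$ with $c+\ell(D)\le D$ and set $K=D$. Unlike the single-pair case, no margin enters this fixed part, since margins will be handled by the level index. Define $L_n=K+\ell(n)+n$.

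Next define the exclusion set
\[
B_n=\bigcup_{i=1}^{n}\{g_i(r_i;\code{x,p}):|p|\le L_n,\ \text{the run halts within work }T_n\}.
\]
This set is finite and computable from $n$ because bounded-work runs are uniformly effectively exhaustible (\cref{app:exhaustible}). Let $z_n$ be the length-lexicographically first string for which $y_n=\code{n,z_n}\notin B_1\cup\cdots\cup B_n$. The search terminates because the excluded set is finite. The evaluator assigns level $m$ exactly to $y_m$ and level $0$ to all other responses. It is total and computable, as in \cref{app:demanding}, since it parses the leading field $m$, recomputes $y_m$ and checks equality. The same fixed program, given the self-delimiting code of $n$, outputs $y_n$, which gives $\C_{\tau_{\rm pub}}(n\given x)\le K+\ell(n)$.

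To obtain the depth bound, fix $i$, $s$ and $n\ge\max\{i,s,1\}$. Every response with performance at least $n$ is some $y_m$ with $m\ge n$, and $y_m$ was chosen outside $B_n$. Because $i\le n$, the set $B_n$ contains every response of $(g_i,r_i)$ obtained with input length at most $L_n$ within work $T_n$. The depth input bound satisfies $\C_{\tau_{\rm pub}}(n\given x)+s\le K+\ell(n)+n=L_n$, using $s\le n$. Since least work is nonincreasing in the input bound, no admitted input within the depth bound reaches level $n$ within work $T_n$, so $d_{g_i,r_i}^{\tau_{\rm pub},n}(s)>T_n$.

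The final clause follows because $(T_n)$ is unbounded: for any threshold $b$, take $n\ge\max\{i,s,1\}$ with $T_n\ge b$. The main obstacle is the self-referential constant. I need to check that $K$ still bounds the reference description of $y_n$ once the margin is absorbed into $L_n$ through $n$ rather than fixed in advance. This works because the construction reads $n$ only through its self-delimiting code, so the program length stays at most $c+\ell(D)+\ell(n)$, exactly as in \cref{app:demanding}.
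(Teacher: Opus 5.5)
Your proposal is correct and follows the paper's proof essentially step for step. It uses the same margin-free fixed part with the self-referential constant $K$, the same $L_n=K+\ell(n)+n$ and exclusion set $B_n$ over the first $n$ pairs, and the same observation that $i\le n$ and $s\le n$ place every run within the depth input bound inside $B_n$.
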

\begin{proof}
The construction follows the proof of \cref{app:demanding} with two changes. At stage $n$ it considers the first $n$ pairs, and its input bound covers the one used for depth at every margin up to $n$, so each fixed pair and margin is covered from some stage onwards. Choose $K$ as in that proof, for a construction whose fixed part includes the uniform listing algorithm and $(T_n)$ but no margin. Put
$L_n=K+\ell(n)+n$.  Uniform exhaustibility gives a finite computable set
$B_n$ containing every response obtainable by any of the first $n$ pairs
using task inputs of length at most $L_n$ and work at most
$T_n$. Choose $z_n$ as the first string in length-lexicographic order
for which $y_n=\code{n,z_n}$ is outside $B_1\cup\cdots\cup B_n$.
The finite exclusion makes this search terminate.

Define the evaluator from these responses as in the proof of \cref{app:demanding}, assigning level $m$ exactly to $y_m$ and zero otherwise. It is total and finitely described. The same fixed construction outputs $y_n$ from the self-delimiting
code of $n$, giving the stated complexity bound.

Now fix $i,s$ and $n\ge\max\{i,s,1\}$.  As in the previous proof, the input bound in the definition of depth is at
most $K+\ell(n)+s\le L_n$.  Every response of level at least $n$ is some
$y_m$ with $m\ge n$, and every such response was chosen outside $B_n$.
Since $i\le n$, the outputs of pair $i$ within input length $L_n$ and work $T_n$
are contained in $B_n$.  Hence level $n$ cannot be attained within work
$T_n$ at that input bound, proving the depth inequality.  The
unboundedness of $(T_n)$ gives the final claim about sufficiently high levels.
\end{proof}

Under the following interface condition, the depths bounded in \cref{app:demanding,app:public} are finite whenever the margin covers a fixed input overhead. Suppose a pair $(g,r)$ admits, for every reference program $a$, an input $p_a$ of length at most $|a|+c$, for a fixed $c\in\Nat$, and returns $U_0(a,x)$ in finite counted work whenever that reference computation halts. At any level $q$ with nonempty accepted set $S_q^\tau$, choose a shortest successful reference program. If $s\ge c$, its translated input fits the bound $\C_\tau(q\given x)+s$, so $d_{g,r}^{\tau,q}(s)<\infty$. The condition holds with $c=0$ when $g$ runs its task input as a program on $U_0$ and work counts the steps, as in the recovery of logical depth in \cref{sec:machine}. Bounded-work runs are then also effectively exhaustible. This applies to the pair in \cref{app:demanding} when it has this property, and to any such pair in \cref{app:public}. For those pairs and margins, the displayed lower bounds concern finite depths. When $(T_n)$ is unbounded, as in \cref{app:public}, these finite depths tend to infinity with the required level.

These constructions give work lower bounds that grow with the required score. Cumulative deepening, as defined in \cref{sec:deepening}, additionally requires a history in which earlier achievements reduce the work of later ones and the later targets have finite, increasing depths from the original starting condition. \Cref{app:prefix-history} gives such a history for every chosen finite horizon.

\subsection{Recursive improvement}
\label{app:cluster-improving}

We ask whether a state with greater capability also reduces the work needed for further improvement (\cref{sec:improving}). In the example below, each round before the last builds part of a search index that reduces the work of the next round. The proposition that follows bounds the work needed for one fixed program to execute changing improvement procedures stored as code.

\begin{example}[Constructing a search index]
\label{app:search-index-improvement}
A search can organise data in a way that helps later searches. Suppose a catalogue lists every four-bit word once in an arbitrary order and a procedure must find the position of $0000$. It divides the entries by their first bit, divides the group beginning with $0$ by the second bit and continues in this way until $0000$ is isolated, keeping both groups at every division. A later search for $1000$ can start from the group beginning with $1$, which the first search set aside. Building a search structure only as far as queries require is known as deferred data structuring \cite{karp1988deferred}. Database cracking similarly reorganises stored data as a byproduct of query processing \cite{idreos2007cracking}.

Fix $d\ge3$ and $N=2^d$. The public context holds a catalogue listing every $d$-bit word once in an arbitrary order, with each bit stored separately. An entry's identifier is its position. A valid retained state consists of a fixed-width binary prefix tree, a request counter $k\in\{0,\ldots,d\}$ and $d$ answer slots, of which the first $k$ hold the identifiers found for the first $k$ requests. The active leaves of the tree hold linked lists of identifiers, and each lists exactly the identifiers whose words begin with its prefix. An internal node records a division already made, and nodes below a leaf are inactive. The search follows the bits of the requested word through internal nodes. At a leaf it reads the next bit of every listed entry, divides the list into two, keeps both and continues until the word is isolated. Work counts the executed instructions of a direct random-access-machine implementation, one for each read, write, arithmetic operation, register assignment or branch. Dividing a group of $m$ entries takes eight instructions for each entry and nine to prepare and mark the division, $8m+9$ in all.

For a domain task, the same search is run from a retained state $r$ with only the empty task input and returns the identifier of one given word. When every group on the word's path is already divided it takes $8d+5$ instructions, and each missing division adds its own $8m+9$. Let $\mathcal L(r)$ be the set of words that the search locates from $r$ within work $8d+5$, and $Q(r)=|\mathcal L(r)|$. Dividing an active group of two entries isolates two words and raises $Q$ by two, while dividing a larger group isolates none.

The improvement process reads its requests from the public context. From a state with counter $t-1<d$ it searches for the word $1^{t-1}0^{d-t+1}$, writes its identifier in answer slot $t$ and increments the counter, and from counter $d$ it fails. The requests are chosen so that each search after the first begins its divisions in the group that the previous search set aside. Let $r_0$ hold one undivided group and counter $0$, and let round $t$ begin at $r_{t-1}$ and produce $r_t$. The wrapper $g_\Delta$ of \cref{app:producer-relative} emits $\code{r,r',\eps}$ with empty procedure data. Let $\tau^\Delta$ be the binary task that accepts exactly when $r$ and $r'$ are valid, the counter of $r'$ exceeds that of $r$ by one, $r'$ keeps the answers of $r$, $\mathcal L(r)\subseteq\mathcal L(r')$ and $Q(r')=Q(r)+2$. Only the empty task input is admitted. Round $t$ divides the groups with prefixes $1^{t-1},1^{t-1}0,\ldots,1^{t-1}0^{d-t}$, of sizes $N/2^{t-1},\ldots,4,2$. Each round is accepted and gives $Q(r_t)=2t$. Each division reduces the work of the domain tasks whose words lie in the divided group and leaves the others unchanged. Each $r_t$ therefore weakly grasp-dominates $r_{t-1}$ on these tasks at every finite work budget \textup{[\cref{app:wri-to-grasp}]}.

Besides its divisions, a round takes $8d+11$ instructions for following the requested word through the $d$ levels and for bookkeeping. Emitting the producer and the successor as two fixed-width arrays takes $8R$ more, where the $R=3N+d$ words of a state hold its $2N-1$ tree nodes, $N$ list links, $d$ answer slots and counter. Building the undivided list of $r_0$ takes $10N+1$ instructions before the first round. Since the process is deterministic, its least work for the improvement from $r_{t-1}$ is
\[
W_t=w_{g_\Delta,r_{t-1}}^{\tau^\Delta,1}(0)=8R+8d+11+\sum_{i=1}^{d-t+1}\bigl(8\cdot2^i+9\bigr),
\qquad
W_t-W_{t+1}=\frac{8N}{2^{t-1}}+9>0\quad\text{for }1\le t<d.
\]
In the four-bit case the rounds read $30$, $14$, $6$ and $2$ catalogue bits, raise $Q$ to $2$, $4$, $6$ and $8$ and take $735$, $598$, $525$ and $484$ instructions, so the reductions are $137$, $73$ and $41$.

For $1\le t<d$, replace the components that round $t$ changes in $r_{t-1}$, namely its new divisions, the counter and answer slot $t$, by their values in $r_t$. This gives the valid state $r_t$, from which the same process with only the empty task input runs round $t+1$ with work $W_{t+1}\ne W_t$. The observation rule of \cref{app:reuse} for finite constructions records work, so the producing run depends on these components, which is the self-contribution condition of \cref{app:recursive-gain-def}. Rounds $1$ to $d-1$ therefore show positive recursive gain in the sense of \cref{app:recursive-gain-def}, and $r_0\mapsto r_1\mapsto\cdots\mapsto r_{d-1}$ is a finite instance of recursive improvement.

To find the source of each reduction, take $1\le t<d$ and replace the division of the group $1^{t-1}$ in $r_t$, the first made in round $t$, by an undivided list of that group. The counter, the answers and the other fields are unchanged and the nodes below become inactive. The next round must make this division again, so its work rises from $W_{t+1}$ to $W_t$ while it still meets the target relative to the altered producer. The whole reduction thus comes from the index built in round $t$. A comparison of building the index during searches with building it in advance must count the work of both complete plans (\cref{app:horizon-account,app:horizon-retention}).
\end{example}

Recursive improvement is assessed under one fixed outer system. An improvement procedure that changes from one stage to the next can be represented in such a system by storing it as code and executing it with a fixed interpreter. The proposition below adapts the standard universal-machine simulation \cite{arorabarak2009} to bound the interpreter's work when it reproduces a finite improvement history. The interpreter's own work can change which procedure uses less work. Any claimed saving must therefore be checked with both procedures executed by the same interpreter.

Fix in advance a system boundary, staged-input and exogenous-input
interfaces and finite checkpoint and successor encodings. Also fix a deterministic
small-step representation in which each
admissible local process is a finite transition table.  Here staged input is the task input delivered to the procedure at each stage, while
other external inputs use the exogenous-input interface. Transition tables and state encodings have syntax that can be checked in
polynomial time. Admissibility requires a finite represented history and well-formed successor
outputs. At stage $t$, the successor output encodes the next transition table
$F_{t+1}$ and state $r_{t+1}$, so a procedure can replace its own code. Each small step changes only a bounded number of symbols,
apart from literal strings explicitly contained in its transition table.
Count every outer transition as one work unit.

\Needspace{6\baselineskip}
\begin{proposition}[A fixed interpreter for changing improvement procedures]
\label{app:improvement-absorption}
For checkpoints
$c_t=\code{F_t,r_t}$, pairing the transition table $F_t$ of the stage-$t$ procedure with its state $r_t$, there is one finite-description deterministic
transition-complete interpreter $g_{\rm int}$ which reproduces every finite
admissible history
\[
(F_0,r_0)\leadsto\cdots\leadsto(F_T,r_T).
\]
At each stage it simulates $F_t$, emits the same well-formed successor output as $F_t$
and retains $c_{t+1}=\code{F_{t+1},r_{t+1}}$.  Stage work is bounded by one fixed polynomial in the lengths of the
transition table, current state, staged input, exogenous input and emitted
output, and in the number of local transitions.
\end{proposition}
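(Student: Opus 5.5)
The plan is to adapt the standard universal-machine simulation \cite{arorabarak2009} to a self-modifying setting. The one non-standard ingredient is that the simulated code is itself part of the retained checkpoint and may be replaced at stage boundaries.

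First I will fix the interpreter $g_{\rm int}$ as a multitape deterministic machine with these tapes:
\begin{itemize}
\item a checkpoint tape holding $c_t=\code{F_t,r_t}$;
\item a work tape holding the simulated local configuration, consisting of the control state of $F_t$ and the contents and head positions of its tapes, all encoded over $\Sig$;
\item read-only staged-input and exogenous-input tapes, filled by the fixed interfaces;
\item an output tape.
\end{itemize}
At the start of stage $t$ the interpreter checks the syntax of $F_t$ and of the state encoding, which the standing assumptions make polynomial-time. It then initialises the simulated configuration from $r_t$.

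Next, each local transition of $F_t$ is simulated by a fixed macro-step. The interpreter reads the current simulated symbols and control state, scans the table $F_t$ for the matching row, and applies the prescribed update.

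The updates are of two kinds. An ordinary update changes a bounded number of symbols. An update that writes a literal string contained in $F_t$ costs at most $O(|F_t|)$ copying. Moving simulated heads and shifting encoded cells costs at most linear time in the current configuration length.

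That configuration length is bounded by $|r_t|$ plus the input lengths plus the output length plus $|F_t|$ times the number $N_t$ of local transitions. This holds because each transition adds at most a bounded number of symbols, or one literal from $F_t$.

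Hence one macro-step costs a polynomial $P(|F_t|,|r_t|,|p_t|,|\zeta_t|,|e_t|,N_t)$. Summing over the $N_t$ steps multiplies this by $N_t$, which is still polynomial.

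Transition-completeness then says that every outer transition counts one work unit and at most one output symbol is emitted per transition. Both properties are arranged by construction, so this polynomial bounds the stage work. Delivering inputs through the fixed interfaces and reading exogenous symbols only when the simulated table does makes the simulated run see exactly the same inputs, so the emitted successor output coincides with that of $F_t$.

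At a stage boundary the simulated procedure halts with successor output encoding $\code{F_{t+1},r_{t+1}}$. The interpreter copies this output to the checkpoint tape, overwriting the old $F_t$, which costs linear time in the emitted length. It then retains $c_{t+1}$ as its own terminal retained state. The fixed program of $g_{\rm int}$ is independent of every $F_t$, so one finite description serves all admissible histories.

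Reproduction of the whole history then follows by induction on $t$:
\begin{itemize}
\item \emph{Base case.} If $c_t$ is the true checkpoint, the simulation emits the true successor.
\item \emph{Inductive step.} The successor becomes the next true checkpoint.
\end{itemize}
Admissibility, namely well-formed outputs and a finite history, ensures that no stage fails to parse.

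The main obstacle is the polynomial bound when literal strings and growing tapes interact. A naive single-tape simulation gives quadratic overhead in the configuration length, and that length can grow by up to $|F_t|$ per step. I will first check that the bound is stated in terms of $N_t$ and the lengths listed, so quadratic or cubic blow-up is acceptable. If the simulated local process itself has several tapes, I will handle them with the standard track encoding rather than dedicated tapes, because the number of its tapes may vary with $F_t$. Scanning all tracks then costs a further factor of $|F_t|$, which is still polynomial.
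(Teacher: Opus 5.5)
Your proposal is correct and follows essentially the same route as the paper: a fixed universal interpreter decodes and syntax-checks the checkpoint, simulates each local step by scanning $F_t$ and updating the encoded configuration, bounds configuration growth by $m(L_F+1)$ plus the initial-state and input lengths so that per-step cost and hence stage work are polynomial, and concludes by induction over stages. The paper additionally specifies that malformed successor syntax produces a declared failure, so the interpreter is well defined on non-admissible checkpoints too, and your induction would read more cleanly with the base case being simply that the interpreter starts from the true $c_0$.
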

\begin{proof}
Construct $g_{\rm int}$ as the universal interpreter for the fixed
small-step representation, using the efficient transition-table and
configuration encodings of \cref{app:model}. At each stage it decodes the
checkpoint and checks the transition table and state syntactically. It
initialises the simulated configuration and delivers staged and exogenous
inputs only through the original interface at their authorised
steps. Describing a realised input sequence for the comparison does not
make its future values available early to the local process. For each local
transition it scans the finite table, updates the encoded configuration and
copies any emitted symbol. A well-formed local halt emits the same successor
description and retains the decoded next checkpoint. Malformed successor
syntax produces a declared failure.

If the local computation uses $m$ transitions and its transition table
has length $L_F$, configuration length is at most a constant multiple of
the sum of $1+m(L_F+1)$ and the initial-state, staged-input and exogenous-input
lengths. The factor $L_F+1$ covers any explicitly stored table literal
written by one local step. A table scan and encoded update cost at most a
fixed polynomial in these lengths and $m$ per step. Summing over the $m$
steps and adding decoding, syntactic checks, interface handling and output
copying gives one polynomial bound in the stated quantities.
The interpreter itself is fixed.  Induction over stages
then gives equality of the emitted successor and retained checkpoint with the
represented admissible history.
\end{proof}

For the producer-relative evaluation of \cref{app:producer-relative}, let $g_\Delta$ consist of $g_{\rm int}$ and a fixed wrapper. The producer state of a stage is then the checkpoint $c_t$, including the current code $F_t$. The wrapper copies $c_t$ from the actual starting state and emits $\code{c_t,c_{t+1},\sigma_t}$, where $\sigma_t$ is any procedure data in the local output. This copying takes work polynomial in the stated lengths, so the stage work of $g_\Delta$ is bounded by one fixed polynomial in the same quantities.

\end{document}